\providecommand{\customgenericname}{}
\newcommand{\newcustomtheorem}[2]{%
  \newenvironment{#1}[1]
  {%
  \renewcommand\customgenericname{#2}%
  \renewcommand\theinnercustomgeneric{##1}%
  \innercustomgeneric
  }
  {\endinnercustomgeneric}
}
\newcommand\blfootnote[1]{%
\begingroup
\renewcommand\thefootnote{}\footnote{#1}%
\addtocounter{footnote}{-1}%
\endgroup
}
\newtheorem{theorem}{Theorem}
\newtheorem{definition}{Definition}
\newtheorem{proposition}{Proposition}
\newtheorem{corollary}{Corollary}
\newtheorem{lemma}{Lemma}
\newcommand{\cP}{{\mathcal P}}
\newcommand{\cS}{{\mathcal S}} 
\newcommand{\cM}{{\mathcal M}}
\newenvironment{customthm}[1]
  {\innercustomthm}
  {\endinnercustomthm}
\providecommand{\customgenericname}{}
\title{Learning to Shape Rewards using a Game of Two
Partners}
\author{David Mguni$^\dag$, Taher Jafferjee, Jianhong Wang,
Nicolas Perez-Nieves,\\ Tianpei Yang, Matthew Taylor,
Wenbin Song,
\\Feifei Tong,
Hui Chen,
Jiangcheng Zhu, Jun Wang, 
Yaodong Yang$^\dag$ 
% \vspace{2 mm}\\
% \textsuperscript{1}Noah's Ark Laboratory, Huawei\\
% \textsuperscript{2}University of Alberta, Canada\\
% \textsuperscript{3}University College, London,  United Kingdom
}
\title{My Publication Title --- Single Author}
\author {
    Author Name
}
\title{My Publication Title --- Multiple Authors}
\author {
    % Authors
    First Author Name,\textsuperscript{\rm 1}
    Second Author Name, \textsuperscript{\rm 2}
    Third Author Name \textsuperscript{\rm 1}
}
\begin{document}

\maketitle

\begin{abstract}
    Reward shaping (RS)\blfootnote{$^\dag$Corresponding authors <david.mguni@hotmail.com>\\<yaoodong.yang@pku.edu.cn>.} is a powerful method in reinforcement learning (RL) for overcoming the problem of sparse or uninformative rewards. However, RS typically relies on manually engineered shaping-reward functions whose construction is time-consuming and error-prone. It also requires domain knowledge which runs contrary to the goal of autonomous learning. We introduce Reinforcement Learning Optimising Shaping Algorithm (ROSA), an automated reward shaping framework in which the shaping-reward function is constructed in a Markov game between two agents. A reward-shaping agent ({\fontfamily{cmss}\selectfont Shaper}) uses \textit{switching controls} to determine which states to add shaping rewards 
    % and values of shaping rewards that lead to 
    for more efficient learning while the other agent ({\fontfamily{cmss}\selectfont Controller})  learns the optimal policy for the task using these shaped rewards. We prove that ROSA, which adopts existing RL algorithms, learns to construct a shaping-reward function that is beneficial to the task thus ensuring efficient convergence to high performance policies. 
    % Our proof involves proving the existence and convergence of ROSA to a stable point in the policies of two  RL learners with distinct objectives which is technically challenging. 
    % Here, using a form of control known as \textit{switching control}, one agent learns to determine both precisely which states to shape the reward and their optimal magnitudes, thereby learning to discover the optimal reward-shaping  function, and the other agent learns the optimal policy for the environment using the shaped rewards.  We prove theoretically that our framework ROSA, which easily adopts any existing RL algorithms, learns to construct a shaping-reward function that is tailored to the task ensuring convergence to better  performing policies. Our proof involves proving the existence and convergence of ROSA to a stable point in the policies of two  RL learners with distinct objectives which is technically challenging. 
    We demonstrate ROSA's properties in three didactic experiments and show its superior performance against state-of-the-art RS algorithms in challenging sparse reward environments. 
    % \yaodong{be specific on what is the baseline and benchmark}.
    
    % \yaodong{another issue: your title has switching control, which is your method novelty, but this word does not show up in the abstract. i appreciate that the abstract is made easy to read, but this also makes it less surprising and inspiring. add 1-2 sentences of technical details would be great. how the swithc control kicks in, how nicely we can turn it into scsg, and maintain convergence, how nicely the intuition is  showing in exps. xxx} \DM{agreed}
\end{abstract}

% \textcolor{blue}{The impulse Controller is slightly harder for the Player 2 to learn its  optimal policy since it must learn the set of states it should perform an action. The options framework encodes a limitation in the frequency of actions that can be taken by Player 2 however this probably requires aspects of the option to be pre-tuned for each problem to achieve optimal performance}   

% The question of what can be added as a component to the  objective function for Player 2 to induce exploration remains. Some ideas might come from entropy-based exploration ideas.  

% \vspace{-0.3em}

% \begin{align}
% R_1(s,a^1,a^2) &= F_1(s,a^1)+F_2(s,a^2)
% \\R_2(s,a^1,a^2) &= G_1(s,a^1)+G_2(s,a^2)
% \end{align}

% First note that
% \begin{align}
% \Delta R_1:=R_1(s,a^1,a^2) -R_1(s,a'^1,a^2)&=F_1(s,a^1)+F_2(s,a^2)-\left(F_1(s,a'^1)+F_2(s,a^2)\right)
% \\&=F_1(s,a^1)-F_1(s,a'^1)
% \end{align}
% and similarly
% \begin{align}
% \Delta R_2:=R_2(s,a^1,a^2) -R_2(s,a^1,a'^2)&=G_2(s,a^2)-G_2(s,a'^2)
% \end{align}

% Now construct $\phi$ as 
% \begin{align}
% \phi(s,a^1,a^2):=F_1(s,a^1)+G_2(s,a^2)
% \end{align}

% Then 
% \begin{align}
% \phi(s,a^1,a^2)-\phi(s,a'^1,a^2)&=F_1(s,a^1)+G_2(s,a^2)-\left(F_1(s,a'^1)+G_2(s,a^2)\right)
% \\&=F_1(s,a^1)-F_1(s,a'^1)=\Delta R_1
% \end{align}

% and similarly
% \begin{align}
% \phi(s,a^1,a^2)-\phi(s,a^1,a'^2)=G_2(s,a^2)-G_2(s,a'^2)=\Delta R_2
% \end{align}

% hence $\phi$ is a potential function and it is affected by neither $G_1$ nor $F_2$.

\section{Introduction}\label{Section:Introduction}
% \vspace{-0.3em}
% \DM{Notes. policy over policies (Kuhn's theorem). Direct to single Controller game}

% \DM{Some of the legacy text in in the file old.tex}

% \DM{I moved some of the background (the mainly the equations to the bottom so it's there for our reference}
\looseness=-2
% Reinforcement learning (RL) allows autonomous agents to learn complex behaviours \cite{sutton2018reinforcement}. 
Despite the notable success of RL in a variety domains, %\cite{deisenroth2011learning},
% robotics \cite{deisenroth2011learning}, video games \cite{peng2017multiagent}, and board games \cite{silver2018general}, 
enabling  RL algorithms to learn successfully in numerous real-world tasks remains a challenge \cite{wang2021multi}.  % RL has had notable successes in a number of areas such as robotics \cite{deisenroth2011learning}, video games \cite{peng2017multiagent} and board games \cite{silver2018general}. Despite these achievements, enabling RL algorithms to learn successfully in numerous practical tasks remains a challenge. 
A key obstacle to the success of RL algorithms is that sparse reward signals can hinder agent learning \cite{charlesworth2020plangan}. In many settings of interest such as physical tasks and video games, 
% \Taher{Are video games really settings of interest?} 
rich informative signals of the agent's performance are not readily available \cite{hosu2016playing}. For example, in 
% the Cartpole problem \cite{manrique2020parametric}, the agent is required to perform a precise sequence of actions to keep the pole upright while only receiving a penalty if the pole falls. In 
the video game Super Mario \cite{shao2019survey}, the agent must perform sequences of hundreds of actions while receiving no rewards for it to successfully complete its task. 
% \Taher{Is it okay to use this example if we don't test on this environment?} 
% In Montezuma's revenge (Atari), the agent must find a set of distant collectable items and must perform subtasks in a some prespecified order \cite{hosu2016playing}. 
In this setting, the infrequent feedback of the agent's performance leads to RL algorithms requiring large numbers of samples (and high expense) for solving problems \cite{hosu2016playing}. Therefore, there is  need for RL techniques to solve such problems efficiently. 
% This there is a great need for new methods that can solve such problems efficiently. 
% 
% 
% \DM{I'll mention uninformative reward signals [see deep learning of intrinsic reward function papers for insights]}

% \textit{Reward shaping} (RS) 
% Reward shaping (RS)
% \yaodong{cite Andrew's paper here?}
% is a method by which additional reward signals are introduced
% during learning to supplement the rewards from the environment
% \Taher{This is a bomb paragraph}
% Various biological systems have mechanisms that produce rewards for activities that present little or no \textit{direct} biological utility. Such mechanisms, which have been fashioned over millions of years are designed to provide \textit{intrinsic rewards} to promote behaviours that improve chances of outcomes with biological utility \cite{baldassarre2011intrinsic}. 

Reward shaping (RS) is a tool to introduce additional rewards, known as \textit{shaping rewards}, to supplement the environmental reward. These rewards can encourage exploration and insert structural knowledge in the absence of informative environment rewards thereby significantly improving learning outcomes \cite{devlin2011empirical}. RS algorithms often assume hand-crafted and domain-specific shaping functions, constructed by subject matter experts, which runs contrary to the aim of autonomous learning. Moreover, poor choices of shaping rewards can \textit{worsen} the agent's performance \cite{devlin2011theoretical}. To resolve these issues, a useful shaping reward must be obtained \emph{autonomously}. 
% Consequently, the above challenges remain unresolved.
\looseness=-10

% \Taher{I feel this paragraph has some redundancy.I've commented out what I think is redundant.}
% \Taher{Deleted this "Inspired by naturally occurring systems, "}
% Inspired by naturally occurring systems, we %tackle the problem of sparse and uninformative rewards by developing 
We develop a framework that autonomously constructs shaping rewards during learning. 
% \Taher{Deleted thus " online to facilitate learning while the RL agent learns to solve its task"}. 
% 
% This provides a powerful tool in settings in which the environment reward does not provide a useful signal for the agent's learning process.  
ROSA introduces an additional RL agent, the {\fontfamily{cmss}\selectfont Shaper}, that \textit{adaptively} learns to construct shaping rewards by observing {\fontfamily{cmss}\selectfont Controller} 
%(the agent whose goal is to solve the environment task) 
, while  {\fontfamily{cmss}\selectfont Controller} learns to solve its task. This \emph{generates tailored shaping rewards without the need for domain knowledge or manual engineering}. These shaping rewards successfully promote effective learning, addressing this key challenge. 

% The shaping rewards , using this the {\fontfamily{cmss}\selectfont Controller} then learns its policy. 
% \yaodong{shaping-reward function or shaping-reward function, and need to be consistent}
% {\fontfamily{cmss}\selectfont Controller}
% \yaodong{the word {\fontfamily{cmss}\selectfont Controller} not mentioned before, is there any reason we not stick to RL term? also need to be consistent with player 2, where is player 1? sugest to consistently using Player 1 and Player 2 thorughout the whole paper} 
% seeks to solve the task set by the environment while a second agent 
% ({\fontfamily{cmss}\selectfont Shaper}) shapes {\fontfamily{cmss}\selectfont Controller}'s reward . 

The resulting framework is a two-player, nonzero-sum Markov game (MG) \cite{shoham2008multiagent} --- an extension of Markov decision process (MDP) that involves \textit{two} independent learners with distinct objectives.
In our framework the two agents have distinct learning agendas but cooperate to achieve the {\fontfamily{cmss}\selectfont Controller}'s objective. An integral component of ROSA is a novel combination of RL and \textit{switching controls} \cite{mguni2022timing, bayraktar2010one,mguni2018viscosity}. This enables {\fontfamily{cmss}\selectfont Shaper}  to quickly determine useful states to learn to add in shaping rewards (i.e., states where adding shaping rewards improve the {\fontfamily{cmss}\selectfont Controller}'s performance) but  disregard other states. In contrast {\fontfamily{cmss}\selectfont Controller} must learn actions for every state.  This 
% avoids the expensive computation (for the {\fontfamily{cmss}\selectfont Shaper}) of learning the shaping rewards at each state and 
leads to the {\fontfamily{cmss}\selectfont Shaper} quickly finding shaping rewards that guide the {\fontfamily{cmss}\selectfont Controller}'s learning process toward optimal trajectories (and away from suboptimal trajectories, as in Experiment 1).

This approach tackles multiple obstacles.
%For our two-player framework to succeed, we have to overcome several obstacles. 
First, a new agent ({\fontfamily{cmss}\selectfont Shaper}) learns while the {\fontfamily{cmss}\selectfont Controller} is training, while avoiding convergence issues. Second, unlike standard RL, the {\fontfamily{cmss}\selectfont Shaper}'s learning process uses \textit{switching controls}. We show successful empirical results and also prove ROSA converges.
\section{Related Work }

% \yaodong{i think it is too long for related work. as i understand, these intrinsic rewar method, curisority method, reward shaping methods are either empirical driven, or at least very different from our approach. I think i would highlight the difference in terms of advanateges 234 in the above. To me, highlight convergene guarantee and monotmonic improvment that other methods don't have is important. On the contrary, there are many methods that can serve reward shaping, reviewing a comprehensive list of them is not of interest, how do u think. I change the online part as an example}
\looseness=-2

\subsubsection*{Reward Shaping} 
% aims to obtain a reward function that achieves better performance while preserving the underlying MDP and its optimal policy \cite{ng1999policy}. Reward shaping 
Reward Shaping (RS) adds a \textit{shaping function} $F$ to supplement the agent's reward to boost learning. 
% This augments the objective to $
% v^{\pi}(s)=\mathbb{E}\left[\sum_{t=0}^\infty \gamma^t\left\{R(s_t,a_t)+F\right\}\big|a_t\sim\pi(\cdot|s_t)\right]$. 
% 
% PBRS has been shown to preserve the optimality criterion of the RL problem \cite{ng1999policy}.
% Consequently, RS methods focus on PBRS methodologies \cite{devlin2011theoretical,brys2015reinforcement}.  
RS however has some critical limitations. First, RS does not offer a means of finding $F$. 
% Manually engineering $F$ for a task is labour intensive and requires typically domain-specific knowledge. 
Second, poor choices of $F$ can \textit{worsen} the agent's performance \cite{devlin2011theoretical}. Last, adding shaping rewards can change the underlying problem therefore generating policies that are completely irrelevant to the task \cite{mannion2017policy}. In \cite{ng1999policy} it was established that potential-based reward shaping (PBRS) which adds a shaping function of the form $F(s_{t+1},s_{t})=\gamma\phi(s_{t+1})-\phi(s_t)$ preserves the optimal policy of the problem. Recent variants of PBRS include potential-based advice which defines $F$ over the state-action space \cite{harutyunyan2015expressing} and approaches that include time-varying shaping functions \cite{devlin2012dynamic}. Although the last issue can be addressed using potential-based reward shaping (PBRS) \cite{ng1999policy}, the first two issues remain \cite{behboudian2021policy}. 
% In general however, (PB)RS algorithms assume manually engineered shaping-functions which are generally domain-specific, increasing the burden on the designer \cite{devlin2011theoretical}. 
% For instance, in \cite{ng1999policy} 
% % 
% % show that shaping-reward functions of a specific form preserve the optimal policy of the problem. However, 
% the $F$ is  is assumed to be available a-priori.
% \textit{assumed given} and in general \textit{is required to be }. 
% Later variants include important potential-based advice, which defines the potential function over the state-action space \cite{harutyunyan2015expressing}, dynamic PBRS approaches that include a time component in the potential function to allow dynamic PBRS \cite{devlin2012dynamic} and dynamic potential-based advice that converts a given reward function into a potential function \cite{harutyunyan2015off}.
% 
% 
% There have been few attempts at learning $F$.

To avoid manual engineering of $F$, useful shaping rewards must be obtained autonomously. Towards this \cite{zou2019reward} introduce an RS method that adds a shaping-reward function prior which fits a distribution from data obtained over many tasks. Recently, \cite{hu2020learning} use a bilevel technique to learn a scalar coefficient for an already-given shaping-reward function. 
% The method requires a reasonable shaping-reward function to be known in advance. 
Nevertheless, constructing $F$ \textit{while training} can produce convergence issues since the reward function now changes during training \cite{igl2020impact}. Moreover, while $F$ is being learned the reward can be corrupted by inappropriate signals that hinder learning. 

 % and that can be later tuned to fit a specific task . 
% 
% \textcolor{red}{Taking a read into lit. I think our categorisation can improve.}
% 
% 
% \textit{Learning to Utilize shaping rewards: an NEw Approach of Reward Shaping}  
% \\ $\bullet$ Bilevel procedure is slow since updates are not concurrent
% \\$\bullet$ Updates shaping reward for irrelevant areas of the state space (e.g. will update the shaping reward at the goal state) 
\subsubsection*{Curiosity Based Reward Shaping} Curiosity Based Reward Shaping aims to encourage the agent to explore states by rewarding the agent for novel state visitations using exploration heuristics. One approach is to use state visitation counts \cite{ostrovski2017count}. More elaborate approaches such as \cite{burda2018exploration} introduce a measure of \textit{state novelty} using the prediction error of features of the visited states from a random network. Pathak et al. \citeyear{pathak2017curiosity} use the prediction error of the next state from a learned dynamics model and Houthooft et al. \citeyear{houthooft2016vime} maximise the information gain about the agent’s belief of the system dynamics.
% using heuristic metrics to determine how promising state is \cite{ecoffet2019go}. 
% Lastly, \cite{bougie2020skill} introduce a method that uses a prediction of latent representation of skills.
In general, these methods provide no performance guarantees nor do they ensure the optimal policy of the underlying MDP is preserved. Moreover, they naively reward exploration without consideration of the environment reward. This can lead to spurious objectives being maximised (see Experiment 3 in Sec. \ref{Section:Experiments}). 
% \Taher{Can we emphasize this last point? Since we have experimental results to show that our method doesn't have this failing.}

% The intrinsic curiosity technique however has serious drawbacks of being easily caught by the stochastic elements of the environment (the noisy TV problem \yaodong{noisty TV is a specific setting in RND, not a common abbrivation, might need a little exlanation of this term?}) which is subsequently resolved in  
% \cite{burda2018exploration} using exploration by random network distillation (RND). \cite{zheng2018learning} introduces an intrinsic reward  for reward shaping. \cite{liu2019sequence} proposed temporal credit assignment for episodic MDP with only reward at the end of episode. Rudder \cite{arjona2019rudder} is an approach for delayed rewards in finite MDPs that aims to make the expected future rewards zero thus simplifying the Q-value estimation to computing the mean of the immediate reward. 

% Such techniques do not always preserve the underlying MDP. Exploration/exploitation balance is heavily skewed towards exploration. 
\looseness -10
Within these two categories, closest to our work are bilevel approaches for learning the shaping function \cite{hu2020learning,stadie2020learning}. Unlike Hu et al.\citeyear{hu2020learning} whose method requires a useful shaping reward to begin with, ROSA constructs a shaping reward function from scratch leading to a fully autonomous method. Moreover, in Hu et al. \citeyear{hu2020learning} and Stadie et al.\citeyear{stadie2020learning}, the agent's policy and shaping rewards are learned with \textit{consecutive} updates. In contrast, ROSA performs these operations \textit{concurrently} leading to a faster, more efficient procedure.  Also in contrast to Hu et al. \citeyear{hu2020learning} and Stadie et al. \citeyear{stadie2020learning}, ROSA learns shaping rewards only at relevant states, this confers high computational efficiency (see Experiment 2, Sec. \ref{Section:Experiments})).  As we describe, ROSA, which successfully \textit{learns the shaping-reward function} $F$, uses a similar form as PBRS. However in ROSA, $F$ is augmented to include the actions of another RL agent to learn the shaping rewards online. Lastly, unlike curiosity-based methods e.g., \cite{burda2018exploration} and \cite{pathak2017curiosity}, our method preserves the agent's optimal policy for the task (see Experiment 3, Sec. \ref{Section:Experiments}) and introduces intrinsic rewards that promote complex learning behaviour (see Experiment 1, Sec. \ref{Section:Experiments}) .
\section{Preliminaries \& Notations}
The RL problem is typically formalised as a Markov decision process $\left\langle \mathcal{S},\mathcal{A},P,R,\gamma\right\rangle$ where $\mathcal{S}$ is the set of states, $\mathcal{A}$ is the discrete set of actions, $P:\mathcal{S} \times \mathcal{A} \times \mathcal{S} \rightarrow [0, 1]$ is a transition probability function describing the system's dynamics, $R: \mathcal{S} \times \mathcal{A} \rightarrow \mathbb{R}$ is the reward function measuring the agent's performance, and  $\gamma \in (0, 1]$ specifies the degree to which the agent's rewards are discounted \cite{sutton2018reinforcement}. At time $t$ the system is in state $s_{t} \in \mathcal{S}$ and the agent must
choose an action $a_{t} \in \mathcal{A}$ which transitions the system to a new state 
$s_{t+1} \sim P(\cdot|s_{t}, a_{t})$ and produces a reward $R(s_t, a_t)$. A policy $\pi:\mathcal{S} \times \mathcal{A} \rightarrow [0,1]$ is a distribution over state-action pairs where $\pi(a|s)$ is the probability of selecting action $a\in\mathcal{A}$ in state $s\in\mathcal{S}$. The agent's goal is to
find a policy $\pi^\star\in\Pi$ that maximises its expected returns given by: 
% \Taher{Should the $\pi$ in the following be $\hat{\pi}$}
$
v^{\pi}(s)=\mathbb{E}[\sum_{t=0}^\infty \gamma^tR(s_t,a_t)|a_t\sim\pi(\cdot|s_t)]$ where $\Pi$ is the agent's policy set. We denote this MDP by $\mathfrak{M}$.

\looseness=-2
% In settings in which the reward signal is sparse, $R$ is not informative enough for the {\fontfamily{cmss}\selectfont Controller} to efficiently learn an optimal policy. To alleviate this problem 
% \textit{reward shaping} 

% \Taher{Is this entire paragraph neccessary? We have a description of reward shaping prior to this so maybe we can just keep the math?}

\subsubsection*{Two-player Markov games}
A two-player Markov game (MG) is an augmented MDP involving two agent that simultaneously take actions over many rounds \cite{shoham2008multiagent}. In the classical MG framework, each agent's rewards and the system dynamics are now influenced by the actions of \textit{both} agents. Therefore, each agent $i\in\{1,2\}$ has its reward function $R_i:\mathcal{S}\times\mathcal{A}_1\times\mathcal{A}_2\to\mathbb{R}$ and action set $\mathcal{A}_i$ and its goal is to maximise its \textit{own} expected returns. The system dynamics, now influenced by both agents, are described by a transition probability $P:\mathcal{S} \times\mathcal{A}_1\times\mathcal{A}_2 \times \mathcal{S} \rightarrow [0, 1]$. As we discuss in the next section, ROSA induces a specific MG in which the dynamics are influenced by \textit{only} {\fontfamily{cmss}\selectfont Controller}.
$
% v^{\pi}_1(s)=\mathbb{E}[\sum_{t=0}^\infty \gamma^tR(s_t,a_t)\Big|a_t\sim\pi(\cdot|s_t)]$.
% 
% 
%. 
% 
% \DM{TBA: problems with solving this problem}
% 
% 
$
% \subsubsection{Reward Shaping}
% 
% 

% Despite tackling problem of preserving the optimal policy, 

\subsubsection*{Reward shaping}
Reward shaping (RS) seeks to promote more efficient learning by inserting a (state dependent) shaping reward function $F$. Denote  by $\tilde{v}$ the objective function that contains a shaping reward function $F$ and by $\tilde{\pi}\in \tilde{\Pi}$ the corresponding policy i.e., $v^{\tilde{\pi}}(s)=\mathbb{E}[\sum_{t=0}^\infty \gamma^t(R(s_t,a_t)+F(\cdot))|a_t\sim\tilde{\pi}(\cdot|s_t)]$. Let us also denote by $v^{\pi_k}$  the expected return after $k$ learning steps, the goal for RS can be stated as inserting a shaping reward function $F$ for any state $s\in\mathcal{S}$: 

\textbf{C.1.}\hspace{5 mm} $v^{\tilde{\pi}_m}(s)\geq v^{\pi_m}(s)$ for any $m\geq N$,

\textbf{C.2.}\hspace{5 mm} $\underset{\pi\in\Pi}{\arg\max}\;    \tilde{v}^{\pi}(s)\equiv\underset{\pi\in\Pi}{\arg\max}\;    v^\pi(s)$,
 
where $N$ is some finite integer.

 Condition \textbf{C.1} ensures that RS produces a performance improvement (weakly) during the learning process i.e., RS induces more efficient learning and does not degrade performance (note that both value functions measure the expected return from the environment only). Lastly, Condition \textbf{C.2} ensures that RS preserves the optimal policy.\footnote{For sufficiently complex tasks, a key aim of an RS function is to enable the agent to acquire rewards more quickly provided the agent must learn an improvement on its initial policy that is to say $v^{\tilde{\pi}_n}(s)> v^{\pi_n}(s)$  for all $n\geq N$; whenever $\underset{\pi\in\Pi}{\max}\;    v^\pi(s)> v^{\pi_0}(s)$. However such a condition cannot be guaranteed for all RL tasks since it is easy to construct a trivial example in which RS is not required and the condition would not hold.}

Poor choices of $F$ \textit{hinder} learning \cite{devlin2011theoretical} in violation of (ii), and therefore RS methods generally rely on hand-crafted shaping-reward functions that are constructed using domain knowledge (whenever available). 
% This transfers the burden of learning in unknown environments back to the designer. 
    In the absence of a useful shaping-reward function $F$, the challenge is to \textit{learn} a shaping-reward function that leads to more efficient learning while preserving the optimal policy. 
    % Naturally, we can formalise the problem of learning such an $F$ by constructing $F$ as a parametric function of $\boldsymbol{\boldsymbol{\theta}}\in\mathbb{R}^m$: $\hat{F}(s_{t+1},s_{t};\boldsymbol{\boldsymbol{\theta}}) \coloneqq \gamma\hat{\phi}(s_{t+1},\boldsymbol{\boldsymbol{\theta}})-\hat{\phi}(s_t,\boldsymbol{\boldsymbol{\theta}})$. Now the problem is to find $\boldsymbol{\boldsymbol{\theta^\star}}\in\mathbb{R}^m$ for $\phi(s)=\hat{\phi}(s,\boldsymbol{\boldsymbol{\theta^\star}})$ such that $F(s_{t+1},s_{t}) = \hat{F}(s_{t+1},s_{t};\boldsymbol{\boldsymbol{\theta^\star}})$, 
    % i.e., we aim to find $\boldsymbol{\boldsymbol{\theta^\star}}$ 
    The problem therefore can be stated as finding a function $F$ such that (i) - (iii) hold.
Determining this function is a significant challenge; poor choices can hinder the learning process, moreover attempting to learn the shaping-function while learning the RL agent's policy presents convergence issues given the two concurrent learning processes \cite{zinkevich2006cyclic}. Another issue is that using a hyperparameter optimisation procedure to find $F$ directly does not make use of information generated by intermediate  state-action-reward tuples of the RL problem which can help to guide the optimisation. 

\section{Our Framework}
We now describe the problem setting, details of our framework, and how it learns the shaping-reward function. We then describe {\fontfamily{cmss}\selectfont Controller}'s and {\fontfamily{cmss}\selectfont Shaper}'s objectives. We also describe the switching control mechanism used by  the {\fontfamily{cmss}\selectfont Shaper} and the learning process for both agents.   

The {\fontfamily{cmss}\selectfont Shaper}'s goal is to construct shaping rewards
%(which the RL learner {\fontfamily{cmss}\selectfont Controller} 
% cannot generate itself) 
to guide the {\fontfamily{cmss}\selectfont Controller} towards quickly learning $\pi^\star$. 
% Therefore the problem that the {\fontfamily{cmss}\selectfont Shaper} solves is distinct (but complementary) from that tackled by the agents.
% 
% 
% 
% 
% 
% 
To do this, the {\fontfamily{cmss}\selectfont Shaper} learns how to choose the values of a shaping-reward at each state. 
% by choosing a particular $\boldsymbol{\theta}$.
Simultaneously, {\fontfamily{cmss}\selectfont Controller} performs actions to maximise its rewards using its own policy. Crucially, the two agents tackle distinct but complementary set problems. The problem for  {\fontfamily{cmss}\selectfont Controller} is to learn to solve the task by finding its optimal policy, the problem for the {\fontfamily{cmss}\selectfont Shaper} is to learn how to add shaping rewards to aid {\fontfamily{cmss}\selectfont Controller}.  The objective for the {\fontfamily{cmss}\selectfont Controller} is given by: 
\begin{align}
\tilde{v}^{\pi,\pi^2}(s)=\mathbb{E}\left[\sum_{t=0}^\infty \gamma^t\left(R(s_t,a_t) +\hat{F}(a^2_t,a^2_{t-1})\right)\Big|s=s_0\right], \nonumber
% \label{RL_controller_objective}
\end{align}
where $a_t\sim \pi$ is the {\fontfamily{cmss}\selectfont Controller}'s action, $\hat{F}$ is the shaping-reward function which is given by $
    \hat{F}(a^2_t,a^2_{t-1})\equiv a^2_t-\gamma^{-1}a^2_{t-1}$, $a^2_t:$ is chosen by  the {\fontfamily{cmss}\selectfont Shaper} (and $a^2_t\equiv 0, \forall t<0$) using the policy $\pi^2:\mathcal{S}\times\mathcal{A}_2\to[0,1]$ where $\mathcal{A}_2$ is the action set for the {\fontfamily{cmss}\selectfont Shaper}. 
% 
% \begin{align}
% v^{f,\pi^i,\pi^{-i}}_i(s)=\mathbb{E}\left[\sum_{t=0}^\infty \gamma^t\left\{R_i+F(s_t,a^2_t;s_{t-1},a^2_{t-1})\right\}\Big|s=s_0\right], \qquad \forall i \in\{1,\ldots N\},\nonumber
% % \label{RL_controller_objective}
% \end{align}
% \DM{F taking two actions}
Note that the shaping reward is state dependent since the {\fontfamily{cmss}\selectfont Shaper}'s policy is contingent on the state. The 
% function $\phi:\mathcal{S}\times\mathcal{A}_2\to \mathbb{R}$ is a continuous map that satisfies the condition $\phi(s,0)\equiv 0$ for any $s\in\mathcal{S}$ ($\phi$ can be, for example, a neural network with fixed weights with input $(s,a^2)$, 
set
$\mathcal{A}_2$ is a subset of $\mathbb{R}^p$ and can therefore be for example a set of integers $\{1,\ldots,K\}$ for some $K\geq 1$.
% Therefore, the {\fontfamily{cmss}\selectfont Shaper} determines the output of $\hat{F}$ (which it does through its choice of $a^2_t$).
With this, the {\fontfamily{cmss}\selectfont Shaper} constructs a shaping-reward based on the agent's environment interaction, therefore the shaping reward is tailored for the specific setting. The transition probability $P:\mathcal{S}\times\mathcal{A}\times\mathcal{S}\to[0,1]$ takes the state and \textit{only} the {\fontfamily{cmss}\selectfont Controller}'s actions as inputs.
% \footnote{The function $F$ can be easily augmented to include a time component. Note that since {\fontfamily{cmss}\selectfont Shaper} does not affect the transition dynamics, this does not destroy the Markov property.}
Formally, the MG is defined by a tuple $\mathcal{G}=\langle \mathcal{N},\mathcal{S},\mathcal{A},\mathcal{A}_2,P,\hat{R}_1,\hat{R}_2,\gamma\rangle$ where the new elements are $\mathcal{N}=\{1,2\}$ which is the set of agents, $\hat{R}_1:=R+\hat{F}$ is the new {\fontfamily{cmss}\selectfont Controller} reward function which now contains a shaping reward $\hat{F}$, the function $\hat{R}_2:\mathcal{S}\times\mathcal{A}\times\mathcal{A}_2\to\mathbb{R}$ is the one-step reward for the {\fontfamily{cmss}\selectfont Shaper} (we give the details of this function later).

As the {\fontfamily{cmss}\selectfont Controller}'s policy can be learned using any RL method, ROSA easily adopts any existing RL algorithm for the {\fontfamily{cmss}\selectfont Controller}.  Note that unlike reward-shaping methods e.g. \cite{ng1999policy}, our shaping reward function $F$ consists of actions $a^2$ which are chosen by  the {\fontfamily{cmss}\selectfont Shaper} which enables a shaping-reward function to be learned online.  We later prove an policy invariance result (Prop. \ref{invariance_prop}) analogous to that in \cite{ng1999policy} and show ROSA preserves the optimal policy of the agent's underlying MDP.

\subsection{Switching Controls} 
% \vspace{-0.1em}
So far the {\fontfamily{cmss}\selectfont Shaper}'s problem involves learning to construct shaping rewards at \textit{every} state including those that are irrelevant for guiding {\fontfamily{cmss}\selectfont Controller}. To increase the (computational) efficiency of the {\fontfamily{cmss}\selectfont Shaper}'s learning process, we now introduce a form of policies known as \textit{switching controls}. Switching controls enable {\fontfamily{cmss}\selectfont Shaper} to decide at which states to learn the value of shaping rewards it would like to add. Therefore, now {\fontfamily{cmss}\selectfont Shaper} is tasked with learning how to shape {\fontfamily{cmss}\selectfont Controller}'s rewards \emph{only} at states that are important for guiding {\fontfamily{cmss}\selectfont Controller} to its optimal policy. This enables {\fontfamily{cmss}\selectfont Shaper} to quickly determine its optimal policy\footnote{i.e., a policy that maximises its own objective.} $\pi^2$ for only the relevant states unlike {\fontfamily{cmss}\selectfont Controller} whose policy must learned for all states. Now at each state {\fontfamily{cmss}\selectfont Shaper} first makes a \textit{binary decision} to decide to \textit{switch on} its shaping reward $F$ for the {\fontfamily{cmss}\selectfont Controller}.
% affecting a switch $I_t$ which takes values in $\{0,1\}$. 
This leads to an MG in which, unlike classical MGs, the {\fontfamily{cmss}\selectfont Shaper} now uses \textit{switching controls} to perform its actions. 

We now describe how at each state both the decision to activate a shaping reward and their magnitudes are determined. Recall that $a^2_t\sim\pi^2$ determines the shaping reward through $F$. At any $s_t$, the decision to turn on the shaping reward function $F$ is decided by a (categorical) policy $\mathfrak{g}_2:\mathcal{S} \to \{0,1\}$. Therefore, $\mathfrak{g}_2$ determines whether the {\fontfamily{cmss}\selectfont Shaper} policy $\pi^2$ should be used to introduce a shaping reward $F(a^2_t,a^2_{t-1}), a^2_t\sim\pi^2$. 
% We now describe how at each state both the decision to activate a shaping reward and their magnitudes are determined. Recall that $a^2_t\sim\pi^2$ determines the shaping reward through $F$ and now consider an indexed {\fontfamily{cmss}\selectfont Shaper} policy space for which each policy is indexed by $m\in M$. At any state $s_t$, the decision to turn on $I_t$ and shape rewards and which policy $\pi_m$ is decided by a (categorical) policy $\mathfrak{g}_2:\mathcal{S} \to \{0,1\}$. Therefore, $\mathfrak{g}_2$ determines whether a (or no) {\fontfamily{cmss}\selectfont Shaper} policy $\pi^2_m$ should be used to execute an action $a^2_t\sim\pi^2_m$. 
% Therefore, when a switch activates a policy $\pi^2_m$, the policy selects a sequence of actions $a_{\tau_k}^2,a_{\tau_k+1}^2,\ldots,$ that determine additional rewards provided to the {\fontfamily{cmss}\selectfont Controller} through $F$. 
% The role of $\mathfrak{g}_2$ is therefore twofold: first, it makes the decision of whether or not a shaping reward is added at a given state. Secondly, it decides which policy {\fontfamily{cmss}\selectfont Shaper} uses to introduce the shaping rewards through $F$. 
We denote by $\{\tau_k\}$ the times that a switch takes place, for example, if the switch is first turned on at state $s_5$ then turned off at $s_7$, then $\tau_1=5$ and $\tau_2=7$. Recalling the role of $\mathfrak{g}_2$, the switching times obey the expression $\tau_k=\inf\{t>\tau_{k-1}|s_t\in\mathcal{S},\mathfrak{g}_2(s_t)=1\}$ and are therefore
% \footnote{More precisely, $\{\tau_k\}_{k\geq 0}$ are \textit{stopping times}  \cite{oksendal2003stochastic}. }
\textit{ \textbf{rules} that depend on the state.}. The termination times $\{\tau_{2k-1}\}$ occur according to some external (probabilistic) rule i.e., if at state $s_t$ the shaping reward is active, then the shaping reward terminates at state $s_{t+1}$ with probability $p\in ]0,1]$.  Hence, by learning an optimal $\mathfrak{g}_2$, the {\fontfamily{cmss}\selectfont Shaper} learns the best states to activate $F$. 

% ****\Taher{From my understadning of RL, in order to learn an optimal policy, theoretically you need to visit every state and do every action. This Switch ontrol inhibits that from happening for P2 so I'm at least a little credulous that it can learn the optimal shaping reward given that the Switch Control turns it off for some states.}

% ****\DM{easiest case is if we learn only g2 and have fixed p2 policies. otherwise g2 needs to be also a stoch. policy and you'll eventually visit all state-action pairs} 

% \JW{Why does it lead to an NEw MG? Write it out. Meanwhile, the {\fontfamily{cmss}\selectfont Shaper} has two control variables now, i.e., one for yielding the reward shaping signal and the other for switching.}\DM{Updated.} 
% \JW{We should think of whether these two sentences are necessary here. These could make reviewers confused.}\DM{I've updated one of the sentences to a footnote and modified the other so its easier to read}
% 
% \Taher{We have two definitions of the {\fontfamily{cmss}\selectfont Controller} objective. One here and one on the other column... maybe we can compress this? But readability wise this is nice to have them separate though.}
We now describe the new {\fontfamily{cmss}\selectfont Controller} objective. To describe the presence of shaping rewards at times $\{\tau_{2k}\}_{k>0}$ for notational convenience, we introduce a switch $I_t$ for the shaping rewards which takes values $0$ or $1$ and obeys $I_{\tau_{k+1}}=1-I_{\tau_{k}}$ (note that the indices are the times $\{\tau_k\}$ not the time steps $t=0,1,\ldots$) and $I_t\equiv 0, \forall t\leq 0$. With this, the new {\fontfamily{cmss}\selectfont Controller} objective is: 
\begin{align}
\tilde{v}^{\pi,\pi^2}(s_0,I_0)=\mathbb{E}\left[\sum_{t=0}^\infty \gamma^t\left\{R(s_t,a_t)+\hat{F}(a^2_t,a^2_{t-1})I_t\right\}\right].\nonumber
\end{align}

\noindent\textbf{\underline{{Summary of events:}}}\\

At a time $t\in 0,1\ldots$ 
    % $\bullet$\;Both players make an observation of the state $s_t\in\mathcal{S}$.
    
    % $\bullet$\;{\fontfamily{cmss}\selectfont Controller} takes an action $a_t$ sampled from its policy $\pi$.
    
    % $\bullet$ {\fontfamily{cmss}\selectfont Shaper} decides whether or not to activate the shaping reward using $\mathfrak{g}_2:\mathcal{S}  \to \{0\}\times M$: 
    
    % $\bullet$ If $\mathfrak{g}_2(s_t)=0$: 
    
    % \textcolor{white}{X}$\circ$ The switch is not activated ($I_{t}=0$). {\fontfamily{cmss}\selectfont Controller} receives a reward $r\sim R(s_t,a_t)$ and the system transitions to the next state $s_{t+1}$.
        
    % $\bullet$ If $\mathfrak{g}_2(s_t)=m$ for some $m\in  M$:
    
    % \textcolor{white}{X}$\circ$  {\fontfamily{cmss}\selectfont Shaper} takes an action $a_t^2$ sampled from its policy $\pi^2_m$. 
    
    % \textcolor{white}{X}$\circ$ The switch is activated ($I_{t}=1$), Player 1 receives a reward $R(s_t,a_t)+\hat{F}(a^2_t,a^2_{t-1})\times 1$ and the system transitions to the next state $s_{t+1}$.
    \begin{itemize}
    % [leftmargin=*]
        \item Both agents make an observation of the state $s_t\in\mathcal{S}$.
        \item {\fontfamily{cmss}\selectfont Controller} takes an action $a_t$ sampled from its policy $\pi$.
        \item {\fontfamily{cmss}\selectfont Shaper} decides whether or not to activate the shaping reward using $\mathfrak{g}_2:\mathcal{S}  \to \{0,1\}$.
        \item If $\mathfrak{g}_2(s_t)=0$:
        \begin{itemize}
        % [leftmargin=*]
            \item[\textcolor{white}{X}$\circ$] The switch is not activated ($I_{t}=0$). {\fontfamily{cmss}\selectfont Controller} receives a reward $r\sim R(s_t,a_t)$ and the system transitions to the next state $s_{t+1}$.
        \end{itemize}
        \item If $\mathfrak{g}_2(s_t)=1$:
        \begin{itemize}
        % [leftmargin=*]
            \item[\textcolor{white}{X}$\circ$] {\fontfamily{cmss}\selectfont Shaper} takes an action $a_t^2$ sampled from its policy $\pi^2$.
            \item[\textcolor{white}{X}$\circ$] The switch is activated ($I_{t}=1$), {\fontfamily{cmss}\selectfont Controller} receives a reward $R(s_t,a_t)+\hat{F}(a^2_t,a^2_{t-1})\times 1$ and the system transitions to the next state $s_{t+1}$.
        \end{itemize}
    \end{itemize}
We set $\tau_k\equiv 0 \forall k\leq 0$ and  $a^2_k\equiv 0\;\; \forall k\leq 0$ and lastly $a_{\tau_k}^2\equiv 0,\forall k\in\mathbb{N}$ ($a_{\tau_k+1}^2,\ldots, a_{\tau_{k+1}-1}^2$ remain non-zero). The first two conditions ensure the objective is well-defined while the last condition which can be easily ensured, is used in the proof of Prop. \ref{invariance_prop} which guarantees that the optimal policy of the MDP $\mathcal{M}$ is preserved. Lastly, in what follows we use the shorthand $I(t)\equiv I_t$.
% 
% We define by $\hat{R}_1(s_t,I_t,a_t,a^2_t,a^2_{t-1}):=R(s_t,a_t)+\hat{F}(a^2_t,a^2_{t-1})I_t$.
% We use the notation $\boldsymbol{\Pi}\equiv\times_{i=1}^2\Pi^i$ and $I(t)\equiv I_t$.
% 
%  and denote by $\left( \mathcal{V},\|\|\right)$ any finite normed vector space
% 
% \subsection{{\fontfamily{cmss}\selectfont Shaper}'s Objective}
\subsection{The Shaper's Objective} 
% \vspace{-0.1em}
The goal of the {\fontfamily{cmss}\selectfont Shaper} is to guide {\fontfamily{cmss}\selectfont Controller} to efficiently learn to maximise its own objective. The shaping reward $F$ is activated by switches controlled by  the {\fontfamily{cmss}\selectfont Shaper}.
% To encourage {\fontfamily{cmss}\selectfont Controller} to explore and, to supplement the shaping reward signal 
% \footnote{\yaodong{this needs a lot of elaborations, it seems to me that there can have many obj for the {\fontfamily{cmss}\selectfont Shaper}, but the one we choose is not unique}\DM{yes, the objective is a design choice - lots of possibilities}}
% {\fontfamily{cmss}\selectfont Shaper} activates streams of rewards to be supplied to the {\fontfamily{cmss}\selectfont Controller}.
To induce {\fontfamily{cmss}\selectfont Shaper} to selectively choose when to switch on the shaping reward, each switch activation  incurs a fixed cost for the {\fontfamily{cmss}\selectfont Shaper}. 
% The cost has two effects: first it reduces the complexity of the {\fontfamily{cmss}\selectfont Shaper} problem since its decision space is to determine which \textit{subregions} of $\mathcal{S}$ it should activate the shaping rewards (and their magnitudes). Second, it 
This ensures that the gain for the {\fontfamily{cmss}\selectfont Shaper} for encouraging {\fontfamily{cmss}\selectfont Controller} to visit a given set of states is sufficiently high to merit learning optimal shaping reward magnitudes.  Given these remarks the  {\fontfamily{cmss}\selectfont Shaper}'s objective is
% 
% \yaodong{i feel $z_t$ is a redundant notatation and can be saved, it is defined also far away} 
% \begin{align*}\nonumber
% v^{\pi,\pi^2}_2(s_0,I_0)\hspace{-.7 mm}=\hspace{-.3 mm}\mathbb{E}_{\pi,\pi^2}\hspace{-.9 mm}
% &\Bigg[ \sum_{t=0}^\infty\hspace{-.7 mm} \gamma^t\Big(\hat{R}_1\hspace{-.5 mm}+\sum_{k\geq 1}^\infty \hspace{-0.4 mm}\delta^t_{\tau_{2k-1}}\hspace{-1.4 mm}
% \\&+L(s_t)\Big)\hspace{-.7 mm}\Bigg]
% -\mathbb{E}_{\pi'}\left[\sum_{t=0}^\infty \gamma^tR(s_t,a_t)\right].
% % \label{P2_obj}
% \end{align*}
\begin{align}\nonumber
 v^{\pi,\pi^2}_2(s_0,I_0)  = \mathbb{E}_{\pi,\pi^2}\left[ \sum_{t=0}^\infty \gamma^t\left(\hat{R}_1 -\sum_{k\geq 1}^\infty \delta^t_{\tau_{2k-1}}
+L(s_t)\right)\right],& 
%  \\-\mathbb{E}_{\pi^0}\left[\sum_{t=0}^\infty \gamma^tR(s_t,a_t)\right]&,\nonumber
 \end{align}
% \label{P2_obj}
% $
% $\forall (s_0,I_0))\in\mathcal{S}\times\{0,1\}$.
% 
% 
% where $\pi^0$ is the policy played without the {\fontfamily{cmss}\selectfont Shaper}. 
% 
% The difference $\mathbb{E}_{\pi,\pi^2}[\sum_{t=0}^\infty\gamma^t\hat{R}_1]-\mathbb{E}_{\pi^0}[\sum_{t=0}^\infty\gamma^t R]$ encodes
where $\delta^t_{\tau_{2k-1}}$ is the Kronecker-delta function which introduces a cost for each switch, is $1$ whenever $t={\tau_{2k-1}}$ and $0$ otherwise (this restricts the costs to only the points at which the shaping reward is activated). The term $L$ is a {\fontfamily{cmss}\selectfont Shaper} \textit{bonus reward} for when the {\fontfamily{cmss}\selectfont Controller} visits an infrequently visited state and tends to $0$ as the state is revisited.

The objective encodes the {\fontfamily{cmss}\selectfont Shaper}'s agenda, namely to maximise the expected return.\footnote{Note that we can now see that $\hat{R}_2\equiv R(s_t,a_t)+\hat{F}(a^2_t,a^2_{t-1})I_t-\sum_{k\geq 1}^\infty \delta^t_{\tau_{2k-1}}+L(s_t)$.} Therefore, using its shaping rewards, the  {\fontfamily{cmss}\selectfont Shaper} seeks to guide {\fontfamily{cmss}\selectfont Controller} towards optimal trajectories (potentially away from suboptimal trajectories, c.f. Experiment 1) and enable {\fontfamily{cmss}\selectfont Controller} to learn faster (c.f. Cartpole experiment in Sec. \ref{Section:Experiments}). 
With this, the {\fontfamily{cmss}\selectfont Shaper} constructs a shaping-reward function that supports the {\fontfamily{cmss}\selectfont Controller}'s learning which is tailored for the specific setting. This avoids inserting hand-designed exploration heuristics into the {\fontfamily{cmss}\selectfont Controller}'s objective as in curiosity-based methods \cite{burda2018exploration,pathak2017curiosity} and classical reward shaping \cite{ng1999policy}. 
We later prove that with this objective, the {\fontfamily{cmss}\selectfont Shaper}'s optimal policy maximises {\fontfamily{cmss}\selectfont Controller}'s (extrinsic) return (Prop. \ref{invariance_prop}). Additionally, we show that the framework preserves the optimal policy of $\mathfrak{M}$.     

\subsection*{Discussion on Shaper Bonus Term $L$}
For this there are various possibilities e.g. model prediction error \cite{stadie2015incentivizing}, count-based exploration bonus \cite{strehl2008analysis}. We later show our method performs well regardless of the choice of bonus rewards and outperforms RL methods in which these bonuses are added to the agent's objective directly (see Sec. \ref{sec:ablation_studies}).
\subsection*{Discussion on Computational Aspect}
The switching control mechanism results in a framework in which the problem facing the {\fontfamily{cmss}\selectfont Shaper} has a markedly reduced decision space in comparison to the {\fontfamily{cmss}\selectfont Controller}'s problem  (though both share the same experiences). Crucially, the {\fontfamily{cmss}\selectfont Shaper} must compute optimal shaping rewards at only a subset of states which are chosen by $\mathfrak{g}_2$. Moreover, the decision space for the switching policy $\mathfrak{g}_2$ is $\mathcal{S}\times\{0,1\}$ i.e at each state it makes a binary decision. Consequently, the learning process for $\mathfrak{g}_2$ is much quicker than the {\fontfamily{cmss}\selectfont Controller}'s policy which must optimise over a decision space which is $|\mathcal{S}||\mathcal{A}|$ (choosing an action from its action space at every state). This results in the {\fontfamily{cmss}\selectfont Shaper} rapidly learning its optimal policies (relative to the {\fontfamily{cmss}\selectfont Controller}) in turn, enabling the {\fontfamily{cmss}\selectfont Shaper} to guide the {\fontfamily{cmss}\selectfont Controller} towards its optimal policy during its learning phase. Additionally, in our experiments, we chose the size of the action set for the {\fontfamily{cmss}\selectfont Shaper}, $\mathcal{A}_2$ to be a singleton resulting in a decision space of size $|\mathcal{S}|\times\{0,1\}$ for the entire problem facing the {\fontfamily{cmss}\selectfont Shaper}. We later show that this choice leads to improved performance while removing the free choice of the dimensionality of the {\fontfamily{cmss}\selectfont Shaper}'s action set. Lastly, we later prove that the optimal policy for the {\fontfamily{cmss}\selectfont Shaper} maximises the {\fontfamily{cmss}\selectfont Controller}'s objective (Prop. \ref{invariance_prop}).
\subsection{The Overall Learning Procedure} 
The game $\mathcal{G}$ is solved using our multi-agent RL algorithm (ROSA). In the next section, we show the convergence properties of ROSA. The full code is in Sec. \ref{sec:algorithm} of the Appendix. 
The ROSA algorithm consists of two independent procedures: {\fontfamily{cmss}\selectfont Controller} learns its own policy while {\fontfamily{cmss}\selectfont Shaper} learns which states to perform a switch and the shaping reward magnitudes.
In our implementation, we used proximal policy optimization (PPO) \cite{schulman2017Proximal} as the learning algorithm for all policies: {\fontfamily{cmss}\selectfont Controller}'s policy, switching control policy, and the reward magnitude policy.
We demonstrated ROSA with various {\fontfamily{cmss}\selectfont Shaper} $L$ terms, the first is RND \cite{burda2018exploration} in which $L$ takes the form  $L(s_{t}):=\|\hat{h}(s_t) - \mathit{h(s_t)}\|_{2}^{2}$ where $h$ is a random initialised, fixed target network while $\hat{h}$ is the predictor network that seeks to approximate the target network. Secondly, to demonstrate the flexibility of ROSA to perform well with even a rudimentary bonus term, we use a simple count-based term for $L$, which counts the number of times a state has been visited  (see Sec. \ref{sec:ablation_studies}).   
% We constructed $\hat{F}$ using a fixed neural network $f:\mathbb{R}^d \mapsto \mathbb{R}^m$ and a one-hot encoding of the action of the {\fontfamily{cmss}\selectfont Shaper}. Specifically, $\hat{\phi}(s_t, a^2_t) \coloneqq f(s_t) \cdot i(a^2_t)$ where $i(a^2_t)$ is a one-hot encoding of the action $a^2_t$ picked by  the {\fontfamily{cmss}\selectfont Shaper}. Thus, $\hat{F}(a^2_t, a^2_{t-1}) = f(s_t) \cdot i(a^2_t)  - \gamma^{-1} f(s_{t-1}) \cdot i(a^2_{t-1})$. 
The action set of the {\fontfamily{cmss}\selectfont Shaper} is thus $\mathcal{A}_2 \coloneqq \{0, 1,..., m\}$ where each element is an element of $\mathbb{N}$, and $\pi_2$ is a MLP $\pi_2: \mathbb{R}^d    \mapsto \mathbb{R}^m$. Precise details are in the Supplementary Material, Section 8.

\section{Convergence and Optimality of ROSA} 
The ROSA framework enables the {\fontfamily{cmss}\selectfont Shaper} to
learn a shaping-reward function to assist the {\fontfamily{cmss}\selectfont Controller} when learning a (near-)optimal policy. The interaction between the two RL agents induces two concurrent learning processes, potentially raisingconvergence issues \cite{zinkevich2006cyclic}. We now show that ROSA converges and that the performance of the resulting policy is similar to solving $\mathfrak{M}$ directly. To achieve this, we first study the stable point solutions of $\mathcal{G}$.  
% 
% In MGs, the solution concept is a fixed point known as a \textit{Nash equilibrium} (NE) \cite{fudenberg1991tirole}. 
Unlike MDPs, the existence of a stable point solutions in Markov policies is not guaranteed for MGs \cite{blackwell1968big} and are rarely computable.\footnote{Special exceptions are \textit{team} MGs where agents share an objective and \textit{zero-sum} MGs \cite{shoham2008multiagent}.}
% \footnote{Some MGs have no solution of Markov policies \cite{blackwell1968big}.} 
MGs also often have multiple stable points that can be inefficient \cite{mguni2019coordinating}; in $\mathcal{G}$, the outcome of such stable point profiles may be a poor performing {\fontfamily{cmss}\selectfont Controller} policy. To ensure the framework is useful, we must verify that the solution of $\mathcal{G}$ corresponds to $\mathfrak{M}$. 
We address the following challenges:
%A summary of the challenges we solve follows:

% 
$\textbf{1.}$ ROSA preserves the optimal policy of $\mathfrak{M}$.

$\textbf{2.}$ A stable point of the game $\mathcal{G}$ in Markov policies exists.
%\MET{`in Markov policies' is awkward. Maybe we could say `a Mark policy exists for a stable point of the game'?}

$\textbf{3.}$ ROSA converges to the stable point solution of $\mathcal{G}$.

$\textbf{4.}$ The convergence point of ROSA yields a payoff that is (weakly) greater than that from solving $\mathfrak{M}$ directly.

In proving 1--4 we deduce the following:
\begin{theorem}\label{main_theorem}
ROSA ensures conditions C.1 and C.2.  
\end{theorem}
Proofs are deferred to the Appendix. 

We now give our first result that shows the solution to $\mathfrak{M}$ is preserved under the influence of the {\fontfamily{cmss}\selectfont Shaper}: 
\begin{proposition}\label{invariance_prop}The following statements hold $\forall s\in\mathcal{S}$:
\begin{itemize}
    \item [i)] $
\underset{\pi\in\Pi}{\arg\max}\; \tilde{v}^{\pi,\pi^2}(s)=\underset{\pi\in\Pi}{\arg\max}\; v^{\pi}(s),  \forall \pi^2\in\Pi^2$, 
% where $v^{\pi}(s)=\mathbb{E}\left[\sum_{t=0}^\infty \gamma^tR(s_t,a_t)\right]$.
\hspace{-8 mm}\item [ii)] The {\fontfamily{cmss}\selectfont Shaper}'s optimal policy maximises $v^{\pi}(s)$. 
\end{itemize}
\end{proposition}
Recall, $v^{\pi}$ denotes the {\fontfamily{cmss}\selectfont Controller}'s expected return without the influence of the {\fontfamily{cmss}\selectfont Shaper}. Result (i) therefore says that the {\fontfamily{cmss}\selectfont Controller}'s problem is preserved under the influence of the {\fontfamily{cmss}\selectfont Shaper}. Moreover the (expected) total return received by the {\fontfamily{cmss}\selectfont Controller} is that from the environment. Result (ii) establishes that the {\fontfamily{cmss}\selectfont Shaper}'s optimal policy induces {\fontfamily{cmss}\selectfont Shaper} to maximise {\fontfamily{cmss}\selectfont Controller}'s extrinsic total return.  

The result comes from a careful adaptation of the policy invariance result \cite{ng1999policy} to our multi-agent switching control framework, where the shaping reward is no longer added at all states. % (but only at states for which the switch is on).
% 
% Statement ii) says that asymptotically the {\fontfamily{cmss}\selectfont Shaper}'s total return is unaffected by the presence of $L$ and the {\fontfamily{cmss}\selectfont Shaper}'s total return is a measure of the total environment rewards received by the agents. 
Building on Prop. \ref{invariance_prop}, we find:
\begin{corollary}\label{invariance_prop_corollary}
% \JW{It is better to give the reason why you set this function class. Otherwise, I'm afraid it could be challenged by reviewers.}\DM{Agreed. I am thinking of postponing this specialisation to the theory section where it's used.}
ROSA preserves the MDP $\mathfrak{M}$. In particular, let $(\hat{\pi}^1,\hat{\pi}^2)$ be a stable point policy profile\footnote{By stable point profile we mean a Markov perfect equilibrium \cite{fudenberg1991tirole}.} of the MG induced by ROSA $\mathcal{G}$. Then, $\hat{\pi}^1$ is a solution to the MDP, $\mathfrak{M}$. 
% with objective given by $
% v^{\pi}(s)=\mathbb{E}\left[\sum_{t=0}^\infty \gamma^tR(s_t,a_t)|s_0\equiv s\right]$.
\end{corollary}
Hence, introducing the {\fontfamily{cmss}\selectfont Shaper} does not alter the solution. 
% The result adapts the policy invariance results in \cite{ng1999policy} to our multi-player switching control framework where RS is performed only at selected states. 
% The result is established by a careful adaptation of the policy invariance result in \cite{ng1999policy} to our multi-player switching control framework in which the shaping reward is no longer present at all states. % (but only at states for which the switch is on).    

We next show that the solution of $\mathcal{G}$ can be computed as a limit point of a sequence of Bellman operations.  We use this to show the convergence of ROSA. We define a \textit{projection} $\cP$ on a function $\Lambda$ by: $
\cP \Lambda:=\underset{\bar{\Lambda}\in\{\Psi r|r\in\mathbb{R}^p\}}{\arg\min}\left\|\bar{\Lambda}-\Lambda\right\|$:

\begin{theorem}\label{theorem:existence}
\textbf{i)} Let $V:\mathcal{S}\times\mathbb{N}\to\mathbb{R}$ then the game $\mathcal{G}$ has a stable point which is a given by $
\underset{k\to\infty}{\lim}T^kV^{\boldsymbol{\pi}}=\underset{{\boldsymbol{\hat{\pi}}}\in\boldsymbol{\Pi}}{\sup}V^{\boldsymbol{\hat{\pi}}}=V^{\boldsymbol{\pi^\star}}$, where $\boldsymbol{\hat{\pi}}$ is a stable policy profile for the MG, $\mathcal{G}$ and $T$ is the Bellman operator of $\mathcal{G}$.

\textbf{ii)} ROSA converges to the stable point of $\mathcal{G}$. Moreover, given a set of linearly independent basis functions $\Psi=\{\psi_1,\ldots,\psi_p\}$ with $\psi_k\in L_2,\forall k$, ROSA converges to a limit point $r^\star\in\mathbb{R}^p$ that is the unique solution to  $\cP \mathfrak{F} (\Psi r^\star)=\Psi r^\star$, where $\mathfrak{F}$ is defined by:
    $\mathfrak{F}\Lambda:=\hat{R}_1+\gamma P \max\{\mathcal{M}\Lambda,\Lambda\}$ where $\cM$ is defined by $\mathcal{M}^{\pi,\pi^2}v(s,\cdot):=\hat{R}_1-1+\gamma\sum_{s'\in\mathcal{S}}P(s';a_{\tau_k},s)v(s',\cdot)|a_{\tau_k}\sim \pi^2$  and $r^\star$ satisfies: $
    \left\|\Psi r^\star - Q^\star\right\|\leq (1-\gamma^2)^{-1/2}\left\|\cP Q^\star-Q^\star\right\|$.
\end{theorem}
%  \label{value_iteration_expression} and ${\boldsymbol{\pi^\star}}\in\underset{{\boldsymbol{\pi'}}\in\boldsymbol{\Pi}}{\arg\sup}\;F^{\boldsymbol{\pi'}}$.
Part i) of the theorem proves the system in which the {\fontfamily{cmss}\selectfont Shaper} and {\fontfamily{cmss}\selectfont Controller} jointly learn has a stable point and is the limit of a dynamic programming procedure. Crucially (by Corollary \ref{invariance_prop_corollary}), the limit point corresponds to the solution of the MDP $\mathcal{M}$.  This is proven by showing that $\mathcal{G}$ has a dual representation as an MDP whose solution corresponds to the stable point of the MG. This then enables a distributed Q-learning method \cite{bertsekas2012approximate} to tractably solve $\mathcal{G}$.

Part ii) establishes the solution to $\mathcal{G}$ can be computed using ROSA. This means that the {\fontfamily{cmss}\selectfont Shaper} converges to a shaping-reward function and (by Prop. \ref{invariance_prop}) the {\fontfamily{cmss}\selectfont Controller} learns the optimal value function for $\mathcal{M}$.
The result also establishes the convergence of ROSA to the solution using (linear) function approximators and bounds the approximation error by the smallest error achievable (given the basis functions).
% 
% Theorem \ref{theorem:existence} enables a distributed Q-learning method \cite{bertsekas2012approximate} to tractably solve the MG. It proves the existence of a stable point of the MG which is a limit point of the sequence $T^1V,T^2V,\ldots,$ to which independent Q-learners converge.  
% With this, let $\psi:=\hat{R}_2-L$ and for $ i\neq j\in \{1,2\}$ define $
% Y_{l_k}(s_{l_k}, (a^i_{l_k},a^{j}_{l_k}),  s'_{l_k})
% := \psi_{l_k}(s_{l_k},(a^i_{l_k},a^{j}_{l_k}))+\gamma\underset{(a'^i,a'^{j})}{\sup}\;\bar{v}^{\pi^i,\pi^{j}}_{l}(s'_{l_k},(a'^i,a'^{j}))$.
% % 
% At each iteration $k=0,1,\ldots$ we solve the minimisation: $
% \underset{\mathcal{F}}{\inf} \sum_{l_k=1}^{n_k}\left(Y_{l_k}(s_{l_k},(a^i_{l_k},a^{j}_{l_k}),s'_{l_k})-\left[\mathcal{F}\right](s_{l_k},(a^i_{l_k},a^{j}_{l_k}))\right)^2$. The fitted Q-learning method is naturally suggested by Theorem \ref{theorem:existence} and finds the NE policies of the game. This is in stark contrast to nonzero-sum games whose NE solutions must be computed using fixed point methods which are generally intractable \cite{chen2009settling}. 

% 
% Theorem \ref{theorem:existence} also suggests a method of computing the NE policies which is described in Algorithm \ref{algo:Opt_reward_shap_psuedo}. 
% 
% It therefore remains to establish whether the solutions to $\mathcal{G}$ produce better outcomes for the {\fontfamily{cmss}\selectfont Controller}. 
% 

% 
Introducing poor shaping rewards can potentially worsen overall performance. We now prove ROSA introduces shaping rewards that yield higher total environment returns for the {\fontfamily{cmss}\selectfont Controller}, as compared to solving $\mathfrak{M}$ directly. 
% First, we introduce the following concept: 

% \begin{definition}
% We say that {\fontfamily{cmss}\selectfont Controller} is \textbf{better-off} if its .  
% \end{definition}
% The following result establishes that {\fontfamily{cmss}\selectfont Controller} is better off under the influence of the {\fontfamily{cmss}\selectfont Shaper}: 
\begin{proposition}\label{NE_improve_prop}
% Any NE strategy profile of $\mathcal{G}$ is one in which Controller is better-off, that is 
There exists some finite integer $N$ such that $v^{\tilde{\pi}_m}(s)\geq v^{\pi_m}(s),\;\forall s \in\mathcal{S}$ for any $m\geq N$, where $\tilde{\pi}_m$ and $\pi_m$ are the respective {\fontfamily{cmss}\selectfont Controller} policies after the $m^{th}$ learning iteration with and without the {\fontfamily{cmss}\selectfont Shaper}'s influence.
% Under the influence of the {\fontfamily{cmss}\selectfont Shaper}, {\fontfamily{cmss}\selectfont Controller}'s expected return is (weakly) higher than the expected return without the {\fontfamily{cmss}\selectfont Shaper} i.e., through solving $\mathfrak{M}$ directly.
\end{proposition}

Note that Prop. \ref{NE_improve_prop} implies $v^{\tilde{\pi}}(s)\geq v^{\pi}(s),\;\forall s \in\mathcal{S}$. Prop. \ref{NE_improve_prop} shows that the {\fontfamily{cmss}\selectfont Shaper} improves outcomes for the {\fontfamily{cmss}\selectfont Controller}. Additionally, unlike reward shaping methods in general, the shaping rewards generated by  the {\fontfamily{cmss}\selectfont Shaper} \textit{never} lead to a reduction to the total (environmental) return for Controller (compared to the total return without $F$). 

\textbf{Note:} \textit{Prop. \ref{NE_improve_prop} compares the environmental (extrinsic) rewards accrued by the {\fontfamily{cmss}\selectfont Controller}. Prop. \ref{NE_improve_prop} therefore shows {\fontfamily{cmss}\selectfont Shaper} induces a {\fontfamily{cmss}\selectfont Controller} policy that leads to a (weakly) higher expected return from the \textit{environment}.}

\section{Experiments}\label{Section:Experiments}
We performed a series of experiments to test if ROSA \textbf{(1)} learns beneficial shaping-reward functions \textbf{(2)} decomposes complex tasks, and \textbf{(3)} tailors shaping rewards to encourage the {\fontfamily{cmss}\selectfont Controller} to capture environment rewards (as opposed to merely pursuing novelty). We compared ROSA's performance to RND \cite{burda2018exploration}, ICM \cite{pathak2017curiosity}, LIRPG \cite{zheng2018learning}, BiPaRS-IMGL \cite{hu2020learning}\footnote{BiPaRS-IMGL requires a manually crafted shaping-reward (only available in Cartpole).} and vanilla PPO \cite{schulman2017Proximal}.
We then compared performances on performance benchmarks including Sparse Cartpole, Gravitar, Solaris, and Super Mario. 
% Finally, Lastly, we ran a detailed suite of Ablation Studies (supplementary material).
\subsection{Didatical Examples}

% All experiments were repeated with $5$ random seeds; in the plots, dark lines represent mean values and the shaded areas represent standard error. The legend is given in below.
% \begin{figure}[ht!]
%     \centering
%     \includegraphics[width=0.9\columnwidth]{Taher_figures/}
%     % \label{Figure:Legend}
%     % \caption{Legend for all plots.}
% \end{figure}
% For simplicity, each $\pi^2_m$ is a deterministic policy so that each option policy is directly used in determining the shaping reward output. \DM{unnecessary detail here?} 

% We implement $F$ (with a slight abuse of notation) as 
% % as with 
% %  \Phi(s_t, \pi_m) \equiv \left\|\ \hat{f}(s_t, \pi_m(\cdot|s_{t})) - f(s_t, \pi_m(\cdot|s_{t})) \right\|_{2}^{2}$, 
%     % \\\text{\DM{$m$ is a typo?, also the f's now have two arguments}} 
%     $F(t',\pi^2_m(s_{t'}),t, \pi^2_m(s_{t})) = \gamma \phi(t', \pi^2_m(\cdot|s_{t'})) - \phi(t,\pi^2_m(\cdot|s_{t}))$,
% where $m \in \mathbb{R}^{m}$ is implemented as a vector for which one component is $1$ and other components are $0$.

% Our second set of experiments were to verify whether ROSA actually exhibits characteristics that we claimed. To this end, we designed environments to check each characteristic in isolation. 
% \begin{figure}{o}{.23\textwidth}    
%     \includegraphics[width=0.22\textwidth]{jianhong_maze5x5_principle_swp.pdf}
%     \label{Figure:optimal_Maze}
%     \end{figure}
\begin{figure}[ht]
    \centering
    % \vspace{-5mm}
    \includegraphics[width=0.95\linewidth]{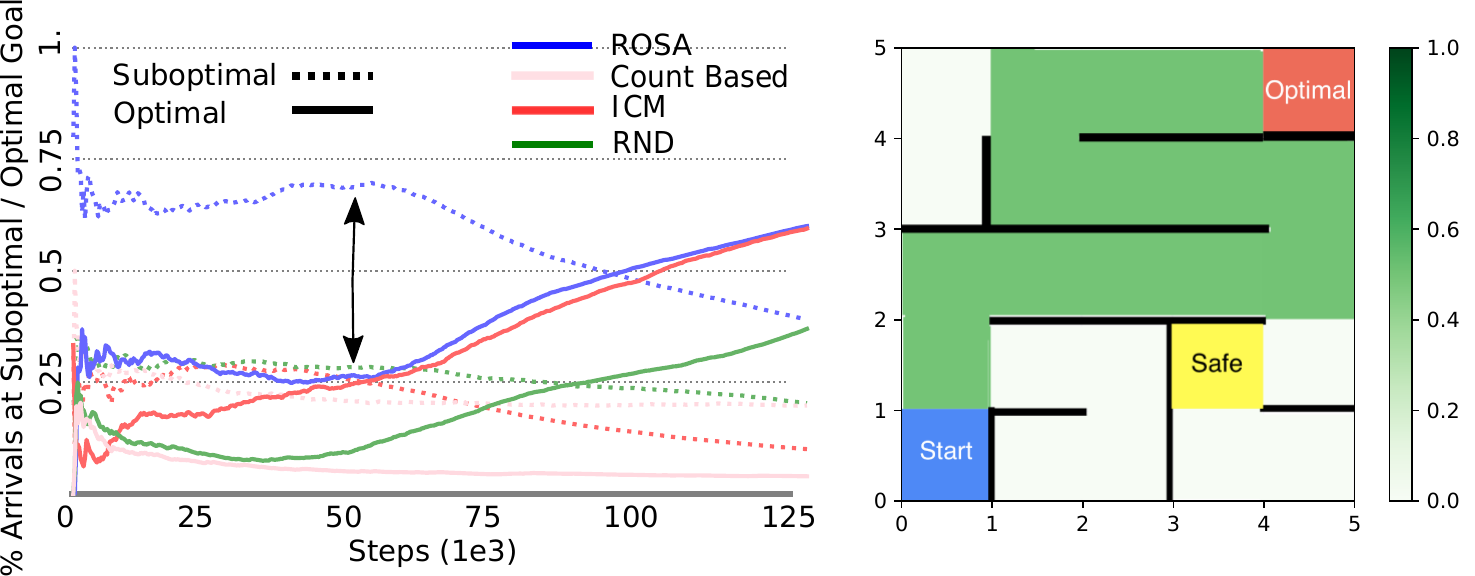}
    % \vspace{-2mm}
    \caption{\textit{Left.} Proportion of optimal and suboptimal goal arrivals. ROSA has a marked inflection (arrow) where arrivals at the sub-optimal goal decrease and arrivals at the optimal goal increase. {\fontfamily{cmss}\selectfont Shaper} has learned to guide {\fontfamily{cmss}\selectfont Controller} to forgo the suboptimal goal in favour of the optimal one. \textit{Right.} Heatmap showing where ROSA adds rewards.}
    \label{Figure:explore_Maze}
    \vspace{-3mm}
\end{figure}
\noindent\textbf{Beneficial shaping reward.} 
ROSA is able to learn a shaping-reward function that leads to improved {\fontfamily{cmss}\selectfont Controller} performance. 
\begin{figure}[ht]
    \centering
    % \vspace{-5.8mm}
    \includegraphics[width=8cm, height=3.5cm]{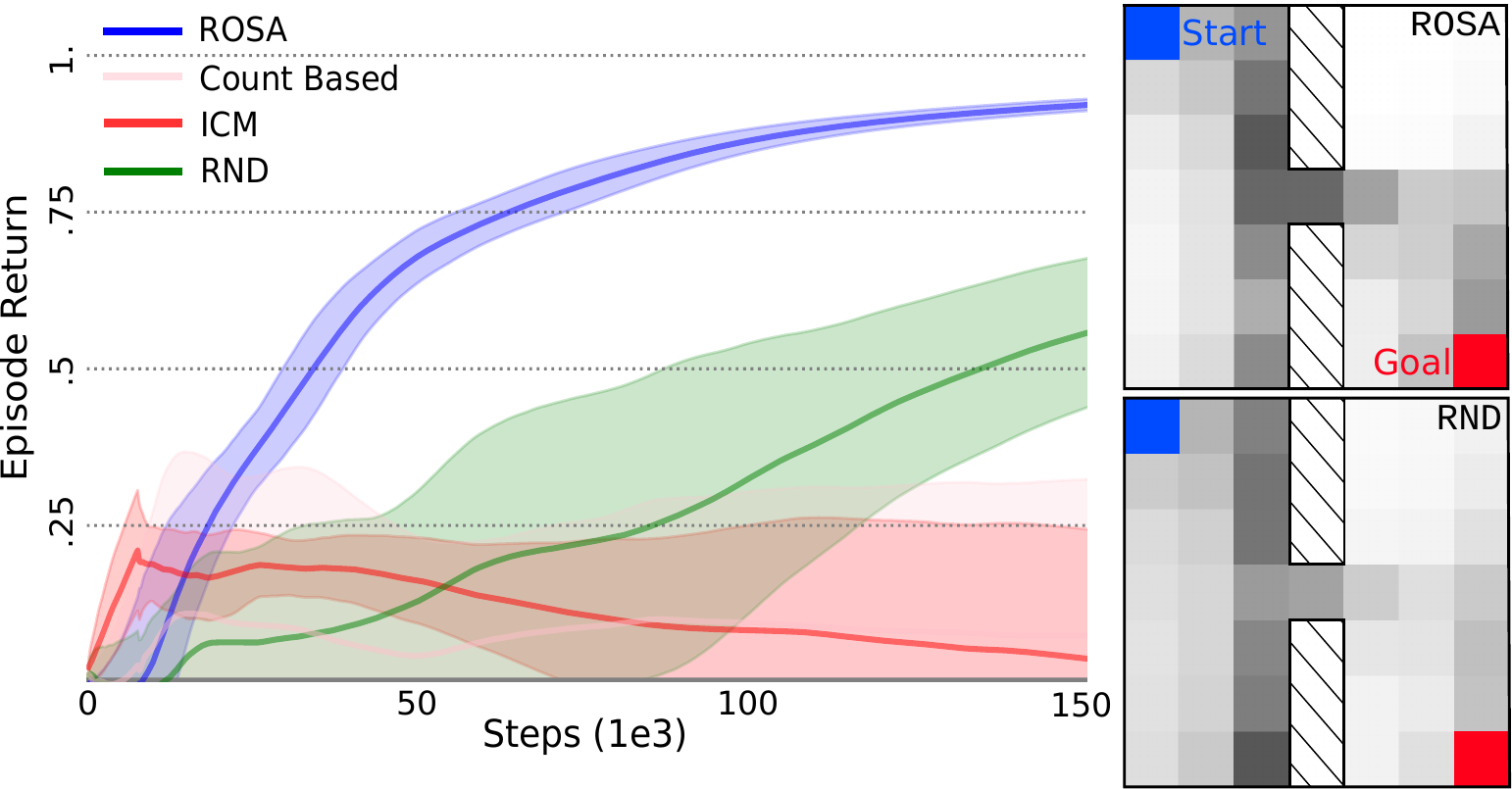}
    % \vspace{-2mm}
    \caption{Discovering subgoals on Subgoal Maze. \textit{Left.} Learning curves. \textit{Right.} Heatmap of shaping rewards.} %guiding {\fontfamily{cmss}\selectfont Controller} to gate.}
    \label{Figure:Subgoal_Maze}
    \vspace{-3mm}
\end{figure}In particular, it is able to learn to shape rewards that encourage the RL agent to avoid suboptimal --- but easy to learn --- policies in favour of policies that attain the maximal return. To demonstrate this, we designed a Maze environment with two terminal states: a suboptimal goal state that yields a reward of $0.5$ and an optimal goal state which yields a reward of $1$. In this maze design, the sub-optimal goal is more easily reached. A good shaping-reward function discourages the agent from visiting the sub-optimal goal. As shown in Fig. \ref{Figure:explore_Maze}\footnote{The sum of curves for each method may be less that 1 if the agent fails to arrive at either goal.} ROSA achieves this by learning to place shaping rewards (dark green) on the path that leads to the optimal goal. 

\subsection*{Subgoal discovery.} 
We used the Subgoal Maze introduced in  \cite{mcgovern2001automatic} to test if ROSA can discover subgoals. The environment has two rooms separated by a gateway. To solve this, the agent must discover the subgoal (reaching the gateway before it can reach the goal. Rewards are $-0.01$ everywhere except at the goal state where the reward is $1$. As shown in Fig. \ref{Figure:Subgoal_Maze}, ROSA successfully solves this environment whereas other methods fail. ROSA assigns importance to reaching the gateway, depicted by the heatmap of added shaped rewards. 

\begin{figure}[ht]
    \centering
    % \vspace{-2mm}
    \includegraphics[width = 0.47 \textwidth]{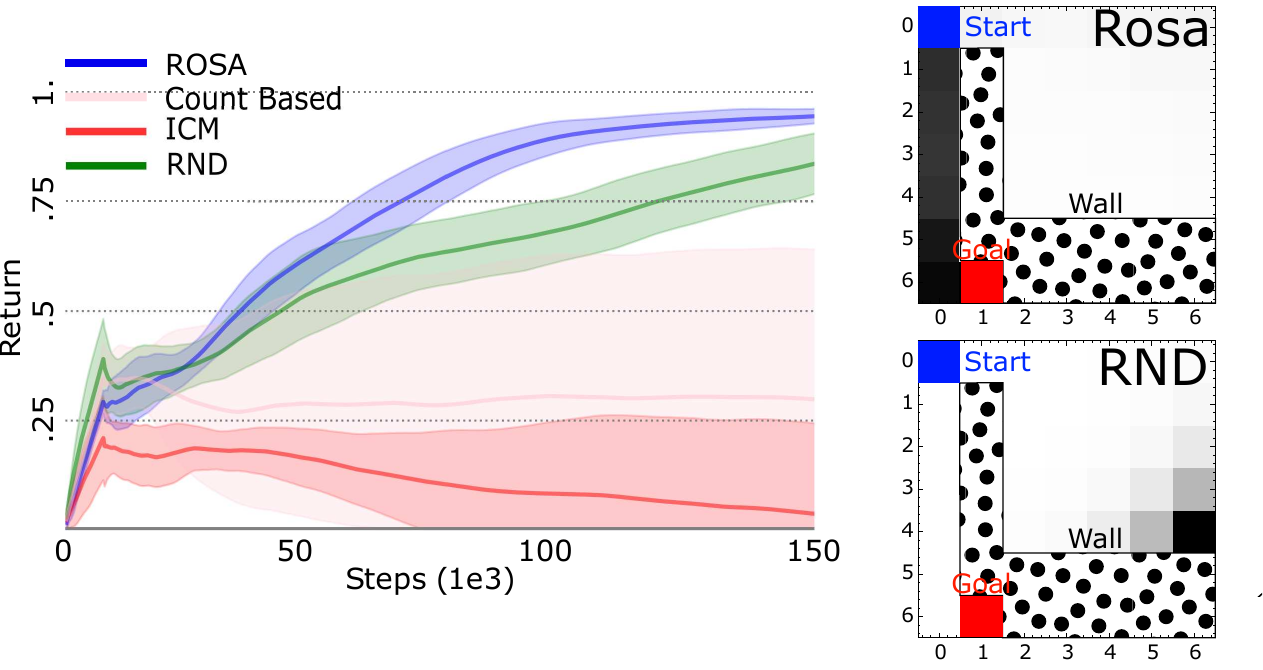}
    %\vspace{-4mm}
    \caption{Red-Herring Maze. Ignoring non-beneficial shaping reward. \textit{Left.} Learning curves. \textit{Right.} Heatmap of added shaping rewards. ROSA ignores the RHS of the maze, while RND incorrectly adds unuseful shaping rewards there.}
    \vspace{-4mm}
    \label{Figure:red_herring_Maze}
\end{figure}
\noindent\textbf{Ignoring non-beneficial shaping reward.} 
Switching control gives ROSA the power to learn when to attend to shaping rewards and when to ignore them. This allows us to learn to ignore ``red-herrings'', i.e., unexplored parts of the state space where there is no real environment reward, but where surprise or novelty metrics would place high shaping rewards. To verify this claim, we use a modified Maze environment called Red-Herring Maze in which a large part of the state space that has no environment reward, but with the goal and environment reward elsewhere. Ideally, we expect that the reward shaping method can learn to quickly ignore the large part of the state space. Fig. \ref{Figure:red_herring_Maze} shows ROSA outperforms all other baselines. Moreover, the heatmap shows that while RND is easily dragged to reward exploring novel but non rewarding states, ROSA learns to ignore them.
\begin{figure}[ht]
% {0.6\textwidth}
    \centering
    %\vspace{-4mm}
    \includegraphics[width = 0.47\textwidth]{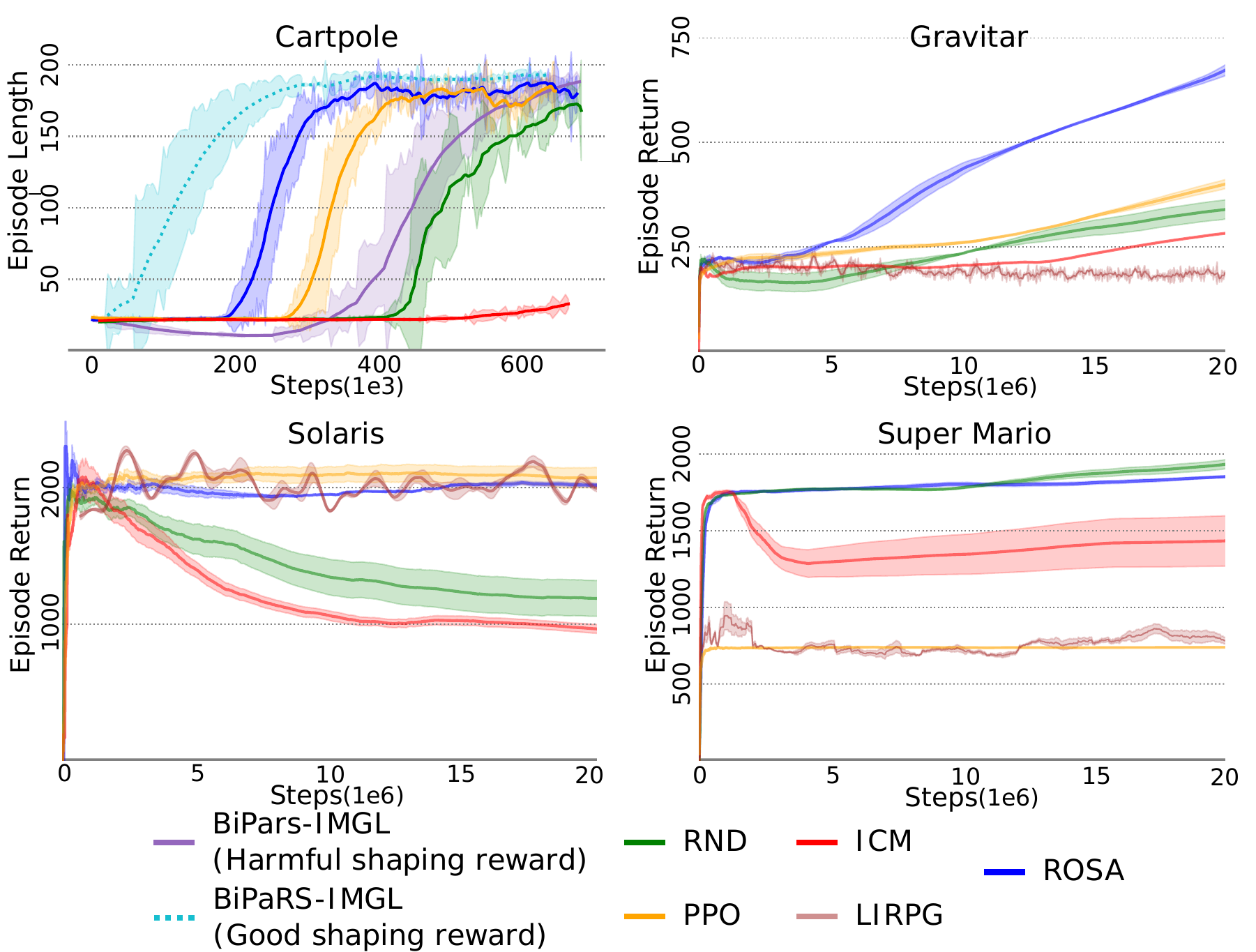}
    \caption{Benchmark performance.}
    \label{Figure:LearningCurves}
    \vspace{-5mm}
\end{figure}

\subsection{Learning Performance.} We compared ROSA with the baselines in four challenging sparse rewards environments: Cartpole, Gravitar, Solaris, and Super Mario. These environments vary in state representation, transition dynamics and reward sparsity. In Cartpole, a penalty of $-1$ is received only when the pole collapses; in Super Mario Brothers the agent can go for 100s of steps without encountering a reward. Fig. \ref{Figure:LearningCurves} shows learning curves. ROSA either markedly outperforms the best competing baseline (Cartpole and Gravitar) or is on par with them (Solaris and Super Mario) showing that it is robust to the nature of the environment and underlying sparse reward. Moreover, ROSA does not exhibit the  failure modes where after good initial performance it deteriorates. E.g., in Solaris both ICM and RND have good initial performance but deteriorate sharply while ROSA's performance remains satisfactory.
\section{Ablation Studies}\label{sec:ablation_studies}

% \begin{wrapfigure}{r}{0.4\columnwidth}
% %   \begin{center}
%     \includegraphics[width=0.4\columnwidth]{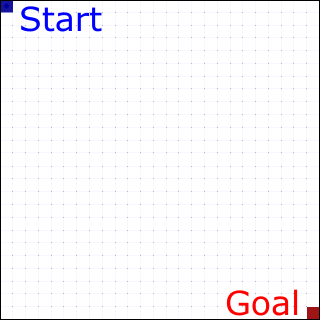}
% %   \end{center}
%   \caption{Gridworld.}
%   \label{fig:ablation_25x25_simple_maze}
% \end{wrapfigure}
To understand how ROSA's performance is affected by components of  the algorithm or hyper-parameter settings, we ran a series of ablation experiments. All experiments in this section were run on a simple $25$x$25$ Gridworld in Fig.~\ref{fig:ablation_25x25_simple_maze}.
\noindent\textbf{ROSA is an effective plug \& play framework.} 
Fig. \ref{fig:ablation_plug_n_play} show the performance of vanilla PPO and vanilla TRPO versus their ROSA enhanced counterparts. Particularly notable is ROSA's enhancement to TRPO. Both vanilla TRPO and TRPO+ROSA perform equally well in early stages of learning, but while vanilla TRPO seems to get stuck with a suboptimal policy, TRPO+ROSA consistently improves performance through learning until reaching convergence.

\begin{figure}[ht]
    \begin{subfigure}{0.27\columnwidth}
        \centering
        \includegraphics[width=.75\columnwidth]{25x25}
        \caption{Gridworld.}
        \label{fig:ablation_25x25_simple_maze}
    \end{subfigure}
    \begin{subfigure}{0.73\columnwidth}
        \includegraphics[width=1.0\columnwidth]{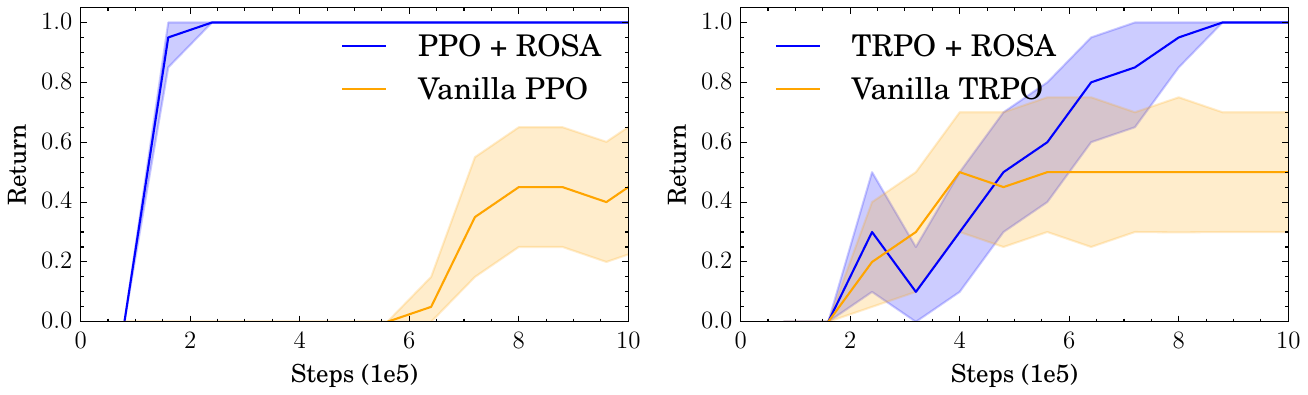}
        \caption{ROSA is an effective plug \& play framework. Enhancing both PPO (left) and TRPO (right) with ROSA results in marked performance gains.}
        \label{fig:ablation_plug_n_play}
    \end{subfigure}
\end{figure}
\noindent\textbf{ROSA delivers performance boost despite severe impairments to the method.}
A core component of ROSA is the exploration bonus term in the {\fontfamily{cmss}\selectfont Shaper}'s objective. We ran experiments to check if ROSA can still deliver a performance boost when this important component of the algorithm is either weakened or ablated out entirely. Fig. \ref{fig:ablation_L} shows performance of various versions of ROSA: one with RND providing the exploration bonus, one with a simple count-based measure $L(s)=\frac{1}{\text{Count(s)}+1}$ providing the exploration bonus, and one where the exploration bonus is ablated out entirely. Stronger exploration bonuses such as RND enable ROSA (PPO+ROSA ($L$=RND)) to provide more effective reward shaping over weaker exploration bonuses (PPO+ROSA ($L$=Count-based)), indicating this is an important aspect. Yet, ROSA can still usefully benefit learners even when the exploration bonus is ablated completely as shown by the fact that (PPO+ROSA (No. $L$)) outperforms Vanilla PPO.

\begin{figure}[ht]
    \centering
    \includegraphics[width=8cm, height=3.5cm]{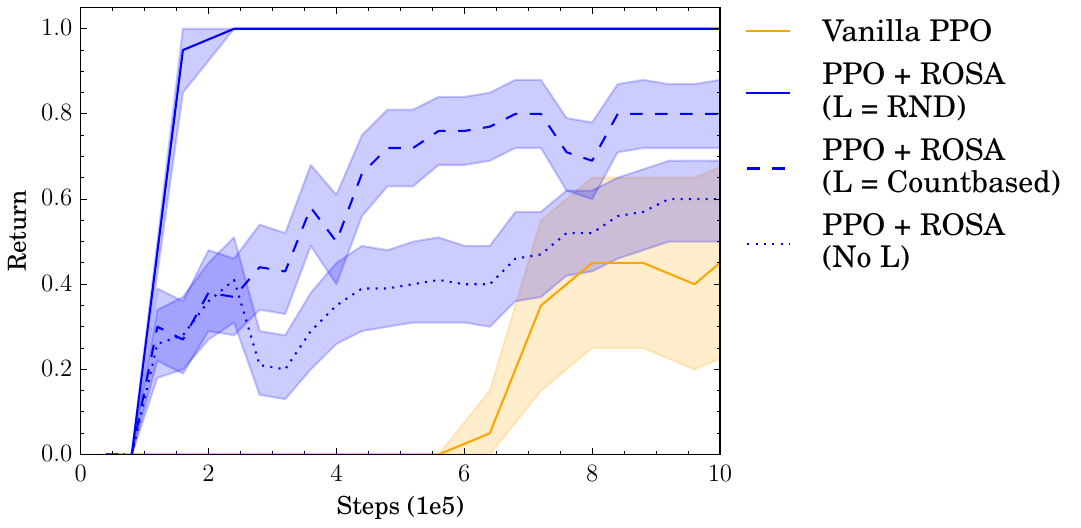}
    \caption{Ablation study on the exploration bonus. }
    \label{fig:ablation_L}
\end{figure}
% \vspace{-3 mm}
% \begin{wrapfigure}{r}{0.45\columnwidth}
% %   \begin{center}
%     \includegraphics[width=0.45\columnwidth]{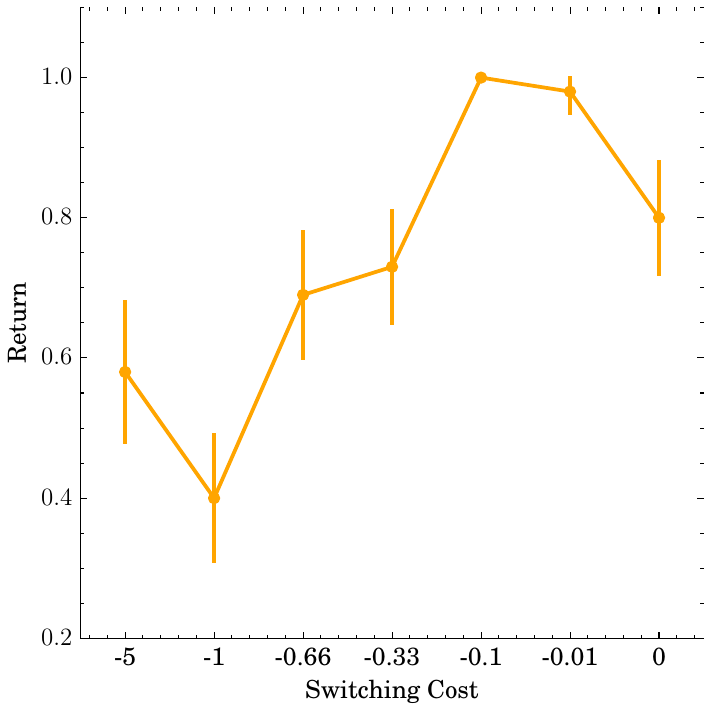}
% %   \end{center}
%   \caption{Switching cost.}
%   \label{fig:ablation_switching_cost}
% \end{wrapfigure}
\noindent\textbf{Tuning switching cost is important.}
Switching cost is a fundamental component of switching contol methods. Lower switching costs allow for ROSA to be less discriminatory about where it adds shaping reward, while higher switching costs may prevent ROSA from adding useful shaping rewards. Thus, for each environment this hyper-parameter must be tuned to obtain optimal performance from ROSA. Fig. \ref{fig:ablation_switching_cost} shows a parameter study of end-of-training evaluation performance of PPO+ROSA versus various values for the switching cost. As can be seen, outside of an optimum range of values for the switching cost, approximately $(-0.1,-0.01)$, ROSA's effectiveness is hampered. It is future work to investigate how this hyper-parameter should be set.

\begin{figure}[ht]
    \centering
    \includegraphics[width=5cm, height=3cm]{}
    \caption{Switching cost.}
    \label{fig:ablation_switching_cost}
    \vspace{-3 mm}
\end{figure}

% 1. Accommodation of different $L$ Bonuses 2. Adaptation to different RL policies 3. Plug \& Play 4. Switching Controls. 5. Termination probability.

\section{Conclusion}\label{Section:Conclusion}
We presented a novel solution method to solve the problem of reward shaping. Our Markov game framework of a primary Controller and a secondary reward shaping agent is guaranteed to preserve the underlying learning task for the {\fontfamily{cmss}\selectfont Controller} whilst guiding {\fontfamily{cmss}\selectfont Controller} to higher performance policies. Moreover, ROSA is able to decompose complex learning tasks into subgoals and to adaptively guide {\fontfamily{cmss}\selectfont Controller} by selectively choosing the states to add shaping rewards. By presenting a theoretically sound and empirically robust approach to solving the reward shaping problem, ROSA opens up the applicability of RL to a range of real-world control problems. The most significant contribution of this paper, however, is the novel construction that marries RL, multi-agent RL and game theory which leads to a new solution method in RL. We believe this powerful approach can be adopted to solve other open challenges in RL.

\clearpage
% \printbibliography
% \bibliography{sample}
% \bibliographystyle{iclr2022_conference}
\bibliography{main}
\clearpage
\appendix

% \resetlinenumber
\renewcommand*{\thesection}{\arabic{section}}
\setcounter{section}{7}
\setcounter{page}{1}

\addcontentsline{toc}{section}{Appendix} % Add the appendix text to the document TOC
\part{{\Large{Appendix}}} % Start the appendix part
\parttoc
\clearpage

% \Huge
% \begin{center}
% \textbf{Supplementary Material}    
% \end{center}
% \normalsize\vspace{5 mm}
% The Supplementary material is arranged as follows: first, in Sec. \ref{sec:algorithm} and \ref{sec:app_imp_details} we give a detailed description of the ROSA algorithm. Next in Sec. \ref{sec:experimental_details} we give a detailed description of the experimental details and report the hyperparameter values used in our experiments. In Sec. \ref{sec:ablation_studies_appendix}, we give a detailed discussion of our ablation studies. In Sec. \ref{sec:notation_appendix}, we outline some of the additional notation used in the technical proofs which are contained in Sec. \ref{sec:proofs_appendix} which concludes the supplementary material. 

% \newpage
\footnotesize
\onecolumn
\section{Algorithm}\label{sec:algorithm}

\begin{algorithm}[H]
    \label{algo:Opt_reward_shap} 
    \DontPrintSemicolon
    \KwInput{ Environment $E$ \;
             \hspace{3em} Initial {\fontfamily{cmss}\selectfont Controller} policy $\pi_0$ with parameters  $\theta_{\pi_0}$ \;  
             \hspace{3em} Initial {\fontfamily{cmss}\selectfont Shaper} switch policy $\mathfrak{g}_{2_{0}}$ with parameters $\theta_{\mathfrak{g}_{2_{0}}}$\; 
             \hspace{3em} Initial {\fontfamily{cmss}\selectfont Shaper} action policy $\pi^2_0$ with parameters $\theta_{\pi^2_0}$ \;
             \hspace{3em} Neural networks $h$ (fixed) and $\hat{h}$ for RND with parameter $\theta_{\hat{h}}$\; 
             \hspace{3em} Buffer $B$ \;
             \hspace{3em} Number of rollouts $N_r$, rollout length $T$ \;
             \hspace{3em} Number of mini-batch updates $N_u$ \;
             \hspace{3em} Switch cost $c(\cdot)$, Discount factor $\gamma$, learning rate $\alpha$\;
             
             }
    \KwOutput{Optimised {\fontfamily{cmss}\selectfont Controller} policy $\pi^*$}
    $\pi, \pi^2, \mathfrak{g}_{2} \gets \pi_0, \pi^2_0,\mathfrak{g}_{2_{0}}$\;
    \For{$n = 1, N_r$}
    {
        \textbf{// Collect rollouts}\;
        \For{$t = 1, T$}
        {
            Get environment states $s_t$ from $E$ \;
            Sample $a_t$ from $\pi(s_t)$ \;
            Apply action $a_t$ to environment $E$, and get reward $r_t$ and next state $s_{t+1}$ \;
            Sample $g_t$ from $\mathfrak{g}_{2}(s_t)$ \textbf{ // Switching control} \;
            \eIf{$g_t = 1$}
            {   
                Sample $a^2_t$ from $\pi^2(s_t)$ \;
                Sample $a^2_{t+1}$ from $\pi^2(s_{t+1})$ \;
                $r^i_t = \gamma a^2_{t+1} -  a^2_{t}$ \textbf{// Calculate $F(a^2_t, a^2_{t+1})$}\;
            }
            {   
                $a^2_t, r^i_t = 0, 0$ \textbf{// Dummy values}
            }
            Append $(s_t, a_t, g_t, a^2_t, r_t, r^i_t, s_{t+1})$ to $B$
        }
        \For{$u = 1, N_u$}
        {
            Sample data $(s_t, a_t, g_t, a^2_t, r_t, r_t^i, s_{t+1})$ from $B$\;
            \eIf{$g_t = 1$}
            {   
                Set shaped reward to $r_t^s = r_t + r_t^i$
            }
            {
                Set shaped reward to $r_t^s = r_t$
            }
            \textbf{// Update RND} \;
            $\text{Loss}_{\text{RND}} = ||h(s_t) - \hat{h}(s_t)||^2$ \;
            $\theta_{\hat{h}} \gets \theta_{\hat{h}} - \alpha \nabla \text{Loss}_{\text{RND}} $ \;
            \textbf{// Update {\fontfamily{cmss}\selectfont Shaper} }\;
            $l_t = ||h(s_t) - \hat{h}(s_t)||^2$ \textbf{// Compute $L(s_t)$} \;
            $c_t =  g_t$ \;
            Compute $\text{Loss}_{\pi^2}$ using $(s_t, a_t, g_t, c_t, r_t, r_t^i, l_t, s_{t+1})$ using PPO loss \textbf{// Section 4.2} \; 
            Compute $\text{Loss}_{ \mathfrak{g}_{2}}$ using $(s_t, a_t, g_t, c_t, r_t, r_t^i, l_t, s_{t+1})$ using PPO loss \textbf{// Section 4.2}\;
            $\theta_{\pi^2} \gets \theta_{\pi^2} - \alpha \nabla \text{Loss}_{\pi^2}$ \;
            $\theta_{\mathfrak{g}_{2}} \gets \theta_{\mathfrak{g}_{2}} - \alpha \nabla \text{Loss}_{\mathfrak{g}_{2}}$ \;
            \textbf{// Update \fontfamily{cmss}\selectfont Controller}\;
            Compute $\text{Loss}_{ \pi}$ using $(s_t, a_t, r_t^s, s_{t+1})$ using PPO loss \textbf{// Section 4} \;
            $\theta_{\pi} \gets \theta_{\pi} - \alpha \nabla \text{Loss}_{\pi}$ \;
        }
    }
	\caption{\textbf{R}einforcement Learning \textbf{O}ptimising \textbf{S}haping \textbf{A}lgorithm ROSA}
\end{algorithm}
\normalsize
\clearpage
\normalsize

\section{Further Implementation Details}\label{sec:app_imp_details}

% For simplicity, each $\pi^2_m$ is a deterministic policy. 
% For simplicity, the termination is determined by a pr which also avoids long running options.

Details of the {\fontfamily{cmss}\selectfont Shaper} and $F$ (shaping reward)\\
\begin{tabular}{c|l}
\textbf{Object} & \textbf{Description}\\
\hline
% $f$ & Fixed feed forward NN that maps $\mathbb{R}^d \mapsto \mathbb{R}^m$ \\
    & [512, \texttt{ReLU}, 512, \texttt{ReLU}, 512, $m$] \\
$\mathcal{A}_2$ & Discrete integer action set which is size of output of $f$,\\
    & i.e.,$\mathcal{A}_2$ is set of integers $\{1,...,m\}$ \\
$\pi_2$ & Fixed feed forward NN that maps $\mathbb{R}^d \mapsto \mathbb{R}^m$ \\
    & [512, \texttt{ReLU}, 512, \texttt{ReLU}, 512, $m$] \\
% Potential function $\phi$ &  $\phi(s,a^2)=f(s)\cdot a^2$\\
$F$ & $\gamma  a^2_{t+1}$ - $a^2_{t},$\;\; $\gamma=0.95$\\
\end{tabular}

$d$=Dimensionality of states; $m\in \mathbb{N}$ - tunable free parameter.

\clearpage
\section{Experimental Details}\label{sec:experimental_details}
\subsection{Environments \& Preprocessing Details}
The table below shows the provenance of environments used in our experiments.

\begin{center}
    \begin{tabular}{r|l} 
        \toprule
        Atari \& Cartpole & \url{https://github.com/openai/gym}\\
        Maze & \url{https://github.com/MattChanTK/gym-maze} \\
        Super Mario Brothers & \url{https://github.com/Kautenja/gym-super-mario-bros}\\
        \bottomrule
    \end{tabular}
    \label{Table:Environments}
\end{center}

Furthermore, we used preprocessing settings as indicated in the following table.
\small
\begin{center}
    \begin{tabular}{r|l} 
        \toprule
        Setting & Value \\
        \midrule
        Max frames per episode & Atari \& Mario $\rightarrow$ 18000 / Maze \& Cartpole $\rightarrow$ 200\\
        Observation concatenation & Preceding 4 observations\\
        Observation preprocessing & Standardization followed by clipping to [-5, 5]\\
        Observation scaling & Atari \& Mario $\rightarrow$ (84, 84, 1) / Maze \& Cartpole $\rightarrow$ None \\
        Reward (extrinsic and intrinsic) preprocessing & Standardization followed by clipping to [-1, 1]\\
        \bottomrule
    \end{tabular}
    \label{Table:Environments}
\end{center}

\normalsize
\clearpage
\subsection{Hyperparameter Settings}
In the table below we report all hyperparameters used in our experiments. Hyperparameter values in square brackets indicate ranges of values that were used for performance tuning.

\begin{center}
    \begin{tabular}{c|c} 
        \toprule
        Clip Gradient Norm & 1\\
        $\gamma_{E}$ & 0.99\\
        $\lambda$ & 0.95\\
        Learning rate & $1$x$10^{-4}$ \\
        Number of minibatches & 4\\
        Number of optimization epochs & 4\\
        Policy architecture & CNN (Mario/Atari) or MLP (Cartpole/Maze)\\
        Number of parallel actors & 2 (Cartpole/Maze) or 20 (Mario/Atari)\\
        Optimization algorithm & Adam\\
        Rollout length & 128\\
        Sticky action probability & 0.25\\
        Use Generalized Advantage Estimation & True\\
        \midrule
        Coefficient of extrinsic reward & [1, 5]\\
        Coefficient of intrinsic reward & [1, 2, 5, 10, 20, 50]\\
        $\gamma_{I}$ & 0.99\\
        Probability of terminating option & [0.5, 0.75, 0.8, 0.9, 0.95]\\
        RND output size & [2, 4, 8, 16, 32, 64, 128, 256]\\
        \bottomrule
    \end{tabular}
\end{center}

% The intrinsic module $\phi_{\eta}(\cdot)$ for LIRPG has 3 hidden layers with 64 neurons with ReLU activation functions. The output of the intrinsic module is a 64 dimensional embedding feature. Adam~\cite{kingma2014adam} is chosen as the optimiser with learning rate $\alpha=e^{-4}$. 
% The hyperparameters of the PPO part are kept the same to the baseline setting. 
% Training for each agent lasts for 200 million frames with a frameskip of four. 
% \clearpage

\clearpage
\section{Ablation Studies}\label{sec:ablation_studies_appendix}
\subsection*{Ablation Study 1: Adaption of ROSA to Different Controller Policies}
We claimed {\fontfamily{cmss}\selectfont Shaper} can design a reward-shaping scheme that can \textit{adapt} its shaping reward guidance of the {\fontfamily{cmss}\selectfont Controller} (to achieve the optimal policy) according to the {\fontfamily{cmss}\selectfont Controller}'s (RL) policy. 

To test this claim, we tested two versions of our agent in a corridor Maze. The maze features two goal states that are equidistant from the origin, one is a suboptimal goal with a reward of $0.5$ and the other is an optimal goal which has a reward $1$. There is also a fixed cost for each non-terminal transition. We tested this scenario with two versions of our controller: one with a standard RL Controller policy and another version in which the actions of the {\fontfamily{cmss}\selectfont Controller} are determined by a high entropy policy, we call this version of the {\fontfamily{cmss}\selectfont Controller} the \textit{high entropy controller}.\footnote{To generate this policy, we artificially increased the entropy by adjusting the temperature of a softmax function on the policy logits.}  The high entropy policy induces actions that may randomly push {\fontfamily{cmss}\selectfont Controller} towards the suboptimal goal. Therefore, in order to guide {\fontfamily{cmss}\selectfont Controller} to the optimal goal state, we expect {\fontfamily{cmss}\selectfont Shaper} to strongly shape the rewards of the {\fontfamily{cmss}\selectfont Controller} to guide {\fontfamily{cmss}\selectfont Controller} away from the suboptimal goal (and towards the optimal goal). 
\begin{figure}[ht!]
\centering
{\includegraphics[width=0.5\columnwidth]{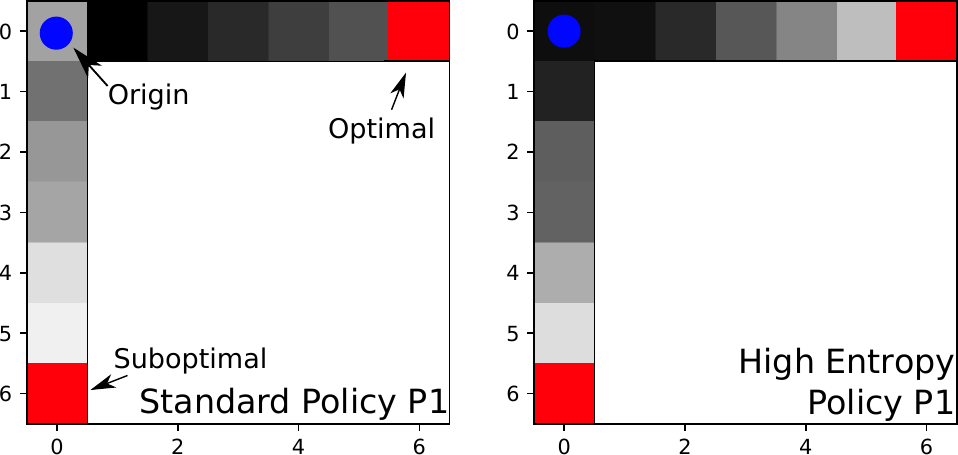}}
    \caption{Responsiveness to Controller policies}
    \label{Figure:Ablation_Studies:responsiveness}
\end{figure}
Figure \ref{Figure:Ablation_Studies:responsiveness} shows heatmaps of the added intrinsic reward (darker colours indicate higher intrinsic rewards) for the two versions of the {\fontfamily{cmss}\selectfont Controller}. With the standard policy controller, the intrinsic reward is maximal in the state to the right of the origin indicating that the {\fontfamily{cmss}\selectfont Shaper} determines that these shaping rewards are sufficient to guide {\fontfamily{cmss}\selectfont Controller} towards the optimal goal state. For the high entropy controller, the {\fontfamily{cmss}\selectfont Shaper} introduces high intrinsic rewards to the origin state as well as states beneath the origin. These rewards serve to counteract the random actions taken by the high-entropy policy that lead {\fontfamily{cmss}\selectfont Controller} towards the suboptimal goal state. It can therefore be seen that the {\fontfamily{cmss}\selectfont Shaper} adapts the shaping rewards according to the type of Controller it seeks to guide.

\subsection*{Ablation Study 2: Switching Controls}
Switching controls enable ROSA to be selective of states to which intrinsic rewards are added. This improves learnability (specifically, by reducing the computational complexity) of the learning task for the {\fontfamily{cmss}\selectfont Shaper} as there are fewer states where it must learn the optimal intrinsic reward to add to the {\fontfamily{cmss}\selectfont Controller} objective. 

To test the effect of this feature on the performance of ROSA, we compared ROSA to a modified version in which the {\fontfamily{cmss}\selectfont Shaper} must add intrinsic rewards to all states. That is, for this version of ROSA we remove the presence of the switching control mechanism for the {\fontfamily{cmss}\selectfont Shaper}. Figure \ref{Figure:Ablation_Studies} shows learning curves on the Maze environment used in the "Optimality of shaping reward" experiments in Section \ref{Section:Experiments}. As expected, the agent with the version of ROSA with switching controls learns significantly faster than the agent that uses the version of ROSA sans the switching control mechanism. For example, it takes the agent that has no switching control mechanism almost 50,000 more steps to attain an average episode return of 0.5 as compared against the agent that uses the version of our algorithm with switching controls. 

This illustrates a key benefit of switching controls which is to reduce the computational burden on {\fontfamily{cmss}\selectfont Shaper} (as it does not need to model the effects of adding intrinsic rewards in \textit{all} states) which in turn leads to both faster computation of solutions and improved performance by the {\fontfamily{cmss}\selectfont Controller}. Moreover, Maze is a relatively simple environment, expectedly the importance of the switching control is amplified in more complex environments.

Our reward-shaping method features a mechanism to selectively pick states to which intrinsic rewards are added. It also adapts its shaping rewards according to the {\fontfamily{cmss}\selectfont Controller}'s learning process. {In this section, we present the results of experiments in which we ablated each of these components.} In particular, we test the performance of ROSA in comparison to a version of ROSA with the switching mechanism removed. We then present the result of an experiment in which we investigated the ability of ROSA to adapt to different behaviour of the {\fontfamily{cmss}\selectfont Controller}. 

\begin{figure}[h!]
    \centering
            \includegraphics[width=0.5\columnwidth]{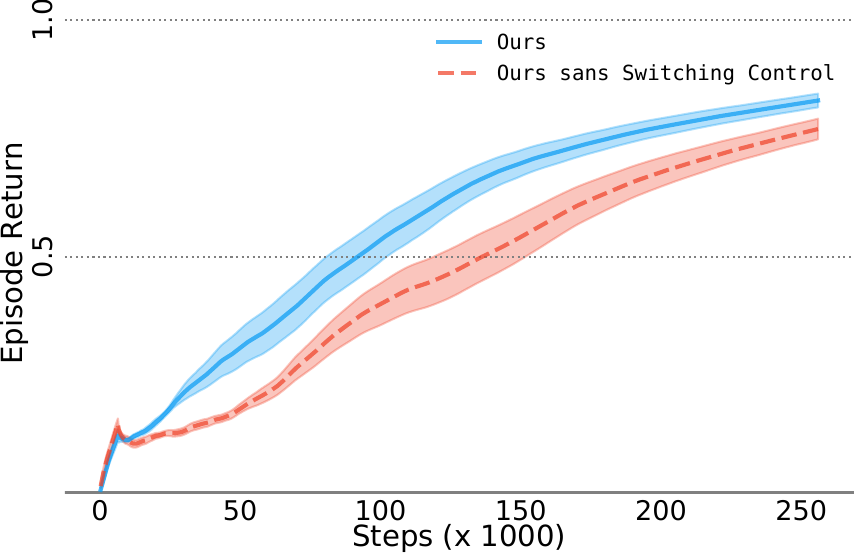}
        \caption{Ablating Switching Controls}
    % \begin{subfigure}[b]{0.33\columnwidth}
    % \includegraphics[width=\columnwidth]{Taher_figures/}
    % \caption{Ablating {\fontfamily{cmss}\selectfont Shaper} Options}
    % \end{subfigure}
    % \hfill
    \label{Figure:Ablation_Studies}
\end{figure}

% \subsection*{Ablation Study 5: Termination Probabilities}

% \section{Robustness to choices of $\phi$ Parameters}
% To demonstrate the robustness of $\phi$ to different choices of weight parameters, we conducted a study with 3 sets of randomly sampled values of weight parameters for the feed forward NN that constructs the $\phi$ function (c.f. Sec. \ref{sec:app_imp_details}). We conducted the Ablation study in the environment in Experiment 1 presented in Sec. \ref{Section:Experiments}.

% As is shown in Fig. \ref{fig:count_based_comparison}, the performance across all 3 values is very comparable demonstrating the robustness of ROSA to different values of weight parameters for the $\phi$ function.

% \begin{figure}[h!]
%     \centering
%         \includegraphics[width=0.55\textwidth]{Rebuttal_Figures/ROSA_disaggregated.pdf}
%     \caption{Performance of ROSA compared with different values of $\phi$ parameters.}
%     \label{fig:count_based_comparison}
% \end{figure}

% \newpage

\normalsize
\clearpage
\section{Notation \& Assumptions}\label{sec:notation_appendix}

We assume that $\mathcal{S}$ is defined on a probability space $(\Omega,\mathcal{F},\mathbb{P})$ and any $s\in\mathcal{S}$ is measurable with respect
to the Borel $\sigma$-algebra associated with $\mathbb{R}^p$. We denote the $\sigma$-algebra of events generated by $\{s_t\}_{t\geq 0}$
by $\mathcal{F}_t\subset \mathcal{F}$. In what follows, we denote by $\left( \mathcal{V},\|\|\right)$ any finite normed vector space and by $\mathcal{H}$ the set of all measurable functions. 
% We prove our results for the general case in which $F:\mathbb{N}\times\mathcal{B}\times\mathbb{N}\times\mathcal{B}\to\mathbb{R}$. For this we employ the following shorthands $b_t:=(t,a^2_t)\in\mathbb{N}\times\mathcal{B}$ and $\times_{i=1}^2\mathcal{A}^i\equiv\boldsymbol{\mathcal{A}}$.

The results of the paper are built under the following assumptions which are standard within RL and stochastic approximation methods:

\textbf{Assumption 1}
The stochastic process governing the system dynamics is ergodic, that is  the process is stationary and every invariant random variable of $\{s_t\}_{t\geq 0}$ is equal to a constant with probability $1$.

\textbf{Assumption 2}
The constituent functions of the players' objectives $R$, $F$ and $L$ are in $L_2$.

\textbf{Assumption 3}
For any positive scalar $c$, there exists a scalar $\mu_c$ such that for all $s\in\mathcal{S}$ and for any $t\in\mathbb{N}$ we have: $
    \mathbb{E}\left[1+\|s_t\|^c|s_0=s\right]\leq \mu_c(1+\|s\|^c)$.

\textbf{Assumption 4}
There exists scalars $C_1$ and $c_1$ such that for any function $J$ satisfying $|J(s)|\leq C_2(1+\|s\|^{c_2})$ for some scalars $c_2$ and $C_2$ we have that: $
    \sum_{t=0}^\infty\left|\mathbb{E}\left[J(s_t)|s_0=s\right]-\mathbb{E}[J(s_0)]\right|\leq C_1C_2(1+\|s_t\|^{c_1c_2})$.

\textbf{Assumption 5}
There exists scalars $c$ and $C$ such that for any $s\in\mathcal{S}$ we have that: $
    |J(z,\cdot)|\leq C(1+\|z\|^c)$ for $J\in \{R,F,L\}$.
    
We also make the following finiteness assumption on set of switching control policies for the {\fontfamily{cmss}\selectfont Shaper}:

\textbf{Assumption 6}
For any policy $\mathfrak{g}_c$, the total number of interventions is given by $K<\infty$.

% We lastly make the following assumption on $L$ which can be made true by construction:

\textbf{Assumption 7}
Let $n(s)$ be the state visitation count for a given state $s\in\mathcal{S}$. For any $a\in\mathcal{A}$, the function $L(s)= 0$ for any $n(s)\geq M$ where $0<M\leq \infty$.
\clearpage
\section{Proof of Technical Results}\label{sec:proofs_appendix}

We begin the analysis with some preliminary lemmata and definitions which are useful for proving the main results.

\begin{definition}{A.1}
An operator $T: \mathcal{V}\to \mathcal{V}$ is a \textbf{contraction} with respect to a norm $\|\cdot\|$ if there exists some constant $c\in[0,1[$ such that for any $V_1,V_2\in  \mathcal{V}$ the following inequality holds:
\begin{align}
    \|TV_1-TV_2\|\leq c\|V_1-V_2\|.
\end{align}
\end{definition}

\begin{definition}{A.2}
An operator $T: \mathcal{V}\to  \mathcal{V}$ is \textbf{non-expansive} if $\forall V_1,V_2\in  \mathcal{V}$ we have:
\begin{align}
    \|TV_1-TV_2\|\leq \|V_1-V_2\|.
\end{align}
\end{definition}
% \begin{definition}{A.3}
% The \textbf{residual} of a vector $V\in \mathcal{V}$ w.r.t the operator $T: \mathcal{V}\to  \mathcal{V}$ is:
% \begin{align}
%     \epsilon_T(V):= \|TV-V\|.
% \end{align}
% \end{definition}

\begin{lemma} \label{max_lemma}
For any
$f: \mathcal{V}\to\mathbb{R},g: \mathcal{V}\to\mathbb{R}$, we have that:
\begin{align}
\left\|\underset{a\in \mathcal{V}}{\max}\:f(a)-\underset{a\in \mathcal{V}}{\max}\: g(a)\right\| \leq \underset{a\in \mathcal{V}}{\max}\: \left\|f(a)-g(a)\right\|.    \label{lemma_1_basic_max_ineq}
\end{align}
\end{lemma}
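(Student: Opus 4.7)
The plan is to give the standard ``max-of-differences dominates difference-of-maxes'' argument, which is a routine exercise but requires a symmetry step. Since the statement involves a generic norm $\|\cdot\|$ applied to a scalar, I will read this as the absolute value $|\cdot|$ on $\R$ (the natural reading, since $\max f$ and $\max g$ are real numbers), so the inequality reduces to
\[
\Big|\max_{a\in\cV} f(a) - \max_{a\in\cV} g(a)\Big| \;\leq\; \max_{a\in\cV} |f(a) - g(a)|.
\]

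First I would pick maximizers $a_f^\star \in \arg\max_{a\in\cV} f(a)$ and $a_g^\star \in \arg\max_{a\in\cV} g(a)$ (existence can be assumed, or the argument can be repeated with an $\varepsilon$-approximate maximizer and $\varepsilon\to 0$). Without loss of generality assume $\max_a f(a) \geq \max_a g(a)$; the opposite case is handled by swapping the roles of $f$ and $g$. Then
\[
0 \;\leq\; \max_{a} f(a) - \max_{a} g(a) \;=\; f(a_f^\star) - g(a_g^\star) \;\leq\; f(a_f^\star) - g(a_f^\star),
\]
where the last inequality uses $g(a_g^\star) \geq g(a_f^\star)$ by definition of $a_g^\star$. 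Since the right-hand side is non-negative by the chain of inequalities, it equals $|f(a_f^\star) - g(a_f^\star)|$, which is in turn $\leq \max_{a\in\cV} |f(a) - g(a)|$. Combining the symmetric case gives the bound on the absolute value of the difference, which is the desired conclusion.

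There is no real obstacle here; the only subtlety is the symmetry step (the ``WLOG''), which must be handled carefully since the inner expression $\max_a f(a) - \max_a g(a)$ need not be non-negative. I would resolve it by writing out both cases explicitly, or equivalently by noting that the bound on the right-hand side is symmetric in $f,g$ and applying the one-sided argument twice. If one prefers to avoid attaining maximizers (e.g.\ if $\cV$ is not compact or $f,g$ are not upper semicontinuous), the same argument goes through with $\varepsilon$-maximizers $a_f^\varepsilon$ satisfying $f(a_f^\varepsilon) \geq \max_a f(a) - \varepsilon$ and then letting $\varepsilon\downarrow 0$.
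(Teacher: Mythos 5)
Your proof is correct, but it takes a slightly different route from the paper's. You bound the one-sided difference by evaluating at explicit maximizers $a_f^\star, a_g^\star$ and using $g(a_g^\star)\geq g(a_f^\star)$, then symmetrize; the paper instead starts from the pointwise inequality $f(a)\leq \left\|f(a)-g(a)\right\|+g(a)$, takes the max of both sides, uses subadditivity of the max over a sum, subtracts $\max_a g(a)$, and then swaps the roles of $f$ and $g$. The two arguments are equally elementary and both hinge on a one-sided bound plus symmetry, but the paper's version never requires the maxima to be attained, which matters here since $\mathcal{V}$ is a (non-compact) normed space and no semicontinuity is assumed; you correctly flag this issue and patch it with $\varepsilon$-maximizers, whereas the paper's pointwise-inequality argument sidesteps it entirely (and transfers verbatim to suprema). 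Your reading of the norm on a scalar as the absolute value is also the intended one.
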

\begin{proof}
We provide the straightforward proof of  the result given in \cite{mguni2019cutting}:
\begin{align}
f(a)&\leq \left\|f(a)-g(a)\right\|+g(a)\label{max_inequality_proof_start}
\\\implies
\underset{a\in \mathcal{V}}{\max}f(a)&\leq \underset{a\in \mathcal{V}}{\max}\{\left\|f(a)-g(a)\right\|+g(a)\}
\leq \underset{a\in \mathcal{V}}{\max}\left\|f(a)-g(a)\right\|+\underset{a\in \mathcal{V}}{\max}\;g(a). \label{max_inequality}
\end{align}
Subtracting $\underset{a\in \mathcal{V}}{\max}\;g(a)$ from both sides of (\ref{max_inequality}) gives:
\begin{align}
    \underset{a\in \mathcal{V}}{\max}f(a)-\underset{a\in \mathcal{V}}{\max}g(a)\leq \underset{a\in \mathcal{V}}{\max}\left\|f(a)-g(a)\right\|.\label{max_inequality_result_last}
\end{align}
After reversing the roles of $f$ and $g$ and performing identical steps (\ref{max_inequality_proof_start}) - (\ref{max_inequality}), we derive the desired result since the RHS of (\ref{max_inequality_result_last}) is unchanged.
\end{proof}

\begin{lemma}{A.4}\label{non_expansive_P}
The probability transition kernel $P$ is non-expansive, that is we have that:
\begin{align}
    \|PV_1-PV_2\|\leq \|V_1-V_2\|.
\end{align}
\end{lemma} 
\begin{proof}
This is a well-known result \cite{tsitsiklis1999optimal}. We state a proof using the Tonelli-Fubini theorem and the iterated law of expectations. Indeed, we observe that:
\begin{align*}
&\|PJ\|^2=\mathbb{E}\left[(PJ)^2[s_0]\right]=\mathbb{E}\left(\left[\mathbb{E}\left[J[s_1]|s_0\right]\right)^2\right]
\leq \mathbb{E}\left[\mathbb{E}\left[J^2[s_1]|s_0\right]\right] 
= \mathbb{E}\left[J^2[s_1]\right]=\|J\|^2,
\end{align*}
where we have used Jensen's inequality. This completes the proof.
\end{proof}

\section*{Proof of Theorem \ref{main_theorem}}
\begin{proof}

\subsection*{Proof of Proposition \ref{invariance_prop}}
\begin{proof}[Proof of Prop \ref{invariance_prop}]
To prove (i) of the proposition it suffices to prove that the term $\sum_{t=0}^T\gamma^{t}F(\theta_t,\theta_{t-1})I(t)$ converges to $0$ in the limit as $T\to \infty$. As in classic potential-based reward shaping \cite{ng1999policy}, central to this observation is the telescoping sum that emerges by construction of $F$.

First recall $\tilde{v}^{\pi,\pi^2}(s,I_0)$, for any $(s,I_0)\in\mathcal{S}\times\{0,1\}$ is given by:
\begin{align}
&\tilde{v}^{\pi,\pi^2}(s,I_0)=\mathbb{E}_{\pi,\pi^2}\left[\sum_{t=0}^\infty \gamma^t\left\{R(s_t,a_t)+\hat{F}(a^2_t,a^2_{t-1})I_t\right\}\right]
\\&=\mathbb{E}_{\pi,\pi^2}\left[\sum_{t=0}^\infty \gamma^tR(s_t,a_t)+\sum_{t=0}^\infty \gamma^t\hat{F}(a^2_t,a^2_{t-1})I_t\right]
\\&=\mathbb{E}_{\pi,\pi^2}\left[\sum_{t=0}^\infty \gamma^tR(s_t,a_t)\right]+\mathbb{E}_{\pi,\pi^2}\left[\sum_{t=0}^\infty \gamma^t\hat{F}(a^2_t,a^2_{t-1}))I_t\right].
\end{align}
where $I_t\equiv I(t)$ for any $t=0,1\ldots$.

Hence it suffices to prove that $\mathbb{E}_{\pi,\pi^2}\left[\sum_{t=0}^\infty \gamma^t\hat{F}(a^2_t,a^2_{t-1}))I_t\right]=0$.

Recall there a number of time steps that elapse between $\tau_k$ and $\tau_{k+1}$, now
\begin{align*}
&\sum_{t=0}^\infty\gamma^{t}\hat{F}(a^2_t,a^2_{t-1}))I(t)
\\&=\sum_{t=\tau_1+1}^{\tau_2}\gamma^{t}a^2_t-\gamma^{t-1}a^2_{t-1}+\gamma^{\tau_1}a^2_{\tau_1} +\sum_{t=\tau_3+1}^{\tau_4}\gamma^{t}a^2_t-\gamma^{t-1}a^2_{t-1}+\gamma^{\tau_3}a^2_{\tau_3}
\\&\quad+\ldots+ \sum_{t=\tau_{(2k-1)}+1}^{\tau_{2k
}}\gamma^{t}a^2_t-\gamma^{t-1}a^2_{t-1}+\gamma^{\tau_1}a^2_{\tau_{2k+1}}+\ldots+
\\&=\sum_{t=\tau_1}^{\tau_2-1}\gamma^{t+1}a^2_{t+1}-\gamma^{t}a^2_{t}+\gamma^{\tau_1}a^2_{\tau_1}+\sum_{t=\tau_3}^{\tau_4-1}\gamma^{t+1}a^2_{t+1}-\gamma^{t}a^2_{t}+\gamma^{\tau_3}a^2_{\tau_3}
\\&\quad+\ldots+ \sum_{t=\tau_{(2k-1)}}^{\tau_{2K-1}}\gamma^{t}a^2_t-\gamma^{t-1}a^2_{t-1}+\gamma^{\tau_{2k-1}}a^2_{\tau_{2k-1}}+\ldots+
\\&=\sum_{k=1}^\infty\sum_{t=\tau_{2k-1}}^{\tau_{2K-1}}\gamma^{t+1}a^2_{t+1}-\gamma^{t}a^2_{t}-\sum_{k=1}^\infty\gamma^{\tau_{2k-1}}a^2_{\tau_{2k-1}}
\\&=\sum_{k=1}^\infty\gamma^{\tau_{2k}}a^2_{\tau_{2k}}-\sum_{k=1}^\infty\gamma^{\tau_{2k-1}}a^2_{\tau_{2k-1}}
\\&=\sum_{k=1}^\infty\gamma^{\tau_{2k}}0-\sum_{k=1}^\infty\gamma^{\tau_{2k-1}}0=0,
\end{align*}
where we have used the fact that by construction  $a^2_t\equiv 0$ whenever $t=\tau_1,\tau_2,\ldots$.
% and by construction $\phi(s,0)\equiv 0$ for any $s$. 

% The summation performed above for any $\tau_k$'s will lead to the same result and everywhere else $a^2_t=0$.

In what follows, for any state $s\in\cS$, we denote by $L_n(s)$ the value of $L(s)$ when the state $s$ has been visited $n=0,1,\ldots$ times.

We now note that it is easy to see that $v^{\pi,\pi^2}_2(s_0,I_0)$ is bounded above, indeed using the above we have that

\begin{align}
v^{\pi,\pi^2}_2(s_0,I_0)&=\mathbb{E}_{\pi,\pi^2}\left[ \sum_{t=0}^\infty \gamma^t\left(\hat{R} -\sum_{k\geq 1} \delta^t_{\tau_{2k-1}}
+L_n(s_t)\right)\right]
\\&=\mathbb{E}_{\pi,\pi^2}\left[ \sum_{t=0}^\infty \gamma^t\left({R} -\sum_{k\geq 1} \delta^t_{\tau_{2k-1}}
+L_n(s_t)\right)+\sum_{t=0}^\infty \gamma^t\hat{F}I_t\right]
\\&\leq \mathbb{E}_{\pi,\pi^2}\left[ \sum_{t=0}^\infty \gamma^t\left({R} +L_n(s_t)\right)\right]
\\&\leq \left|\mathbb{E}_{\pi,\pi^2}\left[ \sum_{t=0}^\infty \gamma^t\left({R}+L_n(s_t)\right)\right]\right|
\\&\leq \mathbb{E}_{\pi,\pi^2}\left[ \sum_{t=0}^\infty \gamma^t\left\|{R} +
L_n\right\|\right]
\\&\leq  \sum_{t=0}^\infty \gamma^t\left(\left\|{R}\right\| +\left\|
L_n\right\|\right)
\\&=\frac{1}{1-\gamma}\left(\left\|{R}\right\| +\left\|
L\right\|\right),
\end{align}
using the triangle inequality, the definition of $\hat{R}$ and the (upper-)boundedness of $L$ and $R$ (Assumption 5).
We now note that by the dominated convergence theorem we have that $\forall (s_0,I_0)\in\mathcal{S}\times\{0,1\}$ that
\begin{align}
&\underset{n\to\infty}{\lim}\; v^{\pi,\pi^2}_2(s_0,I_0)  = \underset{n\to\infty}{\lim}\; \mathbb{E}_{\pi,\pi^2}\left[ \sum_{t=0}^\infty \gamma^t\left(\hat{R} -\sum_{k\geq 1} \delta^t_{\tau_{2k-1}}
+L_n(s_t)\right)\right]
\\&=\mathbb{E}_{\pi,\pi^2}\underset{n\to\infty}{\lim}\;\left[ \sum_{t=0}^\infty \gamma^t\left(\hat{R} -\sum_{k\geq 1} \delta^t_{\tau_{2k-1}}
+L_n(s_t)\right)\right]
\\&=\mathbb{E}_{\pi,\pi^2}\left[ \sum_{t=0}^\infty \gamma^t\left(\hat{R} -\sum_{k\geq 1} \delta^t_{\tau_{2k-1}}\right)\right]
\\&=\mathbb{E}_{\pi,\pi^2}\left[ \sum_{t=0}^\infty \gamma^t\left(R -\sum_{k\geq 1} \delta^t_{\tau_{2k-1}}\right)\right]=-\frac{K}{1-\gamma}+v^{\pi}(s_0),
\end{align}
using Assumption 6 in the last step, after which we deduce (i).

To deduce (ii) we simply note that $v^{\pi,\pi^2}_2(s_0,I_0)$ and $v^{\pi}(s_0)$ differ by only a constant and hence share the same optimisation.

\end{proof}

% In general $a^2_t$ need not be $0$ but similar to the above, in our setup the intermediate terms $\gammaa^2_t-a^2_{t-1}$ when $a^2_t\neq 0$ will cancel when we perform the summation  so we only need to care about the terms at the end points. For the sake of completion we demonstrate this point:

% $\sum_{t=\tau_{k+1}}^{\tau_{k+1}}\gamma^{t}a^2_t-\gamma^{t-1}a^2_{t-1}$
% $=\left(\gamma^{\tau_{k}+1}\phi(s_{\tau_{k}+1},a^2_{\tau_{k}+1})-\gamma^{\tau_k}\phi(s_{\tau_k},a^2_{\tau_k})\right)+\left(\gamma^{\tau_{k}+2}\phi(s_{\tau_{k}+2},a^2_{\tau_{k}+2})-\gamma^{\tau_{k}+1}\phi(s_{\tau_{k}+1},a^2_{\tau_{k}+1})\right)+\ldots$
% $=\gamma^{\tau_{k+1}}\phi(s_{\tau_{k+1}},a^2_{\tau_{k+1}})-\gamma^{\tau_k}\phi(s_{\tau_k},a^2_{\tau_k})$

% Therefore for any $s$, $\tilde{v}^{\pi,\pi^2}(s)\geq v^{\pi}_1(s)$ indeed represents a performance improvement.  

\section*{Proof of Theorem \ref{theorem:existence}}
\begin{proof}
Theorem \ref{theorem:existence} is proved by firstly showing that when the players jointly maximise the same objective there exists a fixed point equilibrium of the game when all players use Markov policies and {\fontfamily{cmss}\selectfont Shaper} uses switching control. The proof then proceeds by showing that the MG $\mathcal{G}$ admits a dual representation as an MG in which jointly maximise the same objective which has a stable point that can be computed by solving an MDP. Thereafter, we use both results to prove the existence of a fixed point for the game as a limit point of a sequence generated by successively applying the Bellman operator to a test function.  

Therefore, the scheme of the proof is summarised with the following steps:
\begin{itemize}
    \item[\textbf{I)}] Prove that the solution to Markov Team games (that is games in which both players maximise \textit{identical objectives}) in which one of the players uses switching control is the limit point of a sequence of Bellman operators (acting on some test function).
    \item[\textbf{II)}] Prove that for the MG $\mathcal{G}$ that is there exists a function $B^{\pi,\pi^2}:\mathcal{S}\times \{0,1\}\to \mathbb{R}$ such that\footnote{This property is analogous to  the condition in Markov potential games \cite{macua2018learning,mguni2021learning}} $
 v^{\pi,\pi^2}_i(z)-v^{{\pi'},\pi^2}_i(z)
=B^{\pi,\pi^2}(z)-B^{{\pi'},\pi^2}(z),\;\;\forall z\equiv (s,I_0)\in\mathcal{S}\times \{0,1\}, \forall i\in\{1,2\}$.
    \item[\textbf{III)}] Prove that the MG $\mathcal{G}$ has a dual representation as a \textit{Markov Team Game} which admits a representation as an MDP.
\end{itemize}

\subsection*{Proof of Part \textbf{I}}
We begin by defining some objects which are central to the analysis. 
% For any function  $\psi:\mathcal{S}\times\mathcal{A}\to\mathbb{R}$, 
For any $\pi\in\Pi$ and $\pi^2\in\Pi^2$, given a function $V^{\pi,\pi^2}:\mathcal{S}\times\mathbb{N}\to\mathbb{R}$, we define the \textit{intervention operator} $\mathcal{M}^{\pi,\pi^2}$ by 
\begin{align}
\mathcal{M}^{\pi,\pi^2}V^{\pi,\pi^2}(s_{\tau_k},I(\tau_k)):=\hat{R}_1(s_{\tau_k},I(\tau_k),a_{\tau_k},a^2_{\tau_k},\cdot)-1+\gamma\sum_{s'\in\mathcal{S}}P(s';a_{\tau_k},s)V^{\pi,\pi^2}(s',I(\tau_{k+1}))
\end{align}
for any $s_{\tau_k}\in\mathcal{S}$ and $\forall \tau_k$ where $a_{\tau_k}\sim \pi(\cdot|s_{\tau_k})$ and where $a^2_{\tau_k}\sim  \pi^2(\cdot|s_{\tau_k})$. 

We define the Bellman operator $T$ of the game $\mathcal{G}$  by 
\begin{align}
T V^{\pi,\pi^2}(s_{\tau_k},I(\tau_k)):=\max\Big\{\mathcal{M}^{\pi,\pi^2}V(s_{\tau_k},I(\tau_k)),  R(s_{\tau_k},a)+\gamma\underset{a\in\mathcal{A}}{\max}\;\sum_{s'\in\mathcal{S}}P(s';a,s_{\tau_k})V(s',I(\tau_k))\Big\}. 
\end{align}

Our first result proves that the operator  $T$ is a contraction operator. First let us recall that the \textit{switching time} $\tau_k$ is defined recursively $\tau_k=\inf\{t>\tau_{k-1}|s_t\in A,\tau_k\in\mathcal{F}_t\}$ where $A=\{s\in \mathcal{S},m\in M|\mathfrak{g}_2(m|s_t)>0\}$.
To this end, we show that the following bounds holds:
\begin{lemma}\label{lemma:bellman_contraction}
The Bellman operator $T$ is a contraction, that is the following bound holds:
\begin{align*}
&\left\|Tv-Tv'\right\|\leq \gamma\left\|v-v'\right\|.
\end{align*}
for any $v\in L_2$.
\end{lemma}

\begin{proof}
Recall we define the Bellman operator $T$ of $\mathcal{G}$ acting on a function $v:\mathcal{S}\times\mathbb{N}\to\mathbb{R}$ by
\begin{align}
T v(s_{\tau_k},I(\tau_k)):=\max\left\{\mathcal{M}^{\pi,\pi^2}v(s_{\tau_k},I(\tau_k)),\left[ R(s_{\tau_k},{a})+\gamma\underset{{a}\in{\mathcal{A}}}{\max}\;\sum_{s'\in\mathcal{S}}P(s';{a},s_{\tau_k})v(s',I(\tau_k))\right]\right\}\label{bellman_proof_start}
\end{align}

In what follows and for the remainder of the script, we employ the following shorthands:
\begin{align*}
&\mathcal{P}^{{a}}_{ss'}=:\sum_{s'\in\mathcal{S}}P(s';{a},s), \quad\mathcal{P}^{{\pi}}_{ss'}=:\sum_{{a}\in{\mathcal{A}}}{\pi}({a}|s)\mathcal{P}^{{a}}_{ss'}, \quad \mathcal{R}^{{\pi}}(z_{t}):=\sum_{{a}_t\in{\mathcal{A}}}{\pi}({a}_t|s)\hat{R}(z_t,{a}_t,\theta_t,\theta_{t-1})
\end{align*}

To prove that $T$ is a contraction, we consider the three cases produced by \eqref{bellman_proof_start}, that is to say we prove the following statements:

i) $\qquad\qquad
\left| \Theta(z_t,{a},a^2_t,a^2_{t-1})+\gamma\underset{{a}\in{\mathcal{A}}}{\max}\;\mathcal{P}^{{a}}_{s's_t}v(s',\cdot)-\left( \Theta(z_t,{a},a^2_t,a^2_{t-1})+\gamma\underset{{a}\in{\mathcal{A}}}{\max}\;\mathcal{P}^{{a}}_{s's_t}v'(s',\cdot)\right)\right|\leq \gamma\left\|v-v'\right\|$

ii) $\qquad\qquad
\left\|\mathcal{M}^{\pi,\pi^2}v-\mathcal{M}^{\pi,\pi^2}v'\right\|\leq    \gamma\left\|v-v'\right\|,\qquad \qquad$
  (and hence $\mathcal{M}$ is a contraction).

iii) $\qquad\qquad
    \left\|\mathcal{M}^{\pi,\pi^2}v-\left[ \Theta(\cdot,{a})+\gamma\underset{{a}\in{\mathcal{A}}}{\max}\;\mathcal{P}^{{a}}v'\right]\right\|\leq \gamma\left\|v-v'\right\|.
$
where $z_t\equiv (s_t,I_t)\in\mathcal{S}\times \{0,1\}$.

We begin by proving i).

Indeed, for any ${a}\in{\mathcal{A}}$ and $\forall z_t\in\mathcal{S}\times\{0,1\}, \forall \theta_t,\theta_{t-1}\in \Theta, \forall s'\in\mathcal{S}$ we have that 
\begin{align*}
&\left| \Theta(z_t,{a},a^2_t,a^2_{t-1})+\gamma\mathcal{P}^\pi_{s's_t}v(s',\cdot)-\left[ \Theta(z_t,{a},a^2_t,a^2_{t-1})+\gamma\underset{{a}\in{\mathcal{A}}}{\max}\;\;\mathcal{P}^{{a}}_{s's_t}v'(s',\cdot)\right]\right|
\\&\leq \underset{{a}\in{\mathcal{A}}}{\max}\;\left|\gamma\mathcal{P}^{{a}}_{s's_t}v(s',\cdot)-\gamma\mathcal{P}^{{a}}_{s's_t}v'(s',\cdot)\right|
\\&\leq \gamma\left\|Pv-Pv'\right\|
\\&\leq \gamma\left\|v-v'\right\|,
\end{align*}
again using the fact that $P$ is non-expansive and Lemma \ref{max_lemma}.

We now prove ii).

For any $\tau\in\mathcal{F}$, define by $\tau'=\inf\{t>\tau|s_t\in A,\tau\in\mathcal{F}_t\}$. Now using the definition of $\mathcal{M}$ we have that for any $s_\tau\in\mathcal{S}$
\begin{align*}
&\left|(\mathcal{M}^{\pi,\pi^2}v-\mathcal{M}^{\pi,\pi^2}v')(s_{\tau},I(\tau))\right|
\\&\leq \underset{{a}_\tau,a^2_\tau,a^2_{\tau-1}\in {\mathcal{A}}\times \Theta^2}{\max}    \Bigg|\Theta(z_\tau,{a}_\tau,a^2_\tau,a^2_{\tau-1})-1+\gamma\mathcal{P}^{{\pi}}_{s's_\tau}\mathcal{P}^{{a}}v(s_{\tau},I(\tau'))
\\&\qquad\qquad-\left(\Theta(z_\tau,{a}_\tau,a^2_\tau,a^2_{\tau-1})-1+\gamma\mathcal{P}^{{\pi}}_{s's_\tau}\mathcal{P}^{{a}}v'(s_{\tau},I(\tau'))\right)\Bigg| 
\\&= \gamma\left|\mathcal{P}^{{\pi}}_{s's_\tau}\mathcal{P}^{{a}}v(s_{\tau},I(\tau'))-\mathcal{P}^{{\pi}}_{s's_\tau}\mathcal{P}^{{a}}v'(s_{\tau},I(\tau'))\right| 
\\&\leq \gamma\left\|Pv-Pv'\right\|
\\&\leq \gamma\left\|v-v'\right\|,
\end{align*}
using the fact that $P$ is non-expansive. The result can then be deduced easily by applying max on both sides.

We now prove iii). We split the proof of the statement into two cases:

\textbf{Case 1:} 
\begin{align}\mathcal{M}^{\pi,\pi^2}v(s_{\tau},I(\tau))-\left(\Theta(z_\tau,{a}_\tau,a^2_{\tau},a^2_{\tau-1})+\gamma\underset{{a}\in{\mathcal{A}}}{\max}\;\mathcal{P}^{{a}}_{s's_\tau}v'(s',I(\tau))\right)<0.
\end{align}

We now observe the following:
\begin{align*}
&\mathcal{M}^{\pi,\pi^2}v(s_{\tau},I(\tau))-\Theta(z_\tau,{a}_\tau,a^2_{\tau},a^2_{\tau-1})+\gamma\underset{{a}\in{\mathcal{A}}}{\max}\;\mathcal{P}^{{a}}_{s's_\tau}v'(s',I(\tau))
\\&\leq\max\left\{\Theta(z_\tau,{a}_\tau,a^2_{\tau},a^2_{\tau-1})+\gamma\mathcal{P}^{{\pi}}_{s's_\tau}\mathcal{P}^{{a}}v(s',I({\tau})),\mathcal{M}^{\pi,\pi^2}v(s_{\tau},I(\tau))\right\}
\\&\qquad-\Theta(z_\tau,{a}_\tau,a^2_{\tau},a^2_{\tau-1})+\gamma\underset{{a}\in{\mathcal{A}}}{\max}\;\mathcal{P}^{{a}}_{s's_\tau}v'(s',I(\tau))
\\&\leq \Bigg|\max\left\{\Theta(z_\tau,{a}_\tau,a^2_{\tau},a^2_{\tau-1})+\gamma\mathcal{P}^{{\pi}}_{s's_\tau}\mathcal{P}^{{a}}v(s',I({\tau})),\mathcal{M}^{\pi,\pi^2}v(s_{\tau},I(\tau))\right\}
\\&\qquad-\max\left\{\Theta(z_\tau,{a}_\tau,a^2_{\tau},a^2_{\tau-1})+\gamma\underset{{a}\in{\mathcal{A}}}{\max}\;\mathcal{P}^{{a}}_{s's_\tau}v'(s',I({\tau})),\mathcal{M}^{\pi,\pi^2}v(s_{\tau},I(\tau))\right\}
\\&+\max\left\{\Theta(z_\tau,{a}_\tau,a^2_{\tau},a^2_{\tau-1})+\gamma\underset{{a}\in{\mathcal{A}}}{\max}\;\mathcal{P}^{{a}}_{s's_\tau}v'(s',I({\tau})),\mathcal{M}^{\pi,\pi^2}v(s_{\tau},I(\tau))\right\}
\\&\qquad-\Theta(z_\tau,{a}_\tau,a^2_{\tau},a^2_{\tau-1})+\gamma\underset{{a}\in{\mathcal{A}}}{\max}\;\mathcal{P}^{{a}}_{s's_\tau}v'(s',I(\tau))\Bigg|
\\&\leq \Bigg|\max\left\{\Theta(z_\tau,{a}_\tau,a^2_{\tau},a^2_{\tau-1})+\gamma\underset{{a}\in{\mathcal{A}}}{\max}\;\mathcal{P}^{{a}}_{s's_\tau}v(s',I({\tau})),\mathcal{M}^{\pi,\pi^2}v(s_{\tau},I(\tau))\right\}
\\&\qquad-\max\left\{\Theta(z_\tau,{a}_\tau,a^2_{\tau},a^2_{\tau-1})+\gamma\underset{{a}\in{\mathcal{A}}}{\max}\;\mathcal{P}^{{a}}_{s's_\tau}v'(s',I({\tau})),\mathcal{M}^{\pi,\pi^2}v(s_{\tau},I(\tau))\right\}\Bigg|
\\&\qquad+\Bigg|\max\left\{\Theta(z_\tau,{a}_\tau,a^2_{\tau},a^2_{\tau-1})+\gamma\underset{{a}\in{\mathcal{A}}}{\max}\;\mathcal{P}^{{a}}_{s's_\tau}v'(s',I({\tau})),\mathcal{M}^{\pi,\pi^2}v(s_{\tau},I(\tau))\right\}\\&\qquad\qquad-\Theta(z_\tau,{a}_\tau,a^2_{\tau},a^2_{\tau-1})+\gamma\underset{{a}\in{\mathcal{A}}}{\max}\;\mathcal{P}^{{a}}_{s's_\tau}v'(s',I(\tau))\Bigg|
\\&\leq \gamma\underset{a\in\mathcal{A}}{\max}\;\left|\mathcal{P}^{{\pi}}_{s's_\tau}\mathcal{P}^{{a}}v(s',I(\tau))-\mathcal{P}^{{\pi}}_{s's_\tau}\mathcal{P}^{{a}}v'(s',I(\tau))\right|
\\&\qquad+\left|\max\left\{0,\mathcal{M}^{\pi,\pi^2}v(s_{\tau},I(\tau))-\left(\Theta(z_\tau,{a}_\tau,a^2_{\tau},a^2_{\tau-1})+\gamma\underset{{a}\in{\mathcal{A}}}{\max}\;\mathcal{P}^{{a}}_{s's_\tau}v'(s',I(\tau))\right)\right\}\right|
\\&\leq \gamma\left\|Pv-Pv'\right\|
\\&\leq \gamma\|v-v'\|,
\end{align*}
where we have used the fact that for any scalars $a,b,c$ we have that $
    \left|\max\{a,b\}-\max\{b,c\}\right|\leq \left|a-c\right|$ and the non-expansiveness of $P$.

\textbf{Case 2: }
\begin{align*}\mathcal{M}^{\pi,\pi^2}v(s_{\tau},I(\tau))-\left(\Theta(z_\tau,{a}_\tau,a^2_{\tau},a^2_{\tau-1})+\gamma\underset{{a}\in{\mathcal{A}}}{\max}\;\mathcal{P}^{{a}}_{s's_\tau}v'(s',I(\tau))\right)\geq 0.
\end{align*}

\begin{align*}
&\mathcal{M}^{\pi,\pi^2}v(s_{\tau},I(\tau))-\left(\Theta(z_\tau,{a}_\tau,a^2_{\tau},a^2_{\tau-1})+\gamma\underset{{a}\in{\mathcal{A}}}{\max}\;\mathcal{P}^{{a}}_{s's_\tau}v'(s',I(\tau))\right)
\\&\leq \mathcal{M}^{\pi,\pi^2}v(s_{\tau},I(\tau))-\left(\Theta(z_\tau,{a}_\tau,a^2_{\tau},a^2_{\tau-1})+\gamma\underset{{a}\in{\mathcal{A}}}{\max}\;\mathcal{P}^{{a}}_{s's_\tau}v'(s',I(\tau))\right)+1
\\&\leq \Theta(z_\tau,{a}_\tau,a^2_{\tau},a^2_{\tau-1})-1+\gamma\mathcal{P}^{{\pi}}_{s's_\tau}\mathcal{P}^{{a}}v(s',I(\tau'))
\\&\qquad\qquad\qquad\qquad\quad-\left(\Theta(z_\tau,{a}_\tau,a^2_{\tau},a^2_{\tau-1})-1+\gamma\underset{{a}\in{\mathcal{A}}}{\max}\;\mathcal{P}^{{a}}_{s's_\tau}v'(s',I(\tau))\right)
\\&\leq \gamma\underset{{a}\in{\mathcal{A}}}{\max}\;\left|\mathcal{P}^{{\pi}}_{s's_\tau}\mathcal{P}^{{a}}\left(v(s',I(\tau'))-v'(s',I(\tau))\right)\right|
\\&\leq \gamma\left|v(s',I(\tau'))-v'(s',I(\tau))\right|
\\&\leq \gamma\left\|v-v'\right\|,
\end{align*}
again using the fact that $P$ is non-expansive. Hence we have succeeded in showing that for any $v\in L_2$ we have that
\begin{align}
    \left\|\mathcal{M}^{\pi,\pi^2}v-\underset{{a}\in{\mathcal{A}}}{\max}\;\left[ R(\cdot,a)+\gamma\mathcal{P}^{{a}}v'\right]\right\|\leq \gamma\left\|v-v'\right\|.\label{off_M_bound_gen}
\end{align}
Gathering the results of the three cases gives the desired result. 
\end{proof}
\subsection*{Proof of Part \textbf{II}}

To prove Part \textbf{II}, we prove the following result:
\begin{proposition}\label{dpg_proposition}
For any ${\pi}\in{\Pi}$ and for any {\fontfamily{cmss}\selectfont Shaper} policy $\pi^2$, there exists a function $B^{\pi,\pi^2}:\mathcal{S}\times \{0,1\}\to \mathbb{R}$ such that
\begin{align}
 v^{\pi,\pi^2}_i(z)-v^{{\pi'},\pi^2}_i(z)
=B^{\pi,\pi^2}(z)-B^{{\pi'},\pi^2}(z),\;\;\forall z\equiv (s,I_0)\in\mathcal{S}\times \{0,1\}\label{potential_relation_proof}
\end{align}
where in particular the function $B$ is given by:
\begin{align}
B^{\pi,\pi^2}(s_0,I_0) =\mathbb{E}_{\pi,\pi^2}\left[ \sum_{t=0}^\infty \gamma^tR \right],\end{align}
for any $(s_0,I_0)\in\mathcal{S}\times \{0,1\}$.
\end{proposition}
\begin{proof}
Note that by the deduction of (ii) in Prop \ref{invariance_prop}, we may consider the following quantity for the {\fontfamily{cmss}\selectfont Shaper} expected return:
\begin{align}
    \hat{v}^{\pi,\pi^2}_2(s_0,I_0)=\mathbb{E}_{\pi,\pi^2}\left[ \sum_{t=0}^\infty \gamma^t\left(R -\sum_{k\geq 1} \delta^t_{\tau_{2k-1}}\right)\right].
\end{align}

Therefore, we immediately observe that 
\begin{align}
    \hat{v}^{\pi,\pi^2}_2(s_0,I_0)=B^{\pi,\pi^2}(s_0,I_0)-K, \;\; \forall (s_0,I_0)\in\mathcal{S}\times\{0,1\}.
\end{align}
We therefore immediately deduce that for any two {\fontfamily{cmss}\selectfont Shaper} policies $\pi^2$ and $\pi'^2$ the following expression holds $\forall (s_0,I_0)\in\mathcal{S}\times\{0,1\}$:
\begin{align}
    \hat{v}^{\pi,\pi^2}_2(s_0,I_0)-\hat{v}^{{\pi},\pi'^2}_2(s_0,I_0)=B^{\pi,\pi^2}(s_0,I_0)-B^{{\pi},\pi'^2}(s_0,I_0).
\end{align}

Our aim now is to show that the following expression holds $\forall (s_0,I_0)\in\mathcal{S}\times\{0,1\}$:
\begin{align}\nonumber
 v^{\pi,\pi^2}(I_{0},s_{0})-v^{{\pi'},\pi^2}(I_{0},s_{0})=B^{\pi,\pi^2}(I_{0},s_{0})-B^{{\pi'},\pi^2}(I_{0},s_{0}),\;\; \forall i \in\mathcal{N}
\end{align}
This is manifest from the construction of $B$.
\end{proof}

\subsection*{Proof of Part \textbf{III}}

We begin by recalling that a \textit{Markov strategy} is a policy $\pi^i: \mathcal{S} \times \mathcal{A}_i \rightarrow [0,1]$ which requires as input only the current system state (and not the game history or the other player's action or strategy \cite{mguni2018viscosity}). With this, we give a formal description of the stable points of $\mathcal{G}$ in Markov strategies. 
\begin{definition}
A policy profile $\boldsymbol{\hat{\pi}}=(\hat{\pi}^1,\hat{\pi}^2)\in\boldsymbol{\Pi}$ is a Markov perfect equilibrium (MPE) if the following holds $\forall i\neq j\in\{1,2\}, \;\forall \hat{\pi}'\in\Pi_i$: $
v_i^{(\hat{\pi}^i,\hat{\pi}^{j})}(s_0,I_0)\geq v_i^{(\hat{\pi}',\hat{\pi}^{j})}(s_0,I_0),\forall (s_0,I_0)\in \mathcal{S}\times \{0,1\}$.
\end{definition}%
The MPE describes a configuration in policies in which no player can increase their payoff by changing (unilaterally) their policy. Crucially, it defines the stable points to which independent learners converge (if they converge at all). 

\begin{proposition}\label{reduction_prop}
The following implication holds: 
\begin{align}
\boldsymbol{\sigma}\in \underset{{g',\boldsymbol{\pi'}}\in\boldsymbol{\Pi}}{\arg\sup}\; B^{g',{\boldsymbol{\pi'}}}(s)\implies \boldsymbol{\sigma}\in NE\{\mathcal{G}\}.
\end{align}
where $B$ is the function in Prop. \ref{dpg_proposition}.
\end{proposition}
Prop. \ref{reduction_prop} indicates that the game has an equivalent representation in which all agents maximise the same function and thus  play a \textit{team game}.
\begin{proof}
We do the proof by contradiction. Let $\boldsymbol{\sigma}=(\pi,\pi^2,g)\in \underset{\pi'\in\Pi,\pi^2\in \Pi^2, g'}{\arg\sup}\; B^{\pi',\pi;^2,g'}(s)$ for any $s\in\mathcal{S}$. Let us now therefore assume that $\boldsymbol{\sigma}\notin NE\{\mathcal{G}\}$, hence there exists some other policy profile $\boldsymbol{\tilde{\sigma}}=(\tilde{\pi},g)$ which contains at least one profitable deviation in policy by the {\fontfamily{cmss}\selectfont Controller} so that $\pi'\neq \pi$ and  $v^{\pi',\pi^2,g}(s)> v^{\pi,\pi^2,g}(s)$ (using the preservation of signs of integration). Prop. \ref{dpg_proposition} however implies that $B^{\pi',\pi^2,g}(s)-B^{\pi,\pi^2,g}(s)>0$ which is a contradiction since $\boldsymbol{\sigma}=(\pi,\pi^2,g)$ is a maximum of $B$.  The proof can be straightforwardly adapted to cover the case in which the deviating agent is the {\fontfamily{cmss}\selectfont Shaper} after which we deduce the desired result.
\end{proof}
\end{proof}

To prove part ii), we make use of the following result:
\begin{theorem}[Theorem 1, pg 4 in \cite{jaakkola1994convergence}]
Let $\Xi_t(s)$ be a random process that takes values in $\mathbb{R}^n$ and given by the following:
\begin{align}
    \Xi_{t+1}(s)=\left(1-\alpha_t(s)\right)\Xi_{t}(s)\alpha_t(s)L_t(s),
\end{align}
then $\Xi_t(s)$ converges to $0$ with probability $1$ under the following conditions:
\begin{itemize}
\item[i)] $0\leq \alpha_t\leq 1, \sum_t\alpha_t=\infty$ and $\sum_t\alpha_t<\infty$
\item[ii)] $\|\mathbb{E}[L_t|\mathcal{F}_t]\|\leq \gamma \|\Xi_t\|$, with $\gamma <1$;
\item[iii)] ${\rm Var}\left[L_t|\mathcal{F}_t\right]\leq c(1+\|\Xi_t\|^2)$ for some $c>0$.
\end{itemize}
\end{theorem}
\begin{proof}
To prove the result, we show (i) - (iii) hold. Condition (i) holds by choice of learning rate. It therefore remains to prove (ii) - (iii). We first prove (ii). For this, we consider our variant of the Q-learning update rule:
\begin{align*}
Q_{t+1}(s_t,I_t,a_t)=Q_{t}&(s_t,I_t,a_t)
\\&\begin{aligned}
+\alpha_t(s_t,I_t,a_t)\Bigg[\max\left\{\mathcal{M}^{\pi,\pi^2}Q(s_{\tau_k},I_{\tau_k},a), \phi(s_{\tau_k},a)+\gamma\underset{a'\in\mathcal{A}}{\max}\;Q(s',I_{\tau_k},a')\right\}&
\\-Q_{t}(s_t,I_t,a_t)\Bigg]&.
\end{aligned}
\end{align*}
After subtracting $Q^\star(s_t,I_t,a_t)$ from both sides and some manipulation we obtain that:
\begin{align*}
&\Xi_{t+1}(s_t,I_t,a_t)
\\&=(1-\alpha_t(s_t,I_t,a_t))\Xi_{t}(s_t,I_t,a_t)
\\&\begin{aligned}
\qquad\qquad\qquad\qquad\;\;+\alpha_t(s_t,I_t,a_t))\Bigg[\max\left\{\mathcal{M}^{\pi,\pi^2}Q(s_{\tau_k},I_{\tau_k},a), \phi(s_{\tau_k},a)+\gamma\underset{a'\in\mathcal{A}}{\max}\;Q(s',I_{\tau_k},a')\right\}&
\\-Q^\star(s_t,I_t,a_t)\Bigg]&,
\end{aligned}  
\end{align*}
where $\Xi_{t}(s_t,I_t,a_t):=Q_t(s_t,I_t,a_t)-Q^\star(s_t,I_t,a_t)$.

Let us now define by 
\begin{align*}
L_t(s_{\tau_k},I_{\tau_k},a):=\max\left\{\mathcal{M}^{\pi,\pi^2}Q(s_{\tau_k},I_{\tau_k},a), \phi(s_{\tau_k},a)+\gamma\underset{a'\in\mathcal{A}}{\max}\;Q(s',I_{\tau_k},a')\right\}-Q^\star(s_t,I_t,a).
\end{align*}
Then
\begin{align}
\Xi_{t+1}(s_t,I_t,a_t)=(1-\alpha_t(s_t,I_t,a_t))\Xi_{t}(s_t,I_t,a_t)+\alpha_t(s_t,I_t,a_t))\left[L_t(s_{\tau_k},a)\right].   
\end{align}

% Let $z_{\tau_k}\equiv s_{\tau_k},I(\tau_k)$
We now observe that
\begin{align}\nonumber
&\mathbb{E}\left[L_t(s_{\tau_k},I_{\tau_k},a)|\mathcal{F}_t\right]
\\&\nonumber=\sum_{s'\in\mathcal{S}}P(s';a,s_{\tau_k})\max\left\{\mathcal{M}^{\pi,\pi^2}Q(s_{\tau_k},I_{\tau_k},a), \phi(s_{\tau_k},a)+\gamma\underset{a'\in\mathcal{A}}{\max}\;Q(s',I_{\tau_k},a')\right\}-Q^\star(s_{\tau_k},a)
\\&= T_\phi Q_t(s,I_{\tau_k},a)-Q^\star(s,I_{\tau_k},a). \label{expectation_L}
\end{align}
Now, using the fixed point property that implies $Q^\star=T_\phi Q^\star$, we find that
\begin{align}\nonumber
    \mathbb{E}\left[L_t(s_{\tau_k},I_{\tau_k},a)|\mathcal{F}_t\right]&=T_\phi Q_t(s,I_{\tau_k},a)-T_\phi Q^\star(s,I_{\tau_k},a)
    \\&\leq\left\|T_\phi Q_t-T_\phi Q^\star\right\|\nonumber
    \\&\leq \gamma\left\| Q_t- Q^\star\right\|_\infty=\gamma\left\|\Xi_t\right\|_\infty.
\end{align}
using the contraction property of $T$ established in Lemma \ref{lemma:bellman_contraction}. This proves (ii).

% Define by $\hat{F}(s_{\tau_k},I(\tau_k),a):=\left(\mathcal{M}^{\pi,\pi^2}Q(s_{\tau_k},I(\tau_k)), \phi(s_{\tau_k},a)+\gamma\underset{a\in\mathcal{A}}{\max}\;\sum_{s'\in\mathcal{S}}P(s';a,s_{\tau_k})Q(s',I(\tau_k))\right)$
We now prove iii), that is
\begin{align}
    {\rm Var}\left[L_t|\mathcal{F}_t\right]\leq c(1+\|\Xi_t\|^2).
\end{align}
Now by \eqref{expectation_L} we have that
\begin{align*}
  {\rm Var}\left[L_t|\mathcal{F}_t\right]&= {\rm Var}\left[\max\left\{\mathcal{M}^{\pi,\pi^2}Q(s_{\tau_k},I_{\tau_k},a), \phi(s_{\tau_k},a)+\gamma\underset{a'\in\mathcal{A}}{\max}\;Q(s',I_{\tau_k},a')\right\}-Q^\star(s_t,I_t,a)\right]
  \\&= \mathbb{E}\Bigg[\Bigg(\max\left\{\mathcal{M}^{\pi,\pi^2}Q(s_{\tau_k},I_{\tau_k},a), \phi(s_{\tau_k},a)+\gamma\underset{a'\in\mathcal{A}}{\max}\;Q(s',I_{\tau_k},a')\right\}
  \\&\qquad\qquad\qquad\qquad\qquad\quad\quad\quad-Q^\star(s_t,I_t,a)-\left(T_\Phi Q_t(s,I_{\tau_k},a)-Q^\star(s,I_{\tau_k},a)\right)\Bigg)^2\Bigg]
      \\&= \mathbb{E}\left[\left(\max\left\{\mathcal{M}^{\pi,\pi^2}Q(s_{\tau_k},I_{\tau_k},a), \phi(s_{\tau_k},a)+\gamma\underset{a'\in\mathcal{A}}{\max}\;Q(s',I_{\tau_k},a')\right\}-T_\Phi Q_t(s,I_{\tau_k},a)\right)^2\right]
    \\&= {\rm Var}\left[\max\left\{\mathcal{M}^{\pi,\pi^2}Q(s_{\tau_k},I_{\tau_k},a), \phi(s_{\tau_k},a)+\gamma\underset{a'\in\mathcal{A}}{\max}\;Q(s',I_{\tau_k},a')\right\}-T_\Phi Q_t(s,I_{\tau_k},a))^2\right]
    \\&\leq c(1+\|\Xi_t\|^2),
\end{align*}
for some $c>0$ where the last line follows due to the boundedness of $Q$ (which follows from Assumptions 2 and 4). This concludes the proof of the Theorem.
\end{proof}
% With this, the result can also be extended to fitted Q learning using methods established in \cite{munos2008finite,antos2007fitted} see e.g. Theorem 2, pg 16 in \cite{munos2008finite}.
\clearpage
\section*{Proof of Convergence with Linear Function Approximation}
First let us recall the statement of the theorem:
\begin{customthm}{3}
ROSA converges to a limit point $r^\star$ which is the unique solution to the equation:
\begin{align}
\Pi \mathfrak{F} (\Phi r^\star)=\Phi r^\star, \qquad \text{a.e.}
\end{align}
where we recall that for any test function $v \in \mathcal{V}$, the operator $\mathfrak{F}$ is defined by $
    \mathfrak{F}v:=\Theta+\gamma P \max\{\mathcal{M}v,v\}$.

Moreover, $r^\star$ satisfies the following:
\begin{align}
    \left\|\Phi r^\star - Q^\star\right\|\leq c\left\|\Pi Q^\star-Q^\star\right\|.
\end{align}
\end{customthm}

The theorem is proven using a set of results that we now establish. To this end, we first wish to prove the following bound:    
\begin{lemma}
For any $Q\in\mathcal{V}$ we have that
\begin{align}
    \left\|\mathfrak{F}Q-Q'\right\|\leq \gamma\left\|Q-Q'\right\|,
\end{align}
so that the operator $\mathfrak{F}$ is a contraction.
\end{lemma}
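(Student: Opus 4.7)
The plan is to exploit the structure of $\mathfrak{F}$ together with the machinery already developed in the proof of Lemma~\ref{lemma:bellman_contraction}. First I would note that the statement as written contains a typo: the intended bound is $\|\mathfrak{F}Q - \mathfrak{F}Q'\| \leq \gamma\|Q-Q'\|$, since a contraction must compare $\mathfrak{F}$ applied to two inputs. With that in mind, the proof reduces to bounding $\gamma\bigl\| P\bigl(\max\{\mathcal{M}Q,Q\} - \max\{\mathcal{M}Q',Q'\}\bigr)\bigr\|$, because the $\hat{R}_1$ (or $\Theta$) terms cancel exactly when we take the difference.

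Next I would apply the non-expansiveness of $P$ (Lemma~A.4 in the appendix), which reduces the task to bounding $\bigl\|\max\{\mathcal{M}Q,Q\} - \max\{\mathcal{M}Q',Q'\}\bigr\|$. For this I would invoke the elementary inequality $|\max\{a,b\}-\max\{c,d\}| \le \max\{|a-c|,|b-d|\}$ (which is a minor adaptation of Lemma~\ref{max_lemma} and is already used repeatedly in the proof of Lemma~\ref{lemma:bellman_contraction}). This splits the problem into two pieces: controlling $\|\mathcal{M}Q - \mathcal{M}Q'\|$ and controlling $\|Q-Q'\|$.

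The first piece is handled by step (ii) of the proof of Lemma~\ref{lemma:bellman_contraction}, which establishes that $\mathcal{M}$ is itself a $\gamma$-contraction, giving $\|\mathcal{M}Q-\mathcal{M}Q'\| \le \gamma\|Q-Q'\| \le \|Q-Q'\|$. The second piece is trivially $\|Q-Q'\|$. Combining these two via the max inequality yields $\bigl\|\max\{\mathcal{M}Q,Q\} - \max\{\mathcal{M}Q',Q'\}\bigr\| \le \|Q-Q'\|$. Multiplying by $\gamma$ and using the non-expansiveness of $P$ delivers the desired bound $\|\mathfrak{F}Q-\mathfrak{F}Q'\| \le \gamma\|Q-Q'\|$.

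The only subtle point, and the one that would merit the most care, is the max-of-differences inequality when the two maxima are attained at different branches; here one must argue by cases (as in Cases 1 and 2 of the proof of Lemma~\ref{lemma:bellman_contraction}) and in the mixed case insert and subtract an intermediate $\max\{\mathcal{M}Q',Q\}$ term so that the pointwise estimate $|\max\{a,b\}-\max\{c,b\}|\le |a-c|$ can be applied cleanly. Once this bookkeeping is done, the contraction constant $\gamma$ emerges immediately from the $\gamma P$ prefactor in the definition of $\mathfrak{F}$, and the uniqueness of the fixed point $\Phi r^\star$ of $\Pi\mathfrak{F}$ then follows by Banach's fixed point theorem, with the error bound $\|\Phi r^\star - Q^\star\| \le (1-\gamma^2)^{-1/2}\|\Pi Q^\star - Q^\star\|$ obtained by the standard projected-Bellman argument of Tsitsiklis and Van Roy.
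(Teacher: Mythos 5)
Your proposal is correct and follows essentially the same route as the paper's proof: cancel the reward term, use the non-expansiveness of $P$, bound the difference of the two maxima via the elementary max inequality, and invoke the contraction property of $\mathcal{M}$ already established in the proof of Lemma \ref{lemma:bellman_contraction}. You also rightly read the statement as $\left\|\mathfrak{F}Q-\mathfrak{F}Q'\right\|\leq \gamma\left\|Q-Q'\right\|$, which is exactly what the paper's own argument proves.
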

\begin{proof}
Recall, for any test function $\Lambda\in L_2$ , a projection operator $\cP$ acting $v$ is defined by the following 
\begin{align*}
\cP \Lambda:=\underset{\bar{\Lambda}\in\{\Phi r|r\in\mathbb{R}^p\}}{\arg\min}\left\|\bar{\Lambda}-\Lambda\right\|. 
\end{align*}
Now, we first note that in the proof of Lemma \ref{lemma:bellman_contraction}, we deduced that for any $\Lambda\in L_2$ we have that
\begin{align*}
    \left\|\mathcal{M}\Lambda-\left[ R(\cdot,a)+\gamma\underset{a\in\mathcal{A}}{\max}\;\mathcal{P}^a\Lambda'\right]\right\|\leq \gamma\left\|\Lambda-\Lambda'\right\|,
\end{align*}
(c.f. Lemma \ref{lemma:bellman_contraction}). 

Setting $\Lambda=Q$ and $\psi=\Theta$, it can be straightforwardly deduced that for any $Q,\hat{Q}\in L_2$:
    $\left\|\mathcal{M}Q-\hat{Q}\right\|\leq \gamma\left\|Q-\hat{Q}\right\|$. Hence, using the contraction property of $\mathcal{M}$, we readily deduce the following bound:
\begin{align}\max\left\{\left\|\mathcal{M}Q-\hat{Q}\right\|,\left\|\mathcal{M}Q-\mathcal{M}\hat{Q}\right\|\right\}\leq \gamma\left\|Q-\hat{Q}\right\|,
\label{m_bound_q_twice}
\end{align}
    
We now observe that $\mathfrak{F}$ is a contraction. Indeed, since for any $Q,Q'\in L_2$ we have that:
% \begin{align*}
% \left\|\mathfrak{F}Q-\mathfrak{F}Q'\right\|&=\left\|\Theta+\gamma P \max\{\mathcal{M}Q,Q\}-\left(\Theta+\gamma P \max\{\mathcal{M}Q',Q'\}\right)\right\|
% \\&=\gamma \left\|P \max\{\mathcal{M}Q,Q\}-P \max\{\mathcal{M}Q',Q'\}\right\|
% \\&\leq\gamma \left\| \max\{\mathcal{M}Q,Q\}- \max\{\mathcal{M}Q',Q'\}\right\|
% \\&\leq\gamma \left\| \max\{\mathcal{M}Q,Q\}-\max\{\mathcal{M}\bar{Q},Q\}\right\|+\gamma\left\|\max\{\mathcal{M}\bar{Q},Q\}- \max\{\mathcal{M}Q',Q'\}\right\|
% \\&\leq\gamma \left\| \max\{\mathcal{M}Q,Q\}-\max\{\mathcal{M}Q',Q\}\right\|+\gamma\left\|\max\{\mathcal{M}Q',Q\}- \max\{\mathcal{M}Q',Q'\}\right\|
% \\&\leq\gamma \left\| \mathcal{M}Q,-\mathcal{M}Q'\right\|+\gamma\left\|Q- Q'\right\|
% \\&\leq\gamma(1+\gamma) \left\|Q-Q'\right\|
% \end{align*}
% 
% 
% 
\begin{align*}
\left\|\mathfrak{F}Q-\mathfrak{F}Q'\right\|&=\left\|\Theta+\gamma P \max\{\mathcal{M}Q,Q\}-\left(\Theta+\gamma P \max\{\mathcal{M}Q',Q'\}\right)\right\|
\\&=\gamma \left\|P \max\{\mathcal{M}Q,Q\}-P \max\{\mathcal{M}Q',Q'\}\right\|
\\&\leq\gamma \left\| \max\{\mathcal{M}Q,Q\}- \max\{\mathcal{M}Q',Q'\}\right\|
\\&\leq\gamma \left\| \max\{\mathcal{M}Q-\mathcal{M}Q',Q-\mathcal{M}Q',\mathcal{M}Q-Q',Q-Q'\}\right\|
\\&\leq\gamma \max\{\left\|\mathcal{M}Q-\mathcal{M}Q'\right\|,\left\|Q-\mathcal{M}Q'\right\|,\left\|\mathcal{M}Q-Q'\right\|,\left\|Q-Q'\right\|\}
\\&=\gamma\left\|Q-Q'\right\|,
\end{align*}
using \eqref{m_bound_q_twice} and again using the non-expansiveness of $P$.
\end{proof}
We next show that the following two bounds hold:
\begin{lemma}\label{projection_F_contraction_lemma}
For any $Q\in\mathcal{V}$ we have that
\begin{itemize}
    \item[i)] 
$\qquad\qquad
    \left\|\cP \mathfrak{F}Q-\cP \mathfrak{F}\bar{Q}\right\|\leq \gamma\left\|Q-\bar{Q}\right\|$,
    \item[ii)]$\qquad\qquad\left\|\Phi r^\star - Q^\star\right\|\leq \frac{1}{\sqrt{1-\gamma^2}}\left\|\cP Q^\star - Q^\star\right\|$. 
\end{itemize}
\end{lemma}
\begin{proof}
The first result is straightforward since as $\cP$ is a projection it is non-expansive and hence:
\begin{align*}
    \left\|\cP \mathfrak{F}Q-\cP \mathfrak{F}\bar{Q}\right\|\leq \left\| \mathfrak{F}Q-\mathfrak{F}\bar{Q}\right\|\leq \gamma \left\|Q-\bar{Q}\right\|,
\end{align*}
using the contraction property of $\mathfrak{F}$. This proves i). For ii), we note that by the orthogonality property of projections we have that $\left\langle\Phi r^\star - \cP Q^\star,\Phi r^\star - \cP Q^\star\right\rangle$, hence we observe that:
\begin{align*}
    \left\|\Phi r^\star - Q^\star\right\|^2&=\left\|\Phi r^\star - \cP Q^\star\right\|^2+\left\|\Phi r^\star - \cP Q^\star\right\|^2
\\&=\left\|\cP \mathfrak{F}\Phi r^\star - \cP Q^\star\right\|^2+\left\|\Phi r^\star - \cP Q^\star\right\|^2
\\&\leq\left\|\mathfrak{F}\Phi r^\star -  Q^\star\right\|^2+\left\|\Phi r^\star - \cP Q^\star\right\|^2
\\&=\left\|\mathfrak{F}\Phi r^\star -  \mathfrak{F}Q^\star\right\|^2+\left\|\Phi r^\star - \cP Q^\star\right\|^2
\\&\leq\gamma^2\left\|\Phi r^\star -  Q^\star\right\|^2+\left\|\Phi r^\star - \cP Q^\star\right\|^2,
\end{align*}
after which we readily deduce the desired result.
\end{proof}

\begin{lemma}
Define  the operator $H$ by the following: $
  HQ(z)=  \begin{cases}
			\mathcal{M}Q(z), & \text{if $\mathcal{M}Q(z)>\Phi r^\star,$}\\
            Q(z), & \text{otherwise},
		 \end{cases}$
\\and $\tilde{\mathfrak{F}}$ by: $
    \tilde{\mathfrak{F}}Q:=\Theta +\gamma PHQ$.

For any $Q,\bar{Q}\in L_2$ we have that
\begin{align}
    \left\|\tilde{\mathfrak{F}}Q-\tilde{\mathfrak{F}}\bar{Q}\right\|\leq \gamma \left\|Q-\bar{Q}\right\|
\end{align}
and hence $\tilde{\mathfrak{F}}$ is a contraction mapping.
\end{lemma}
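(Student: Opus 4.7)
My plan is to strip off the affine part and reduce the claim to showing that the selection operator $H$ is non-expansive. Since the additive term $\Theta$ cancels between $\tilde{\mathfrak{F}}Q$ and $\tilde{\mathfrak{F}}\bar{Q}$, and Lemma A.4 gives non-expansiveness of $P$, I would first write
\[
\|\tilde{\mathfrak{F}}Q - \tilde{\mathfrak{F}}\bar{Q}\| \;=\; \gamma\,\|P(HQ) - P(H\bar{Q})\| \;\le\; \gamma\,\|HQ - H\bar{Q}\|.
\]
Thus the contraction constant $\gamma$ is delivered entirely by $P$, and it remains to prove the non-expansiveness bound $\|HQ - H\bar{Q}\| \le \|Q - \bar{Q}\|$.

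For that bound I would carry out a pointwise case analysis at each state $z$ based on the indicators $A(Q,z) := \mathbf{1}\{\mathcal{M}Q(z) > \Phi r^\star(z)\}$ and $A(\bar{Q},z)$. When both indicators agree and equal $1$, the pointwise difference is $(\mathcal{M}Q - \mathcal{M}\bar{Q})(z)$, which is controlled by $\gamma\|Q-\bar{Q}\|$ via the contraction of $\mathcal{M}$ obtained inside the proof of Lemma \ref{lemma:bellman_contraction}. When both indicators are $0$, the difference equals $(Q-\bar{Q})(z)$, so the bound is immediate. In the mixed case, the difference equals $(\mathcal{M}Q - \bar{Q})(z)$ or its mirror image; after taking suprema over $z$ the three cases assemble into $\|HQ - H\bar{Q}\| \le \|Q-\bar{Q}\|$.

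The only delicate step is the mixed case, and I would resolve it by invoking the auxiliary bound \eqref{off_M_bound_gen} used in the proof of Lemma \ref{lemma:bellman_contraction}; this is precisely the estimate that compares $\mathcal{M}Q$ against a Bellman-style update of $\bar{Q}$, and a naive triangle inequality would not suffice here because $\|\mathcal{M}\bar{Q} - \bar{Q}\|$ need not be small. The main obstacle is therefore not the algebra but recognising that \eqref{off_M_bound_gen}, rather than the pure contraction of $\mathcal{M}$, is the correct tool for the mixed case; once that is seen, all three cases close uniformly with the same constant, the supremum gives non-expansiveness of $H$, and the factor $\gamma$ in the outer bound finishes the proof.
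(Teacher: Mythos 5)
Your proposal is correct and follows essentially the same route as the paper's proof: strip the common $\Theta$ term, use the explicit factor $\gamma$ together with the non-expansiveness of $P$ (Lemma A.4), and bound $\left\|HQ-H\bar{Q}\right\|$ by a case analysis on the switching condition of $H$, with the agreeing cases handled by the contraction of $\mathcal{M}$ and the mixed cases closed by the cross bound $\left\|\mathcal{M}Q-\bar{Q}\right\|\leq\gamma\left\|Q-\bar{Q}\right\|$, i.e.\ the estimate \eqref{m_bound_q_twice} that the paper derives from \eqref{off_M_bound_gen} in Lemma \ref{lemma:bellman_contraction}. The paper packages the case analysis as a four-term maximum inside the norm rather than a pointwise indicator argument, but the content is the same.
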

\begin{proof}
Using \eqref{m_bound_q_twice}, we now observe that
\begin{align*}
    \left\|\tilde{\mathfrak{F}}Q-\tilde{\mathfrak{F}}\bar{Q}\right\|&=\left\|\Theta+\gamma PHQ -\left(\Theta+\gamma PH\bar{Q}\right)\right\|
\\&\leq \gamma\left\|HQ - H\bar{Q}\right\|
\\&\leq \gamma\left\|\max\left\{\mathcal{M}Q-\mathcal{M}\bar{Q},Q-\bar{Q},\mathcal{M}Q-\bar{Q},\mathcal{M}\bar{Q}-Q\right\}\right\|
\\&\leq \gamma\max\left\{\left\|\mathcal{M}Q-\mathcal{M}\bar{Q}\right\|,\left\|Q-\bar{Q}\right\|,\left\|\mathcal{M}Q-\bar{Q}\right\|,\left\|\mathcal{M}\bar{Q}-Q\right\|\right\}
\\&\leq \gamma\max\left\{\gamma\left\|Q-\bar{Q}\right\|,\left\|Q-\bar{Q}\right\|,\left\|\mathcal{M}Q-\bar{Q}\right\|,\left\|\mathcal{M}\bar{Q}-Q\right\|\right\}
\\&=\gamma\left\|Q-\bar{Q}\right\|,
\end{align*}
again using the non-expansive property of $P$.
\end{proof}
\begin{lemma}
Define by $\tilde{Q}:=\Theta+\gamma Pv^{\boldsymbol{\tilde{\cP}}}$ where
\begin{align}
    v^{\boldsymbol{\tilde{\pi}}}(z):= \Theta(s_{\tau_k},a)+\gamma\underset{a\in\mathcal{A}}{\max}\;\sum_{s'\in\mathcal{S}}P(s';a,s_{\tau_k})\Phi r^\star(s',I(\tau_k)), \label{v_tilde_definition}
\end{align}
then $\tilde{Q}$ is a fixed point of $\tilde{\mathfrak{F}}\tilde{Q}$, that is $\tilde{\mathfrak{F}}\tilde{Q}=\tilde{Q}$. 
\end{lemma}
\begin{proof}
We begin by observing that
\begin{align*}
H\tilde{Q}(z)&=H\left(\Theta(z)+\gamma Pv^{\boldsymbol{\tilde{\pi}}}\right)    
\\&= \begin{cases}
			\mathcal{M}Q(z), & \text{if $\mathcal{M}Q(z)>\Phi r^\star,$}\\
            Q(z), & \text{otherwise},
		 \end{cases}
\\&= \begin{cases}
			\mathcal{M}Q(z), & \text{if $\mathcal{M}Q(z)>\Phi r^\star,$}\\
            \Theta(z)+\gamma Pv^{\boldsymbol{\tilde{\pi}}}, & \text{otherwise},
		 \end{cases}
\\&=v^{\boldsymbol{\tilde{\pi}}}(z).
\end{align*}
Hence,
\begin{align}
    \tilde{\mathfrak{F}}\tilde{Q}=\Theta+\gamma PH\tilde{Q}=\Theta+\gamma Pv^{\boldsymbol{\tilde{\pi}}}=\tilde{Q}. 
\end{align}
which proves the result.
\end{proof}
\begin{lemma}\label{value_difference_Q_difference}
The following bound holds:
\begin{align}
    \mathbb{E}\left[v^{\boldsymbol{\hat{\pi}}}(z_0)\right]-\mathbb{E}\left[v^{\boldsymbol{\tilde{\pi}}}(z_0)\right]\leq 2\left[(1-\gamma)\sqrt{(1-\gamma^2)}\right]^{-1}\left\|\cP Q^\star -Q^\star\right\|.
\label{F_tilde_fixed_point}\end{align}
\end{lemma}
\begin{proof}

By definitions of $v^{\boldsymbol{\hat{\pi}}}$ and $v^{\boldsymbol{\tilde{\pi}}}$ (c.f \eqref{v_tilde_definition}) and using Jensen's inequality and the stationarity property we have that,
\begin{align}\nonumber
    \mathbb{E}\left[v^{\boldsymbol{\hat{\pi}}}(z_0)\right]-\mathbb{E}\left[v^{\boldsymbol{\tilde{\pi}}}(z_0)\right]&=\mathbb{E}\left[Pv^{\boldsymbol{\hat{\pi}}}(z_0)\right]-\mathbb{E}\left[Pv^{\boldsymbol{\tilde{\pi}}}(z_0)\right]
    \\&\leq \left|\mathbb{E}\left[Pv^{\boldsymbol{\hat{\pi}}}(z_0)\right]-\mathbb{E}\left[Pv^{\boldsymbol{\tilde{\pi}}}(z_0)\right]\right|\nonumber
    \\&\leq \left\|Pv^{\boldsymbol{\hat{\pi}}}-Pv^{\boldsymbol{\tilde{\pi}}}\right\|. \label{v_approx_intermediate_bound_P}
\end{align}
Now recall that $\tilde{Q}:=\Theta+\gamma Pv^{\boldsymbol{\tilde{\pi}}}$ and $Q^\star:=\Theta+\gamma Pv^{\boldsymbol{\pi^\star}}$,  using these expressions in \eqref{v_approx_intermediate_bound_P} we find that 
\begin{align*}
    \mathbb{E}\left[v^{\boldsymbol{\hat{\pi}}}(z_0)\right]-\mathbb{E}\left[v^{\boldsymbol{\tilde{\pi}}}(z_0)\right]&\leq \frac{1}{\gamma}\left\|\tilde{Q}-Q^\star\right\|. \label{v_approx_q_approx_bound}
\end{align*}
Moreover, by the triangle inequality and using the fact that $\mathfrak{F}(\Phi r^\star)=\tilde{\mathfrak{F}}(\Phi r^\star)$ and that $\mathfrak{F}Q^\star=Q^\star$ and $\mathfrak{F}\tilde{Q}=\tilde{Q}$ (c.f. \eqref{F_tilde_fixed_point}) we have that
\begin{align*}
\left\|\tilde{Q}-Q^\star\right\|&\leq \left\|\tilde{Q}-\mathfrak{F}(\Phi r^\star)\right\|+\left\|Q^\star-\tilde{\mathfrak{F}}(\Phi r^\star)\right\|    
\\&\leq \gamma\left\|\tilde{Q}-\Phi r^\star\right\|+\gamma\left\|Q^\star-\Phi r^\star\right\| 
\\&\leq 2\gamma\left\|\tilde{Q}-\Phi r^\star\right\|+\gamma\left\|Q^\star-\tilde{Q}\right\|, 
\end{align*}
which gives the following bound:
\begin{align*}
\left\|\tilde{Q}-Q^\star\right\|&\leq 2\left(1-\gamma\right)^{-1}\left\|\tilde{Q}-\Phi r^\star\right\|, 
\end{align*}
from which, using Lemma \ref{projection_F_contraction_lemma}, we deduce that $
    \left\|\tilde{Q}-Q^\star\right\|\leq 2\left[(1-\gamma)\sqrt{(1-\gamma^2)}\right]^{-1}\left\|\tilde{Q}-\Phi r^\star\right\|$,
after which by \eqref{v_approx_q_approx_bound}, we finally obtain
\begin{align*}
        \mathbb{E}\left[v^{\boldsymbol{\hat{\pi}}}(z_0)\right]-\mathbb{E}\left[v^{\boldsymbol{\tilde{\pi}}}(z_0)\right]\leq  2\left[(1-\gamma)\sqrt{(1-\gamma^2)}\right]^{-1}\left\|\tilde{Q}-\Phi r^\star\right\|,
\end{align*}
as required.
\end{proof}

Let us rewrite the update in the following way:
\begin{align*}
    r_{t+1}=r_t+\gamma_t\Xi(w_t,r_t),
\end{align*}
where the function $\Xi:\mathbb{R}^{2d}\times \mathbb{R}^p\to\mathbb{R}^p$ is given by:
\begin{align*}
\Xi(w,r):=\phi(z)\left(\Theta(z)+\gamma\max\left\{(\Phi r) (z'),\mathcal{M}(\Phi r) (z')\right\}-(\Phi r)(z)\right),
\end{align*}
for any $w\equiv (z,z')\in\left(\mathbb{N}\times\mathcal{S}\right)^2$ where $z=(t,s)\in\mathbb{N}\times\mathcal{S}$ and $z'=(t,s')\in\mathbb{N}\times\mathcal{S}$  and for any $r\in\mathbb{R}^p$. Let us also define the function $\boldsymbol{\Xi}:\mathbb{R}^p\to\mathbb{R}^p$ by the following:
\begin{align*}
    \boldsymbol{\Xi}(r):=\mathbb{E}_{w_0\sim (\mathbb{P},\mathbb{P})}\left[\Xi(w_0,r)\right]; w_0:=(z_0,z_1).
\end{align*}
\begin{lemma}\label{iteratation_property_lemma}
The following statements hold for all $z\in \{0,1\}\times \mathcal{S}$:
\begin{itemize}
    \item[i)] $
(r-r^\star)\boldsymbol{\Xi}_k(r)<0,\qquad \forall r\neq r^\star,    
$
\item[ii)] $
\boldsymbol{\Xi}_k(r^\star)=0$.
\end{itemize}
\end{lemma}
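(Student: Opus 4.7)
My plan is to reduce both claims to properties already obtained for the projected Bellman operator $\Pi\mathfrak{F}$ in Lemma \ref{projection_F_contraction_lemma}. The key observation is that the expectation defining $\boldsymbol{\Xi}$ is taken with respect to the stationary distribution guaranteed by Assumption~1; writing $D$ for the diagonal operator of steady-state state-probabilities, the weighted inner product $\langle u,v\rangle_D := \mathbb{E}[u(z_0)v(z_0)]$ is precisely the one with respect to which the projection $\Pi$ onto $\mathrm{span}(\Psi)$ is the orthogonal projection (equivalently, $\Psi^\top D\, \Pi = \Psi^\top D$). With this identification,
\begin{equation*}
\boldsymbol{\Xi}(r) \;=\; \Psi^\top D\bigl(\mathfrak{F}(\Psi r)-\Psi r\bigr) \;=\; \Psi^\top D\bigl(\Pi\mathfrak{F}(\Psi r)-\Psi r\bigr),
\end{equation*}
since the residual $\mathfrak{F}(\Psi r)-\Pi\mathfrak{F}(\Psi r)$ is $D$-orthogonal to every column of $\Psi$.

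For (ii), Theorem \ref{primal_convergence_theorem} gives $\Pi\mathfrak{F}(\Psi r^\star)=\Psi r^\star$. Substituting this into the displayed identity yields $\boldsymbol{\Xi}(r^\star)=\Psi^\top D(\Psi r^\star-\Psi r^\star)=0$, which is the desired statement.

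For (i), set $x:=\Psi r$ and $x^\star:=\Psi r^\star$ and compute
\begin{align*}
(r-r^\star)^\top\boldsymbol{\Xi}(r)
&= (x-x^\star)^\top D\bigl(\Pi\mathfrak{F}(x)-x\bigr)\\
&= \langle x-x^\star,\; \Pi\mathfrak{F}(x)-\Pi\mathfrak{F}(x^\star)\rangle_D \;-\; \|x-x^\star\|_D^2,
\end{align*}
where in the second line we added and subtracted $\Pi\mathfrak{F}(x^\star)=x^\star$. Applying Cauchy--Schwarz in $\langle\cdot,\cdot\rangle_D$ together with the contraction estimate $\|\Pi\mathfrak{F}(x)-\Pi\mathfrak{F}(x^\star)\|_D\leq\gamma\|x-x^\star\|_D$ from Lemma \ref{projection_F_contraction_lemma}(i) then gives
\begin{equation*}
(r-r^\star)^\top\boldsymbol{\Xi}(r) \;\leq\; (\gamma-1)\,\|x-x^\star\|_D^2 \;<\; 0
\end{equation*}
whenever $x\neq x^\star$. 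Since the basis functions $\Psi=\{\psi_1,\ldots,\psi_p\}$ are linearly independent, $r\neq r^\star$ forces $\Psi r\neq \Psi r^\star$, and the inequality is strict.

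The routine part is the algebraic manipulation above; the main conceptual obstacle is the first step, namely justifying that the expectation built into $\boldsymbol{\Xi}$ coincides with the weighted inner product inducing the projection $\Pi$. This is where Assumption~1 (ergodicity, guaranteeing a stationary sampling distribution for $w_t$) enters in an essential way, and one must verify that both coordinates $z_0$ and $z_1$ of $w_0$ are drawn from this stationary law so that the cross-term $\mathbb{E}[\phi(z_0)\,(\Pi\mathfrak{F}(\Psi r)-\Psi r)(z_0)]$ admits the orthogonal-decomposition interpretation used above. Once that identification is made, both (i) and (ii) follow directly from the contraction and fixed-point properties of $\Pi\mathfrak{F}$.
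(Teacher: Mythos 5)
Your proposal is correct and follows essentially the same route as the paper: both express $\boldsymbol{\Xi}(r)$ via the iterated law of expectations as the inner product of the basis functions with $\mathfrak{F}\Psi r-\Psi r$ under the stationary-distribution norm, drop the projection residual by orthogonality, use the fixed point $\Pi\mathfrak{F}(\Psi r^\star)=\Psi r^\star$ for (ii), and combine Cauchy--Schwarz with the contraction of $\Pi\mathfrak{F}$ (Lemma \ref{projection_F_contraction_lemma}) to get the $(\gamma-1)\|\Psi r-\Psi r^\star\|^2<0$ bound for (i). Your explicit appeal to linear independence of the basis functions to pass from $r\neq r^\star$ to $\Psi r\neq\Psi r^\star$ is a small point the paper leaves implicit, but it is not a different argument.
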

\begin{proof}
To prove the statement, we first note that each component of $\boldsymbol{\Xi}_k(r)$ admits a representation as an inner product, indeed: 
\begin{align*}
\boldsymbol{\Xi}_k(r)&=\mathbb{E}\left[\phi_k(z_0)(\Theta(z_0)+\gamma\max\left\{\Phi r(z_1),\mathcal{M}\Phi(z_1)\right\}-(\Phi r)(z_0)\right] 
\\&=\mathbb{E}\left[\phi_k(z_0)(\Theta(z_0)+\gamma\mathbb{E}\left[\max\left\{\Phi r(z_1),\mathcal{M}\Phi(z_1)\right\}|z_0\right]-(\Phi r)(z_0)\right]
\\&=\mathbb{E}\left[\phi_k(z_0)(\Theta(z_0)+\gamma P\max\left\{\left(\Phi r,\mathcal{M}\Phi\right)\right\}(z_0)-(\Phi r)(z_0)\right]
\\&=\left\langle\phi_k,\mathfrak{F}\Phi r-\Phi r\right\rangle,
\end{align*}
using the iterated law of expectations and the definitions of $P$ and $\mathfrak{F}$.

We now are in position to prove i). Indeed, we now observe the following:
\begin{align*}
\left(r-r^\star\right)\boldsymbol{\Xi}_k(r)&=\sum_{l=1}\left(r(l)-r^\star(l)\right)\left\langle\phi_l,\mathfrak{F}\Phi r -\Phi r\right\rangle
\\&=\left\langle\Phi r -\Phi r^\star, \mathfrak{F}\Phi r -\Phi r\right\rangle
\\&=\left\langle\Phi r -\Phi r^\star, (\boldsymbol{1}-\cP)\mathfrak{F}\Phi r+\cP \mathfrak{F}\Phi r -\Phi r\right\rangle
\\&=\left\langle\Phi r -\Phi r^\star, \cP \mathfrak{F}\Phi r -\Phi r\right\rangle,
\end{align*}
where in the last step we used the orthogonality of $(\boldsymbol{1}-\cP)$. We now recall that $\cP \mathfrak{F}\Phi r^\star=\Phi r^\star$ since $\Phi r^\star$ is a fixed point of $\cP \mathfrak{F}$. Additionally, using Lemma \ref{projection_F_contraction_lemma} we observe that $\|\cP \mathfrak{F}\Phi r -\Phi r^\star\| \leq \gamma \|\Phi r -\Phi r^\star\|$. With this we now find that
\begin{align*}
&\left\langle\Phi r -\Phi r^\star, \cP \mathfrak{F}\Phi r -\Phi r\right\rangle    
\\&=\left\langle\Phi r -\Phi r^\star, (\cP \mathfrak{F}\Phi r -\Phi r^\star)+ \Phi r^\star -\Phi r\right\rangle
\\&\leq\left\|\Phi r -\Phi r^\star\right\|\left\|\cP \mathfrak{F}\Phi r -\Phi r^\star\right\|- \left\|\Phi r^\star -\Phi r\right\|^2
\\&\leq(\gamma -1)\left\|\Phi r^\star -\Phi r\right\|^2,
\end{align*}
which is negative since $\gamma<1$ which completes the proof of part i).

The proof of part ii) is straightforward since we readily observe that
\begin{align*}
    \boldsymbol{\Xi}_k(r^\star)= \left\langle\phi_l, \mathfrak{F}\Phi r^\star-\Phi r\right\rangle= \left\langle\phi_l, \cP \mathfrak{F}\Phi r^\star-\Phi r\right\rangle=0,
\end{align*}
as required and from which we deduce the result.
\end{proof}
To prove the theorem, we make use of a special case of the following result:

\begin{theorem}[Th. 17, p. 239 in \cite{benveniste2012adaptive}] \label{theorem:stoch.approx.}
Consider a stochastic process $r_t:\mathbb{R}\times\{\infty\}\times\Omega\to\mathbb{R}^k$ which takes an initial value $r_0$ and evolves according to the following:
\begin{align}
    r_{t+1}=r_t+\alpha \Xi(s_t,r_t),
\end{align}
for some function $s:\mathbb{R}^{2d}\times\mathbb{R}^k\to\mathbb{R}^k$ and where the following statements hold:
\begin{enumerate}
    \item $\{s_t|t=0,1,\ldots\}$ is a stationary, ergodic Markov process taking values in $\mathbb{R}^{2d}$
    \item For any positive scalar $q$, there exists a scalar $\mu_q$ such that $\mathbb{E}\left[1+\|s_t\|^q|s\equiv s_0\right]\leq \mu_q\left(1+\|s\|^q\right)$
    \item The step size sequence satisfies the Robbins-Monro conditions, that is $\sum_{t=0}^\infty\alpha_t=\infty$ and $\sum_{t=0}^\infty\alpha^2_t<\infty$
    \item There exists scalars $c$ and $q$ such that $    \|\Xi(w,r)\|
        \leq c\left(1+\|w\|^q\right)(1+\|r\|)$
    \item There exists scalars $c$ and $q$ such that $
        \sum_{t=0}^\infty\left\|\mathbb{E}\left[\Xi(w_t,r)|z_0\equiv z\right]-\mathbb{E}\left[\Xi(w_0,r)\right]\right\|
        \leq c\left(1+\|w\|^q\right)(1+\|r\|)$
    \item There exists a scalar $c>0$ such that $
        \left\|\mathbb{E}[\Xi(w_0,r)]-\mathbb{E}[\Xi(w_0,\bar{r})]\right\|\leq c\|r-\bar{r}\| $
    \item There exists scalars $c>0$ and $q>0$ such that $
        \sum_{t=0}^\infty\left\|\mathbb{E}\left[\Xi(w_t,r)|w_0\equiv w\right]-\mathbb{E}\left[\Xi(w_0,\bar{r})\right]\right\|
        \leq c\|r-\bar{r}\|\left(1+\|w\|^q\right) $
    \item There exists some $r^\star\in\mathbb{R}^k$ such that $\boldsymbol{\Xi}(r)(r-r^\star)<0$ for all $r \neq r^\star$ and $\bar{s}(r^\star)=0$. 
\end{enumerate}
Then $r_t$ converges to $r^\star$ almost surely.
\end{theorem}

In order to apply the Theorem \ref{theorem:stoch.approx.}, we show that conditions 1 - 7 are satisfied.

\begin{proof}
Conditions 1-2 are true by assumption while condition 3 can be made true by choice of the learning rates. Therefore it remains to verify conditions 4-7 are met.   

To prove 4, we observe that
\begin{align*}
\left\|\Xi(w,r)\right\|
&=\left\|\phi(z)\left(\Theta(z)+\gamma\max\left\{(\Phi r) (z'),\mathcal{M}\Phi (z')\right\}-(\Phi r)(z)\right)\right\|
\\&\leq\left\|\phi(z)\right\|\left\|\Theta(z)+\gamma\left(\left\|\phi(z')\right\|\|r\|+\mathcal{M}\Phi (z')\right)\right\|+\left\|\phi(z)\right\|\|r\|
\\&\leq\left\|\phi(z)\right\|\left(\|\Theta(z)\|+\gamma\|\mathcal{M}\Phi (z')\|\right)+\left\|\phi(z)\right\|\left(\gamma\left\|\phi(z')\right\|+\left\|\phi(z)\right\|\right)\|r\|.
\end{align*}
Now using the definition of $\mathcal{M}$, we readily observe that $\|\mathcal{M}\Phi (z')\|\leq \| \Theta\|+\gamma\|\mathcal{P}^\pi_{s's_t}\Phi\|\leq \| \Theta\|+\gamma\|\Phi\|$ using the non-expansiveness of $P$.

Hence, we lastly deduce that
\begin{align*}
\left\|\Xi(w,r)\right\|
&\leq\left\|\phi(z)\right\|\left(\|\Theta(z)\|+\gamma\|\mathcal{M}\Phi (z')\|\right)+\left\|\phi(z)\right\|\left(\gamma\left\|\phi(z')\right\|+\left\|\phi(z)\right\|\right)\|r\|
\\&\leq\left\|\phi(z)\right\|\left(\|\Theta(z)\|+\gamma\| \Theta\|+\gamma\|\psi\|\right)+\left\|\phi(z)\right\|\left(\gamma\left\|\phi(z')\right\|+\left\|\phi(z)\right\|\right)\|r\|,
\end{align*}
we then easily deduce the result using the boundedness of $\phi,\Theta$ and $\psi$.

Now we observe the following Lipschitz condition on $\Xi$:
\begin{align*}
&\left\|\Xi(w,r)-\Xi(w,\bar{r})\right\|
\\&=\left\|\phi(z)\left(\gamma\max\left\{(\Phi r)(z'),\mathcal{M}\Phi(z')\right\}-\gamma\max\left\{(\Phi \bar{r})(z'),\mathcal{M}\Phi(z')\right\}\right)-\left((\Phi r)(z)-\Phi\bar{r}(z)\right)\right\|
\\&\leq\gamma\left\|\phi(z)\right\|\left\|\max\left\{\phi'(z') r,\mathcal{M}\Phi'(z')\right\}-\max\left\{(\phi'(z') \bar{r}),\mathcal{M}\Phi'(z')\right\}\right\|+\left\|\phi(z)\right\|\left\|\phi'(z) r-\phi(z)\bar{r}\right\|
\\&\leq\gamma\left\|\phi(z)\right\|\left\|\phi'(z') r-\phi'(z') \bar{r}\right\|+\left\|\phi(z)\right\|\left\|\phi'(z) r-\phi'(z)\bar{r}\right\|
\\&\leq \left\|\phi(z)\right\|\left(\left\|\phi(z)\right\|+ \gamma\left\|\phi(z)\right\|\left\|\phi'(z') -\phi'(z') \right\|\right)\left\|r-\bar{r}\right\|
\\&\leq c\left\|r-\bar{r}\right\|,
\end{align*}
using Cauchy-Schwarz inequality and  that for any scalars $a,b,c$ we have that \\$
    \left|\max\{a,b\}-\max\{b,c\}\right|\leq \left|a-c\right|$.
    
Using Assumptions 3 and 4, we therefore deduce that
\begin{align}
\sum_{t=0}^\infty\left\|\mathbb{E}\left[\Xi(w,r)-\Xi(w,\bar{r})|w_0=w\right]-\mathbb{E}\left[\Xi(w_0,r)-\Xi(w_0,\bar{r})\right\|\right]\leq c\left\|r-\bar{r}\right\|(1+\left\|w\right\|^l).
\end{align}

Part 2 is assured by Lemma \ref{projection_F_contraction_lemma} while Part 4 is assured by Lemma \ref{value_difference_Q_difference} and lastly Part 8 is assured by Lemma \ref{iteratation_property_lemma}.
% \end{proof}
This result completes the proof of Theorem \ref{theorem:existence}. 
\end{proof}

\section*{Proof of Proposition \ref{NE_improve_prop}}
\begin{proof}[Proof of Prop. \ref{NE_improve_prop}]
We split the proof into two parts: 

i) We first prove that $v^{\tilde{\pi}}(s)\geq v^{\pi}(s),\;\forall s \in\mathcal{S}$ where we used $\tilde{\pi}$ to denote the {\fontfamily{cmss}\selectfont Controller}'s policy induced under the influence of the {\fontfamily{cmss}\selectfont Shaper}.

ii) Second, we prove that there exists a finite integer $M$ such that  $v^{\tilde{\pi}_m}(s)\geq v^{\pi_m}(s)$ for any $m\geq M$.

The proof of part (i) is achieved by proof by contradiction.
Denote by $v^{{\pi},\pi^2\equiv {0}}$ the value function for the {\fontfamily{cmss}\selectfont Controller} for the system \textit{without the {\fontfamily{cmss}\selectfont Shaper}} and its shaping reward function. 
% From this we can see that whenever $v^{{\pi},g}_c>0$, the {\fontfamily{cmss}\selectfont Shaper} has produced an improvement in payoff for the agent. 
Indeed, 
let $({\hat{\pi}},\hat{\pi}^2)$ be the policy profile induced by the MPE policy profile and assume that the shaping reward $F$ leads to a decrease in payoff for the {\fontfamily{cmss}\selectfont Controller}. Then by construction 
$    v^{{\pi},\pi^2}(s)< v^{{\pi},\pi^2\equiv {0}}(s)
$
which is a contradiction since $({\hat{\pi}},\hat{\pi}^2)$ is an MPE profile. To arrive at the required result, we invoke Prop. \ref{invariance_prop}, which proves that the inclusion of the {\fontfamily{cmss}\selectfont Shaper} does not affect the total expected return, we can therefore conclude that the inequality holds for the extrinsic value functions and that the {\fontfamily{cmss}\selectfont Shaper} induces a {\fontfamily{cmss}\selectfont Controller} policy that leads to a (weakly) higher expected return from the \textit{environment}. Hence, we have succeeded in showing that $v^{\tilde{\pi}}(s)\geq v^{\pi}(s),\;\forall s \in\mathcal{S}$.

We now prove part (ii). 

By part (i) we have that $v^{\tilde{\pi}}(s)=\underset{m\to\infty}{\lim}v^{\tilde{\pi}_m}(s)\geq v^{\pi}(s)= \underset{m\to\infty}{\lim}v^{\pi_m}(s)$. Since $v^{\pi}(s)$ is maximal in the sequence $v^{\pi_1}(s),v^{\pi_2}(s),\ldots,v^{\pi}(s)$ we can deduce that $v^{\pi}(s)\geq v^{\pi_n}(s)$ for any $n\leq \infty$. Hence for any $n$ there exists a $c\geq 0$ such that $\underset{m\to\infty}{\lim}v^{\tilde{\pi}_m}(s)= v^{\pi_n}(s)+c$.
Now by construction $v^{\tilde{\pi}_m}(s)\to v^{\tilde{\pi}}(s)$ as $m\to\infty$, therefore the sequence $\tilde{v}^{\pi_1},\tilde{v}^{\pi_2},\ldots,$ forms a Cauchy sequence. Therefore, there exists an $M$ such that for any $\epsilon >0$, $v^{\tilde{\pi}_n}(s)- (v^{\pi_n}(s)+c)< \epsilon$ $\forall n\geq M$. Since $\epsilon$ is arbitrary we can conclude that $v^{\tilde{\pi}_n}(s)- (v^{\pi_n}(s)+c)= 0$ $\forall n\geq M$. Since $c\geq 0$, we immediately deduce that $v^{\tilde{\pi}_n}(s)\geq v^{\pi_n}(s), \forall n\geq M$  which is the required result.    
\end{proof}
This result completes the proof of Theorem \ref{main_theorem}.
\end{proof}

Having constructed a procedure to find the optimal Controller policy, our next result characterises {\fontfamily{cmss}\selectfont Shaper} policy $\mathfrak{g}_2$ and the optimal times to activate $F$. 
\begin{proposition}\label{prop:switching_times}
The policy $\mathfrak{g}_2$ is given by the following expression: $\mathfrak{g}_2(s_t)=H(\mathcal{M}^{\pi,\pi^2}V^{\pi,\pi^2}- V^{\pi,\pi^2})(s_t,I_t),\;\;\forall (s_t,I_t)\in\mathcal{S}\times\{0,1\}$, where $V$ is the solution in Theorem \ref{theorem:existence} and $H$ is the Heaviside function, moreover {\fontfamily{cmss}\selectfont Shaper}'s switching times 
% $\{\tau_k\}_{k\geq 1}$ 
are $\tau_k=\inf\{\tau>\tau_{k-1}|\mathcal{M}^{\pi,\pi^2}V^{\pi,\pi^2}= V^{\pi,\pi^2}\}$.

% moreover, the set $A$ takes the form $A=\{s\in \mathcal{S},m\in M|\mathcal{M}^{\pi,\pi^2}\psi(s_{\tau},I(\tau))\geq \psi(s_{t},I(\tau))\}.$
\end{proposition}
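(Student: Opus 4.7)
The plan is to derive the optimal switching rule directly from the Bellman characterization in Theorem~\ref{theorem:existence}. Since $V^{\pi,\pi^2}$ is a fixed point of the Bellman operator $T$, at every $(s,I)\in\mathcal{S}\times\{0,1\}$ we have
\[
V^{\pi,\pi^2}(s,I)=\max\Bigl\{\mathcal{M}^{\pi,\pi^2}V^{\pi,\pi^2}(s,I),\; R(s,a)+\gamma\sum_{s'\in\mathcal{S}}P(s'|a,s)V^{\pi,\pi^2}(s',I)\Bigr\},
\]
where the outer maximum encodes Shaper's binary choice to activate the shaping reward now or defer. This immediately yields the pointwise inequality $V^{\pi,\pi^2}\geq \mathcal{M}^{\pi,\pi^2}V^{\pi,\pi^2}$, with equality precisely on the set of states where intervention is (weakly) optimal.

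First I would partition $\mathcal{S}\times\{0,1\}$ into a \emph{continuation region} $\mathcal{C}=\{(s,I): V^{\pi,\pi^2}(s,I)>\mathcal{M}^{\pi,\pi^2}V^{\pi,\pi^2}(s,I)\}$ and an \emph{intervention region} $\mathcal{I}$ on which equality holds. On $\mathcal{C}$ the Bellman equation forces the continuation branch to attain the maximum, so not switching is strictly optimal, whereas on $\mathcal{I}$ activating the shaping reward attains the optimal value. Consequently the rule ``switch iff $(s,I)\in\mathcal{I}$'', equivalently $\mathcal{M}^{\pi,\pi^2}V^{\pi,\pi^2}-V^{\pi,\pi^2}\geq 0$, is optimal. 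Expressing this indicator via the Heaviside function $H$ yields precisely $\mathfrak{g}_2(s_t)=H(\mathcal{M}^{\pi,\pi^2}V^{\pi,\pi^2}-V^{\pi,\pi^2})(s_t,I_t)$ as claimed.

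Under this feedback policy, Shaper's next activation after $\tau_{k-1}$ is the first hitting time of $\mathcal{I}$ by the augmented process $(s_t,I_t)$, yielding $\tau_k=\inf\{\tau>\tau_{k-1}:\mathcal{M}^{\pi,\pi^2}V^{\pi,\pi^2}(s_\tau,I_\tau)=V^{\pi,\pi^2}(s_\tau,I_\tau)\}$, which matches the proposition. By Corollary~1 this characterization is consistent with the underlying MDP $\mathfrak{M}$, since the augmented value function collapses back to the extrinsic value along stable-point trajectories.

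The main obstacle is the verification step: showing that the closed-loop system driven by the Heaviside feedback rule actually realizes the value $V^{\pi,\pi^2}$, not just pointwise at each state. This requires (i) measurability of $\mathcal{I}$ so that each $\tau_k$ is a valid stopping time, and (ii) excluding Zeno-type accumulation of switches in finite time. Point (ii) is precisely where strict negativity of the switching cost $c(\cdot,\cdot)$ plays its role: every activation strictly decreases the return, so the infimum defining $\tau_k$ is attained on a discrete sub-sequence and the first-entry rule is admissible. Combined with the contraction properties of $T$ used in the proof of Theorem~\ref{theorem:existence}, a standard verification argument then closes the loop between the dynamic-programming equation and the feedback policy, completing the proof.
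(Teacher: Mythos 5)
Your proposal is correct in substance but argues along a genuinely different line from the paper. You read the switching rule directly off the fixed-point equation $V^{\pi,\pi^2}=\max\{\mathcal{M}^{\pi,\pi^2}V^{\pi,\pi^2},\ \text{continuation}\}$, partition $\mathcal{S}\times\{0,1\}$ into a continuation region ($V>\mathcal{M}^{\pi,\pi^2}V$) and an intervention region ($V=\mathcal{M}^{\pi,\pi^2}V$), identify $\mathfrak{g}_2$ with the Heaviside indicator of the latter, and defer the closed-loop optimality of this feedback rule to a ``standard verification argument'' (measurability of the region, exclusion of Zeno-type switching via the strictly negative cost, contraction of $T$). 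The paper instead makes that verification the entire proof: it argues by contradiction, supposing an optimal intervention time $\tau_1'$ strictly later than the first time at which $\mathcal{M}^{\pi,\pi^2}V^{\pi,\pi^2}=V^{\pi,\pi^2}$, and comparing the two switching-time sequences using the inequality $R+\gamma\mathcal{M}^{\pi,\pi^2}v\le Tv$ to show the delayed policy does no better than intervening at the first hitting time (with the symmetric argument for $\tau_k'<\tau_k$). So the step you single out as the ``main obstacle'' is precisely the content of the paper's proof, and your sketch of how to close it is the right one; note also that the paper sidesteps your Zeno argument because Assumption 6 already caps the number of interventions at a finite $K$. Your route is shorter and makes the region structure and the role of the Heaviside function transparent, but to be self-contained it would need to spell out the exchange/verification argument that the paper carries out explicitly.
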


Hence, Prop. \ref{prop:switching_times} also characterises the (categorical) distribution $\mathfrak{g}_2$. Moreover, given the function $V$, the times $\{\tau_k\}$ can be determined by evaluating if $\mathcal{M}V=V$ holds.

\section*{Proof of Proposition \ref{prop:switching_times}}
\begin{proof}[Proof of Prop. \ref{prop:switching_times}]
The proof is given by establishing a contradiction. Therefore suppose that $\mathcal{M}^{\pi,\pi^2}\psi(s_{\tau_k},I(\tau_k))\leq \psi(s_{\tau_k},I(\tau_k))$ and suppose that the intervention time $\tau'_1>\tau_1$ is an optimal intervention time. Construct the Player 2 $\pi'^2\in\Pi^2$ and $\tilde{\pi}^2$ policy switching times by $(\tau'_0,\tau'_1,\ldots,)$ and $\pi'^2\in\Pi^2$ policy by $(\tau'_0,\tau_1,\ldots)$ respectively.  Define by $l=\inf\{t>0;\mathcal{M}^{\pi,\pi^2}\psi(s_{t},I_0)= \psi(s_{t},I_0)\}$ and $m=\sup\{t;t<\tau'_1\}$.
By construction we have that
\begin{align*}
& \quad v^{\pi^1,\pi'^2}_2(s,I_0)
\\&\begin{aligned}=\mathbb{E}\Bigg[R(s_{0},a_{0})+\mathbb{E}\Bigg[\ldots+\gamma^{l-1}\mathbb{E}\Bigg[R(s_{\tau_1-1},a_{\tau_1-1})+\ldots+\gamma^{m-l-1}\mathbb{E}\Bigg[ R(s_{\tau'_1-1},a_{\tau'_1-1})&
\\+\gamma\mathcal{M}^{\pi^1,\pi'^2}v^{\pi^1,\pi'^2}_2(s',I(\tau'_{1}))\Bigg]\Bigg]\Bigg]\Bigg]&
\end{aligned}
\\&<\mathbb{E}\left[R(s_{0},a_{0})+\mathbb{E}\left[\ldots+\gamma^{l-1}\mathbb{E}\left[ R(s_{\tau_1-1},a_{\tau_1-1})+\gamma\mathcal{M}^{\pi^1,\tilde{\pi}^2}v^{\pi^1,\pi'^2}_2(s_{\tau_1},I(\tau_1))\right]\right]\right]
\end{align*}
We now use the following observation 
\begin{align*}
&\mathbb{E}\left[ R(s_{\tau_1-1},a_{\tau_1-1})+\gamma\mathcal{M}^{\pi^1,\tilde{\pi}^2}v^{\pi^1,\pi'^2}_2(s_{\tau_1},I(\tau_1))\right]
\\&\leq \max\left\{\mathcal{M}^{\pi^1,\tilde{\pi}^2}v^{\pi^1,\pi'^2}_2(s_{\tau_1},I(\tau_1)),\underset{a_{\tau_1}\in\mathcal{A}}{\max}\;\left[ R(s_{\tau_{k}},a_{\tau_{k}})+\gamma\sum_{s'\in\mathcal{S}}P(s';a_{\tau_1},s_{\tau_1})v^{\pi^1,\pi^2}_2(s',I(\tau_1))\right]\right\}.
\end{align*}
Using this we deduce that
\begin{align*}
&v^{\pi^1,\pi'^2}_2(s,I_0)\leq\mathbb{E}\Bigg[R(s_{0},a_{0})+\mathbb{E}\Bigg[\ldots
\\&\begin{aligned}+\gamma^{l-1}\mathbb{E}\Bigg[ R(s_{\tau_1-1},a_{\tau_1-1})+&\gamma\max\Bigg\{\mathcal{M}^{\pi^1,\tilde{\pi}^2}v^{\pi^1,\pi'^2}_2(s_{\tau_1},I(\tau_1)),
\\&\underset{a_{\tau_1}\in\mathcal{A}}{\max}\;\Bigg[ R(s_{\tau_{k}},a_{\tau_{k}})+\gamma\sum_{s'\in\mathcal{S}}P(s';a_{\tau_1},s_{\tau_1})v^{\pi^1,\pi^2}_2(s',I(\tau_1))\Bigg]\Bigg\}\Bigg]\Bigg]\Bigg]
\end{aligned}
\\&=\mathbb{E}\left[R(s_{0},a_{0})+\mathbb{E}\left[\ldots+\gamma^{l-1}\mathbb{E}\left[ R(s_{\tau_1-1},a_{\tau_1-1})+\gamma\left[T v^{\pi^1,\tilde{\pi}^2}_2\right](s_{\tau_1},I(\tau_1))\right]\right]\right]=v^{\pi^1,\tilde{\pi}^2}_2(s,I_0)),
\end{align*}
where the first inequality is true by assumption on $\mathcal{M}$. This is a contradiction since $\pi'^2$ is an optimal policy for Player 2. Using analogous reasoning, we deduce the same result for $\tau'_k<\tau_k$ after which deduce the result. Moreover, by invoking the same reasoning, we can conclude that it must be the case that $(\tau_0,\tau_1,\ldots,\tau_{k-1},\tau_k,\tau_{k+1},\ldots,)$ are the optimal switching times.

\end{proof}

\end{document}